\newif\ifjmlr
\NewDocumentCommand{\grad}{e{_^}}{%
  \mathop{}\!% \mathop for good spacing before \nabla
  \nabla
  \IfValueT{#1}{_{\!#1}}% tuck in the subscript
  \IfValueT{#2}{^{#2}}% possible superscript
}
\newcommand{\R}{\bbR}
\newcommand{\e}{\varepsilon}
\DeclareMathOperator{\Expectation}{\mathbb{E}}
\newcommand{\Expect}{\mathbb E}
\DeclareMathOperator{\tr}{tr}
\DeclareMathOperator{\erfc}{erfc}
\DeclareMathOperator{\ReLU}{ReLU}
\DeclareMathOperator\Prob{\mathbb{P}}
\DeclareMathOperator\Uni{Unif}
\DeclareMathOperator\TVar{V}
\DeclareMathOperator\Var{Var}
\DeclareMathOperator{\ProbMeas}{\calP}
\newcommand{\Range}{\mathrm{Range}}
\newcommand{\Rank}{\mathrm{rank}}
\newcommand{\oti}{\mathop{\otimes}}
\newcommand{\din}{d_{\mathrm{in}}}
\newcommand\cadlag{c\`adl\`ag\xspace}
\newcommand{\invunit}{\rotatebox[origin=c]{180}{$\mathrm e$}}
\newtheorem{theorem}{Theorem}[section]
\newtheorem{corollary}[theorem]{Corollary}
\newtheorem{lemma}[theorem]{Lemma}
\newtheorem{proposition}[theorem]{Proposition}
\newtheorem{assumption}[theorem]{Assumption}
\newtheorem{formaltheorem}{Formal Theorem}
\theoremstyle{definition}
\newtheorem{definition}[theorem]{Definition}
\newenvironment{remark}
{\pushQED{\qed}\remarkx} % Alternatives: \P (paragraph end symbol) $\triangleleft$ $\triangle$ \textpilcrow
{\popQED\endremarkx}
\newenvironment{example}
{\pushQED{\qed}\examplex} % Alternatives: \P (paragraph end symbol) $\triangleleft$ $\triangle$ \textpilcrow
{\popQED\endexamplex}
\DeclarePairedDelimiter{\abs}{\lvert}{\rvert}
\DeclarePairedDelimiter{\bra}{(}{)}
\DeclarePairedDelimiter{\pra}{[}{]}
\DeclarePairedDelimiter{\set}{\{}{\}}
\DeclareMathAlphabet{\mathup}{OT1}{\familydefault}{m}{n}
\newcommand{\dx}[1]{\mathop{}\!\mathup{d} #1}
\newcommand{\dd}{\dx} % constant d for differential
\newlength{\leftstackrelawd}
\newlength{\leftstackrelbwd}
\def\leftstackrel#1#2{\settowidth{\leftstackrelawd}%
{${{}^{#1}}$}\settowidth{\leftstackrelbwd}{$#2$}%
\addtolength{\leftstackrelawd}{-\leftstackrelbwd}%
\leavevmode\ifthenelse{\lengthtest{\leftstackrelawd>0pt}}%
{\kern-.5\leftstackrelawd}{}\mathrel{\mathop{#2}\limits^{#1}}}
  \def\calC{{\mathcal C}}
  \def\calF{{\mathcal F}}
  \def\calL{{\mathcal L}}
 \def\calN{{\mathcal N}} 
\def\calP{{\mathcal P}}
\def\rmM{{\mathrm M}}
  \def\sfF{{\mathsf F}}
 \def\sfZ{{\mathsf Z}} 
 \def\bbE{{\mathbb E}} \def\bbF{{\mathbb F}}
 \def\bbN{{\mathbb N}} 
\def\bbP{{\mathbb P}}  \def\bbR{{\mathbb R}}
 \def\bbZ{{\mathbb Z}} 
\newcommand{\M}{\rmM}
\newcommand{\N}{\bbN}
\newcommand{\Reg}{\mathrm{Reg}}
\begin{document}
\ifjmlr
\author{\name Anna Shalova \email a.shalova@tue.nl\\ 
	\addr 	
	\AND Andr\'e Schlichting \email a.schlichting@uni-muenster.de\\
	\addr 
	\AND Mark Peletier \email m.a.peletier@tue.nl\\
	\addr }
	\editor{}
\else
\author{Anna Shalova$^{1}$\thanks{\href{mailto:a.shalova@tue.nl}{a.shalova@tue.nl}} \quad Andr\'e Schlichting$^{2}$ \quad Mark Peletier$^{1}$ \vspace{0.3em} \\
{\normalsize $^1$Department of Mathematics and Computer Science,\\
Eindhoven University of Technology} \\
{\normalsize $^2$Institute for Analysis and Numerics, University of Münster}
}

%\author{Anna Shalova \and Andr\'e Schlichting \and Mark Peletier}
\fi
\title{ Singular-limit analysis of gradient descent with noise injection
}
\date{\empty}
\maketitle

\def\ourkeywords{Noise injection, stochastic optimization, stochastic gradient descent, zero-loss set, overparametrization, regularization, dropout.}

\begin{abstract}
We study the limiting dynamics of a large class of noisy gradient descent systems in the overparameterized regime. In this regime the \emph{zero-loss set} of global minimizers of the loss is large, and when initialized in a neighbourhood of this zero-loss set a noisy gradient descent algorithm slowly evolves along this set. In some cases this slow evolution has been related to better generalisation properties. We characterize this evolution for the broad class of noisy gradient descent systems in the limit of small step size. 

Our results show that the structure of the noise affects not just the form of the limiting process, but also the time scale at which the evolution takes place. We apply the theory to Dropout, label noise and classical SGD (minibatching) noise, and show that these evolve on different two time scales. Classical SGD even yields a trivial evolution on both time scales, implying that additional noise is required for regularization.

The results are inspired by the training of neural networks, but the theorems apply to noisy gradient descent of any loss that has a non-trivial zero-loss set. 

\ifjmlr
\else
\par\medskip
\noindent\textbf{Keywords and phrases. }
\ourkeywords
\par\medskip\fi
\end{abstract}

\ifjmlr
\begin{keywords}
\ourkeywords
\end{keywords}
\fi

\tableofcontents

\section{Introduction}
\label{s:intro}

\subsection{Noise injection}
\label{ss:noise-injection}

Modern machine learning and especially the training of neural networks rely on  gradient descent and its many variants.
Many of those variants introduce  \emph{noise} (randomness) into the algorithm, for instance through minibatching, Dropout, or random corruption of the labels. This noise may be motivated by practical considerations, as in the case of minibatching, but in many cases it is observed that the noise also improves the quality of the resulting parameter point. In particular, the noise often leads to parameter points that generalize better. It would be of great practical value to understand this \emph{implicit bias} of noisy algorithms, and this currently is an active area of research. 

In a seminal paper, Li, Wang, and Arora~\cite{LiWangArora22} focused on a specific class of noisy gradient-descent algorithms, and showed how the particular form of the noise leads to improved generalisation. They focused on overparameterized systems, in which the \emph{zero-loss set} $\Gamma:= \{w: L(w)=0\}$ is a high-dimensional manifold. 
Their key observation is that gradient descent  behaves differently with and without noise: in a neighbourhood of $\Gamma$,  deterministic gradient descent converges to $\Gamma$ and then stops, while noisy gradient descent may continue to evolve after reaching $\Gamma$. Figure~\ref{fig:ex1-intro} below shows an example of this. Li, Wang, and Arora characterized this continuing evolution in a small-step-size limit, and showed its relation to generalisation. 

The aim of this paper is to generalize the observations of~\cite{LiWangArora22} to a much wider class of systems, and characterize the behaviour of these systems when they evolve in the neighbourhood of the zero-loss set $\Gamma$. This generalisation was inspired by the case of Dropout, but the resulting setup covers many more types of noise. This generality also allows us to identify a hierarchy in scaling of different types of noise, such as minibatch noise, label noise, Dropout, and others. As it turns out, the case studied in~\cite{LiWangArora22} is not the first but the second level in this hierarchy, with for instance Dropout occupying the first and more dominant level.

While this work is inspired by the training of neural networks, the main characterizations do not use any neural-network structure, and therefore apply to noisy gradient descent in other application areas as well.

\bigskip
We now describe the main results of this paper, and we start by fixing some notation. \emph{Gradient descent} for a loss function $L: \bbR^m \to [0, \infty)$ with parameter dimension $m\in \bbN$ and learning rate $\alpha>0$ is the iterative algorithm
\[
w_{k+1} = w_k - \alpha \grad_w L(w_k), \qquad\text{for } k\geq0 \,.
\]

\medskip
% The class of systems that we consider in this paper 
In this paper we consider a class of random perturbations of gradient descent, that we call for short \emph{noisy gradient descent}:
\begin{subequations}
\label{eq:NGD-intro}
\begin{alignat}2
\label{eq:intro1}
w_{k+1} &= w_k - \alpha \grad_w \hat{L}(w_k, \eta_k),\qquad && \text{for }  k\geq0\, .
\end{alignat}
Here $\hat L:\R^m\times \R^d\to\R$ is an extension of $L$, each $\eta_k$ is a random vector in $\R^d$ satisfying
\begin{alignat}2
	\Expectation \eta_{k, i} &=0 \quad\text{and}\quad \Var \eta_{k,i} = \sigma^2, \qquad &&\text{for } i = 1,\dots , d \,,
	\label{eq:intro1b}
\end{alignat}
and $\eta_{k,i}$ is independent from $\eta_{\ell,j}$ for $(k,i)\not=(\ell,j)$.
The connection between $\hat L$ and $L$ is enforced by the following \emph{consistency requirement} at $\eta=0$:
\begin{equation}
\hat L(w,0) = L(w) \qquad \text{for all }w\in \R^m \, .
\label{eq:consistency}
\end{equation}
\end{subequations}
Apart from~\eqref{eq:consistency} we only require a certain regularity of the function $(w,\eta) \mapsto \hat L(w,\eta)$ (for the full setup, see Section~\ref{sec:setting}).

As mentioned above, many common forms of noise injection can be written in the form~\eqref{eq:NGD-intro}. In the training of neural networks, the  most common form of noise arises from evaluating the loss on random minibatches; we discuss this in Section~\ref{sec:examples:mini}. Dropout~\cite{HintonSrivastavaKrizhevskySutskeverSalakhutdinov12,srivastava2014dropout} is another example of~\eqref{eq:NGD-intro}, both in the more common `Bernoulli' form and in the `Gaussian' form. Other forms are `label noise'~\cite{BlancGuptaValiantValiant20,DamianMaLee2021,LiWangArora22} and `stochastic Langevin gradient descent'~\cite{RaginskyRakhlinTelgarsky17,mei2018mean1}. We discuss all these in Section~\ref{sec:examples}. 

\begin{figure}[ht]


	\newcommand\scale{0.7}
	% Grab height from rigthmost box
	\sbox0{\includegraphics[scale=\scale]%{Figures/gd-ngd-[deg-axis-3}}
    {Figures/gd-ngd-nondeg-a-axis-3}}
	\centering
	\subcaptionbox{Level curves of the function $L$}
		{\adjustbox{valign=T,set depth=\ht0}{\includegraphics[scale=\scale]%{Figures/gd-ngd-nondeg-axis-1}}}
        {Figures/gd-ngd-nondeg-a-axis-1}}}
	\qquad
	\subcaptionbox{Gradient descent}	
		{\adjustbox{valign=T,set depth=\ht0}{\includegraphics[scale=\scale]%{Figures/gd-ngd-nondeg-axis-2}}}
        {Figures/gd-ngd-nondeg-a-axis-2}}}
	\qquad
	\subcaptionbox{Noisy gradient descent%
		\label{fig:ex1-intro:sub:NGD-nondeg}}{\box0}
	\caption{Noisy gradient descent may continue to move after reaching the zero-loss set $\Gamma$. The left-hand panel shows the level curves of a function $L:\R^2\to[0,\infty)$, with the zero-level set $\Gamma$ marked in red. The middle panel shows a gradient-descent evolution, starting at the top, and converging to $\Gamma$. The right-hand panel shows an evolution of the noisy gradient descent~\eqref{eq:NGD-intro} with $\hat L(w,\eta) := L(w+\eta)$. See Appendix~\ref{app:details-num-sim} for more details. }
	\label{fig:ex1-intro}
\end{figure}

\subsection{The non-degenerate case}

As mentioned above, the focus of this paper lies on the behaviour of the noisy gradient descent~\eqref{eq:NGD-intro} once it has entered a neighbourhood of the zero-loss set~$\Gamma$. As illustrated by Figure~\ref{fig:ex1-intro:sub:NGD-nondeg}, the algorithm typically continues to evolve after reaching $\Gamma$, and in this paper we aim to characterize this continued evolution.

We motivate the main results by some heuristic calculations. 
Let a sequence $(w_k,\eta_k)_{k=0}^\infty$ be given by~\eqref{eq:NGD-intro}, and assume for simplicity that dynamics of the parameters $w_k$ is slow compared to  the noise variables $\eta_k$. 
It then becomes reasonable to average over the fast variables $\eta_k$:
\begin{align*}
w_{k+1} &\approx w_k - \alpha \bbE_\eta \bigl[ \grad_w \hat{L}(w_k, \eta_k) \bigr].
\end{align*}
If the variance $\sigma^2$ of the noise is small, then by  Taylor expansion we find
\begin{align}
w_{k+1} -w_k
&\approx - \alpha \,\bbE_\eta \bigl[ \grad_w\hat{L}(w_k,0)\bigr] 
  -\alpha \,\bbE_\eta \bigl[\partial_\eta\grad_{w}\hat{L}(w_k,0)[\eta_k]\bigr] \notag \\
  &\qquad{}
  - \frac\alpha 2\,\bbE_\eta\bigl[ \partial^2_{\eta}\grad_{w}\hat{L}(w_k, 0)[\eta_k, \eta_k]\bigr] \,.
\label{eq:intro-approx-nondeg}
\end{align}
(See Section~\ref{sec:notation} for our use of $\partial$.)
Recall that $\hat{L}(w, 0) = L(w)$, so that the first term in~\eqref{eq:intro-approx-nondeg} is the gradient of the loss without noise.
From the property~\eqref{eq:intro1b} we have $\bbE[\eta_k] = 0$ and $\bbE_\eta\bigl[ \eta_k \oti \eta_k \bigr] = \sigma^2I$, leading to the resulting dynamics
\begin{align}
	\label{eq:intro-approx-nondeg2}
w_{k+1} - w_k \approx -\alpha \grad_w L(w_k) - \frac\alpha2 \sigma^2\grad_{w}\Delta_{\eta}\hat{L}(w_k, 0) + o(\sigma^2) 
\qquad \text{as }\sigma^2\to0 \,.
\end{align}

From the equation~\eqref{eq:intro-approx-nondeg2} we can recognize two phases in the evolution of $w_k$. In the first, faster phase, $w_k$ has not yet reached $\Gamma$, and the first term $-\alpha \nabla L$ on the right-hand side dominates. This leads to an evolution at time scale $1/\alpha$. 

When $w_k$ reaches $\Gamma$, however, the  term $-\alpha \nabla L$ becomes small, since $\nabla L=0$ on $\Gamma$, and the second term on the right-hand side of~\eqref{eq:intro-approx-nondeg2} becomes important. This term leads to a slower evolution, at time scale $1/\alpha\sigma^2$, which is driven by the combination of the two terms on the right-hand side in~\eqref{eq:intro-approx-nondeg2}. 

\bigskip

These observartions are made rigorous in the following non-rigorous formulation of our first main result, Theorem~\ref{th:main}. This theorem states a convergence result after accelerating the evolution by a factor $1/\alpha\sigma^2$, thus capturing the second, slower phase. 
\begin{formaltheorem}[Non-degenerate case]
	\label{fth:main1}
Assume $\alpha_n\to0$ and $\sigma_n\to0$. Set 
\[
W_n(\alpha_n \sigma_n^2 k) := w_k\,.
%  \qquad \text{and some interpolation of }W_n.
\]
Then the sequence $W_n$ converges to a limit $W = (W(t))_{t>0}$, that satisfies the constrained gradient flow
\begin{equation}
	\label{eq:fthA:GF}
\partial_t W(t) = - P_\Gamma \grad_w \Reg(W(t)), \qquad \text{with $W(t)\in \Gamma$ for $t>0$}.
\end{equation}
Here $P_\Gamma$ is the orthogonal projection onto the  tangent plane of $\Gamma$ at $W(t)$, and 
\[
\Reg(w) := 
\frac12\Delta_\eta \hat L(w,0)= \frac12\sum_{i=1}^d \frac{\partial^2}{\partial \eta_i^2} \hat L(w,\eta) \Big|_{\eta=0}
\qquad \text{for }w\in \Gamma.
\]
\end{formaltheorem}

\begin{figure}[h]
	{\captionsetup{justification=centering,subrefformat=parens}
	\newcommand\scale{0.65}
	\sbox0{\includegraphics[scale=\scale]{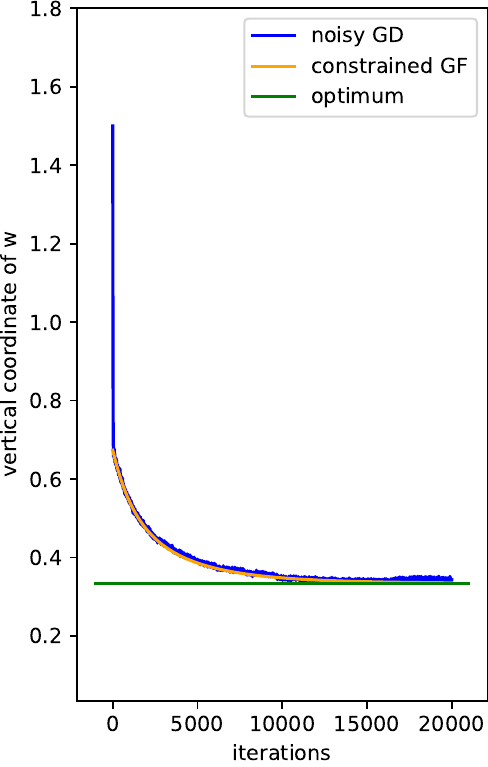}}
	\centering
	\subcaptionbox{Noisy gradient descent as in Fig.~\ref{fig:ex1-intro:sub:NGD-nondeg}%
		\label{fig:ex2-intro:in-contour}}
		{\adjustbox{valign=T, set depth=\ht0}
			{\includegraphics[scale=\scale]{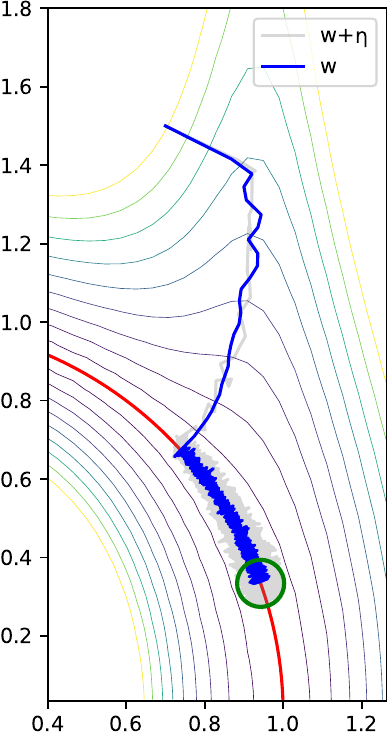}}}
	\qquad
	\subcaptionbox{Vertical coordinate of noisy and  constrained gradient descent%
		\label{fig:ex2-intro:short-time}}
		{\adjustbox{valign=T, set depth=\ht0}
			{\includegraphics[scale=\scale]{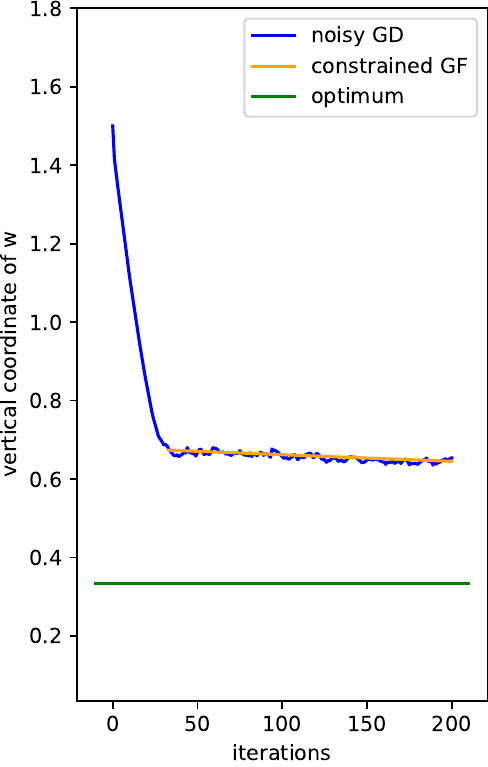}}}
	\qquad
	\subcaptionbox{Same as \subref{fig:ex2-intro:short-time}, but over longer time%
		\label{fig:ex2-intro:long-time}}
		{\adjustbox{valign=T, set depth=\ht0}
			{\includegraphics[scale=\scale]{Figures/ngd-cgd-axis-3.pdf}}}
	}
	\caption{This figure shows the evolution of Fig.~\ref{fig:ex1-intro:sub:NGD-nondeg} in more detail, and compares it to the solution of the constrained gradient flow. Panel~\subref{fig:ex2-intro:in-contour} is the same as Fig.~\ref{fig:ex1-intro:sub:NGD-nondeg}; panels~\subref{fig:ex2-intro:short-time} and~\subref{fig:ex2-intro:long-time} show the vertical coordinate over different periods of time (iterations). Panels~\subref{fig:ex2-intro:short-time} and~\subref{fig:ex2-intro:long-time} clearly illustrate the difference in time scales between the fast evolution towards $\Gamma$ and the much slower evolution along $\Gamma$.  The green circle and lines mark the minimizer of $\Delta_\eta \hat L(w,0)$, which in this case equals $ \Delta_w L(w)$. 
	The orange curves are the solution of the constrained gradient flow~\eqref{eq:fthA:GF}, started from the first point at which $w_k$ came close to $\Gamma$. More details are given in Appendix~\ref{app:details-num-sim}.}
	\label{fig:ex2-intro}
\end{figure}

\begin{remark}[Projection onto the tangent plane]
	The appearance of the projector $P_\Gamma$ in~\eqref{eq:fthA:GF} can be understood in two complementary ways. To start with, if $W$ remains in $\Gamma$, then the right-hand side in~\eqref{eq:fthA:GF} has to be a tangent vector; since $\grad_w \Delta_\eta \hat L(w,0)$ has no reason to be tangent at $w$, the projection is necessary. 

	The formal calculation~\eqref{eq:intro-approx-nondeg2} also shows how this projection is generated in the evolution: if the increment  $-(\alpha\sigma^2/2)\grad_w \Delta_\eta \hat L(w_k,0)$ pushes $w_{k+1}$ away from  $\Gamma$, then in the next iteration the first term $-\alpha \grad_w L(w_{k+1})$ will not be zero, but point `back to $\Gamma$'. Since the prefactors of the first and second term on the right-hand side of~\eqref{eq:intro-approx-nondeg2} differ by a factor $\sigma^2\ll1$, the first term is asymptotically much stronger than the second, and this generates in the limit the projection.
\end{remark}

\begin{example}[The example in Figures~\ref{fig:ex1-intro} and~\ref{fig:ex2-intro}]
	In the examples of Figures~\ref{fig:ex1-intro}-\ref{fig:ex2-intro} we have chosen $\hat L(w,\eta) := L(w+\eta)$, and therefore the function $\Reg$ in Theorem~\ref{fth:main1} is
	\[
	\Reg(w) := \frac12 \Delta_\eta L(w+\eta )\Big|_{\eta=0} = \Delta_w L(w).	
	\]
	The gradient flow~\eqref{eq:fthA:GF} of $\Reg$ along $\Gamma$ therefore evolves towards points on $\Gamma$ where $\Delta_w L$ is smaller, i.e.\ where the `valley' around $\Gamma$ is wider. In Figure~\ref{fig:ex2-intro} one can recognize the widening of the valley in the spreading of the level curves of $L$, and indeed the evolution converges to the minimizer of $\Delta_w L$ on $\Gamma$, indicated by the green circle and line.
\end{example}

\begin{example}[Bernoulli DropConnect]
\emph{Dropout} is a particular form of noise injection that consists of `dropping out' parameters or neurons randomly at each iteration. In Section~\ref{sec:examples} we discuss various types of Dropout; here we give one example, the case of \emph{DropConnect} with Bernoulli random variables. 

Given a loss function $L$ on $\R^d$, we define $\hat L: \R^d\times\R^d\to\R$ by
\[
	\hat L(w,\eta) := L(w{\odot} (1+\eta))\, ,
\]
where $\odot$ is coordinate-wise multiplication. We choose a dropout probability $p\in [0,1]$ and  we let each filter variable $\eta_i$ have the distribution
\begin{equation*}
% \label{eqdef:Bernoulli-filters}
\eta_{i} = \begin{cases}
        -1 & \text{ w.p. } p\,,\\
        \dfrac{p}{1-p} & \text{ w.p. }  1-p\,.\\
\end{cases}
\end{equation*}
With this choice, $\eta_i=-1$ corresponds to `dropping' or `killing' the parameter $w_i$, and the other value $\eta_i = p/(1-p)$ corresponds to  a rescaling such that in expectation we have $\eta=0$. This choice satisfies~\eqref{eq:intro1b} with $\sigma^2  =p/(1-p)$, and the limit $\sigma\to0$ is the one in which a vanishingly small number of  parameters are dropped.

For this case Theorem~\ref{fth:main1} applies (see Proposition~\ref{cor:BernoulliDropConnect}) and gives the formula for the corresponding function $\Reg$ as 
\begin{equation*}
	\Reg(w) := w\cdot  \nabla L(w) +\sum_{j=1}^m \bra[\big]{L(w\odot \invunit_j) - L(w)}.
\end{equation*}
Here $\invunit_j$ is the inverted unit vector in $\R^m$,
\[
\invunit_j := (1,\dots, 1, \underset {j^{\mathrm{th}}\text{ position}}{0,} 1, \dots, 1 )	 = 1 - \mathrm e_j.
\]
This function can be interpreted as a non-local approximation of the second derivative of $L$ (see Remark~\ref{rem:Structure-BDO-Reg}). The non-locality is a consequence of the large modification of individual coordinates $w_i$ by multiplication by $1+\eta_i$, which can be zero (albeit with small probability~$p$). We discuss DropConnect in more detail in Section~\ref{ss:DropConnect}.%, and the more common Dropout regularization in Section~\ref{ss:examples:ClassicalDropout}.
\end{example}
\begin{remark}
    Note that \emph{DropConnect} noise is different from the classical \emph{Dropout training} of neural networks. In contrast to zeroing single parameters Dropout zeroes output of the neurons.  Theorem \ref{fth:main1} is also applicable Dropout noise, we study it for the cases of neural networks and overparametrized linear models in Section \ref{ss:examples:ClassicalDropout}.
\end{remark}

\subsection{The degenerate case}

In at least three important examples of noise injection, namely minibatching, label noise, and stochastic gradient Langevin descent, the procedure above applies, but the limiting constrained gradient flow is trivial: $\partial_t W = 0$. This is because for those examples it happens that  $ \Delta_\eta \hat L(w,\eta)$ is constant in $w$ and $\eta$, and for this reason we call these types of noise \emph{degenerate}.

To determine the behaviour for these degenerate forms of noise we return to the formal calculation~\eqref{eq:intro-approx-nondeg}.  Instead of taking the expectation we write, assuming $\grad_\eta^2 \hat L\equiv0$ for simplicity,
\begin{align}
	w_{k+1} -w_k &\approx -\alpha \nabla_w\hat{L}(w_k,0) 
		-\alpha \partial_\eta\grad_w\hat{L}(w_k,0)[\eta_k] 
		- \frac\alpha2 \partial^2_{\eta}\grad_{w}\hat{L}(w_k, \eta^*)[\eta_k, \eta_k] \notag \\
	&= - \alpha \grad_w L(w_k) - \alpha \partial_{\eta}\grad_w\hat{L}(w_k,0)[\eta_k]\,.
	\label{eq:intro:approx:deg1}
\end{align}
Take for instance the case that the $\eta_{k,i}$ are i.i.d.\ centered normal random variables with variance~$\sigma^2$. Then the second term on the right-hand side is again a centered normal random variable, with variance that scales as $\alpha^2\sigma^2$. 
As a result, we expect that the noise has a non-trivial contribution when the accumulated quadratic variation is of order one, namely after $k \sim 1/\alpha^2\sigma^2$ steps. 

\begin{figure}[htp]
	\centering
	{\captionsetup{justification=centering,subrefformat=parens}
	\newcommand\scale{0.7}
	\sbox0{\includegraphics[scale=\scale]{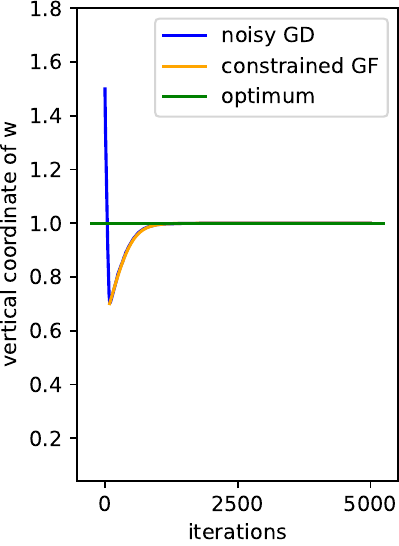}}
	\begin{varwidth}{\textwidth}
	\subcaptionbox{Non-degenerate noisy GD, $\hat L(w,\eta) = L(w) + \frac12 a(w)\eta^2$%
		\label{fig:nondeg-deg:nondeg:in-contour}}
        {\adjustbox{valign=T, set depth=\ht0}
			{\includegraphics[scale=\scale]{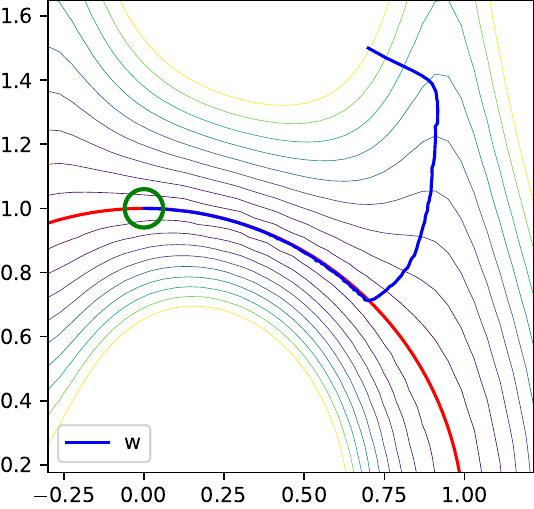}}}
	\qquad
	\subcaptionbox{Vertical coordinate of noisy and  constrained GD%
		\label{fig:nondeg-deg:nondeg:short-time}}
        {\adjustbox{valign=T, set depth=\ht0}
			{\includegraphics[scale=\scale]{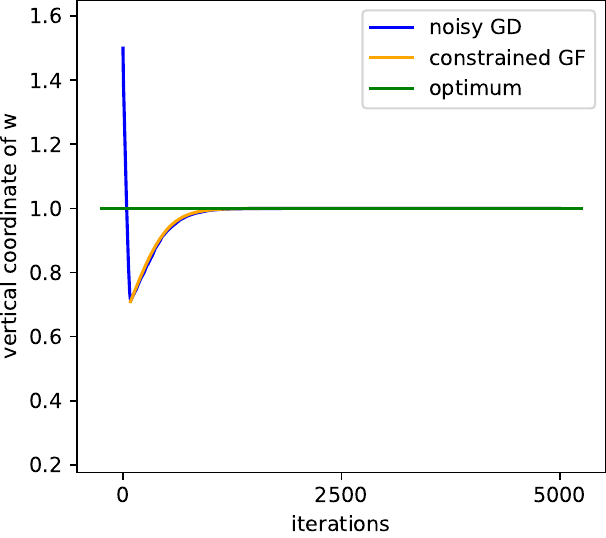}}}
	\\[6\jot]
	\subcaptionbox{Degenerate noisy GD, $\hat L(w,\eta) = L(w) + \frac12 |w|^2 \eta$%
	\label{fig:nondeg-deg:deg:in-contour}}
        {\adjustbox{valign=T, set depth=\ht0}
		{\includegraphics[scale=\scale]{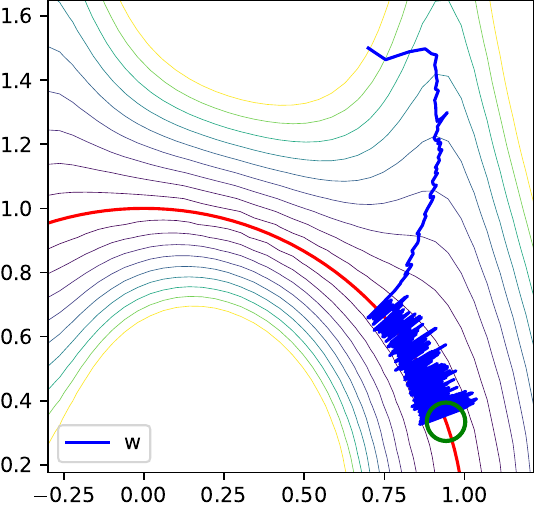}}}
	\qquad
	\subcaptionbox{Vertical coordinate of noisy and  constrained GD%
		\label{fig:nondeg-deg:deg:short-time}}
		{\adjustbox{valign=T, set depth=\ht0}
   {\includegraphics[scale=\scale]{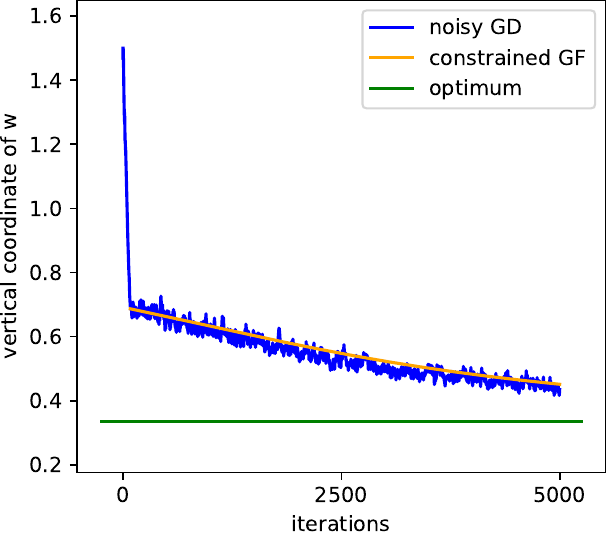}}}
	\end{varwidth}
	}
	\caption{Non-degenerate (top row) and degenerate evolution (bottom row); note how the evolution is much faster in the non-degenerate top row than in the degenerate bottom row. In both cases $\alpha = 0.1$ and $\eta$ is a scalar centered normal random variable with variance $\sigma^2=0.01$. In the top row,  $\hat L(w,\eta) = L(w) + \frac12 a(w)\eta^2$ with $a(w) = 1-0.7 \cos 2w_1$. For this case Theorem~\ref{fth:main1} applies and gives the regularizer as $\frac12 \Delta_\eta \hat L(w,0) = 2a(w)$. In the bottom row $\hat L(w,\eta) = L(w) + \frac12 |w|^2 \eta$, and Theorem~\ref{fth:main2} applies, with the limiting evolution~\eqref{eq:fthB:SDE} reduing to the deterministic constrained gradient flow  $\partial_t W = -P_\Gamma \nabla_w\frac14 \log \Delta_w L(W)$.  The green circles and lines mark the minimizers of the respective regularizers.
		 More details are given in Appendix~\ref{app:details-num-sim}.}
	\label{fig:nondeg-deg}
\end{figure}
\bigskip
These remarks motivate the following formal version of the second main result, Theorem~\ref{th:main:deg2}. For this theorem we assume that the function $\hat L$ has the more specific form
\begin{equation}
	\label{eq:Ldeg-intro}
	\hat{L}(w, \eta) = L(w) + f(w)\cdot\eta +  \tfrac12  H(w) : (\eta \otimes \eta)  + g(\eta),
\end{equation}
for certain smooth maps $f$, $H$, and $g$.  We assume that $g(0)=0$ and that each diagonal element $H_{ii}$ vanishes, so that $\Delta_\eta \hat L(w,0) = \sum_{i=1}^d H_{ii}(w) = 0$. It follows that the non-degenerate evolution~\eqref{eq:fthA:GF} is trivial, and by the argument above we expect the evolution to take place at the slower time scale $1/\alpha^2\sigma^2$
(see Definition~\ref{def:deg:noiseLoss} for details).
\begin{formaltheorem}[Degenerate case]
	\label{fth:main2}
	Let~$\hat L$ be a degenerate loss function as described above.
	For $\alpha_n\to0$ and $\sigma_n\to\sigma_0\geq 0$, let $(w_k^n)_{k\geq }$ be the noisy gradient descent~\eqref{eq:NGD-intro} and set
	
	\[
	W_n(\alpha_n^2 \sigma_n^2 k) := w_k\,.
	%  \qquad \text{and some interpolation of }W_n,
	\]
	Then the sequence $W_n$ converges to a limit $W = (W(t))_{t>0}$ that satisfies the constrained stochastic differential equation
	\begin{equation}
		\label{eq:fthB:SDE}
	\dx W(t) = P_\Gamma \beta(W(t))\dx B_t
	 + F(W(t))\,{:}\,\beta(W(t))\beta(W(t))^\top \dx t, \qquad \text{with $W(t)\in \Gamma$ for $t>0$}.
	\end{equation}
	Here $\beta$ and $F$ are given in terms of $f$ and $H$, and $B$ is a multidimensional standard Brownian motion.
\end{formaltheorem}

\begin{remark}[Time scale]
	The time scale is now $1/\alpha^2\sigma^2$, a factor $1/\alpha$ longer than in the non-degenerate case. Figure~\ref{fig:nondeg-deg} illustrates this difference in speed with two functions $\hat L$ that are constructed from the same $L$ as in Figures~\ref{fig:ex1-intro} and~\ref{fig:ex2-intro}.
\end{remark}

\begin{remark}[Limiting equation]
	In contrast to the non-degenerate case, the limiting equation~\eqref{eq:fthB:SDE} is a constrained stochastic differential equation (SDE), not a constrained ODE. This difference arises from the fact that the second term on the right-hand side in~\eqref{eq:intro:approx:deg1} is now a mean-zero random variable of variance $O(\alpha^2\sigma^2)$, and the time scaling is such that we observe a sum of $O(1/\alpha^2\sigma^2)$ of these, leading to a random variable of variance $O(1)$.
\end{remark}

\begin{example}[Minibatch noise]
In Section~\ref{sec:examples:mini} we apply Theorem~\ref{fth:main2} to the case of minibatch noise. The resulting evolution is trivial \emph{even on the time scale $1/\alpha_n^2\sigma_n^2$}, meaning that the evolution on $\Gamma$ induced by minibatch noise is even slower than this. This also is consistent with the observation by Wojtowytsch~\cite{Wojtowytsch23,Wojtowytsch24} that for small step sizes minibatch SGD may collapse onto $\Gamma$ and become completely stationary.
\end{example}

\begin{example}[Label noise]
The result of~\cite{LiWangArora22} also is a case of degenerate noise, and we revisit it in Section~\ref{sec:examples:label}. For this case the limiting constrained SDE~\eqref{eq:fthB:SDE} is in fact a deterministic equation (i.e.\ the first term vanishes), in the form of a constrained gradient flow driven by the function $\Reg(w) := (2N)^{-1}\Delta_w L(w)$.
\end{example}

See Section~\ref{sec:examples} for more examples.

\subsection{Contributions}

The main contributions of the paper are:
\begin{enumerate}
	\item We introduce a specific class of `gradient-descent algorithms with noise injection' that unifies a number of existing noise-injection schemes, such as minibatch SGD, Bernoulli and Gaussian Dropout, Bernoulli and Gaussian DropConnect, label noise, stochastic gradient Langevin descent, `anti-correlated perturbed gradient descent', and others.
	\item For this class of noise-injected gradient-descent schemes we give an explicit and rigorous characterization of the regularization induced by the noise in the non-degenerate (Theorem~\ref{th:main}) and degenerate cases (Theorem~\ref{th:main:deg2}), and we identify the corresponding time scales.
	\item In particular, we prove convergence of gradient descent with Dropout and DropConnect noise to manifold-constrained gradient flows (Sections~\ref{ss:DropConnect} and~\ref{ss:examples:ClassicalDropout}). 
	\item We demonstrate that a version of minibatch SGD shows no regularization effect on the two fastest time scales (Section \ref{sec:examples:mini}) and does not change the form of the regularizer induced by Dropout or label noise (Section~\ref{sec:examples:label} and Remark~\ref{rem:combining-Dropout-with-Minibatching}).
\end{enumerate}

\paragraph*{Acknowledgements}
Mark Peletier and André Schlichting gratefully acknowledge several interesting discussions with Anton Arnold, Vlado Menkowski, Gijs Peletier, and Frank Redig. Mark Peletier and Anna Shalova are supported by the Dutch Research Council (NWO), in the framework of the program ‘Unraveling Neural Networks with Structure-Preserving Computing’ (file number OCENW.GROOT.2019.044).
André Schlichting is supported by the Deutsche Forschungsgemeinschaft (DFG, German Research Foundation) under Germany's Excellence Strategy EXC 2044 --390685587, Mathematics M\"unster: Dynamics--Geometry--Structure.

\section{Related Work}

\paragraph{Convergence results to stationary points.}
	In the asymptotic analysis of stochastic gradient schemes, such as for instance~\cite{Tadic2015,FehrmanGessJentzen2020}, a large body of literature considers also the question of convergence of the iterates $w_k$ to a single point as $k\to\infty$.  For instance Tadic~\cite[Section 3]{Tadic2015} proves such a statement for \emph{stochastic gradient algorithms with Markovian dynamics}, which takes a very similar form as the noisy gradient descent~\eqref{eq:intro1}. However, the main difference is that in~\cite{Tadic2015} the learning rate $\alpha$ is chosen to be $k$-dependent and to tend to zero as $k$ tends to infinity, whereas we consider a fixed learning rate. 
    
    More recent works \cite{Wojtowytsch23,DereichKassing2024} prove convergence of similar systems to a single point under a {\L}ojasiewicz condition on the loss manifold in comparison to the local quadratic behavior, which we assume below (see Assumption~\ref{ass:LossMfd}).

    These convergence results are consistent with the characterizations that we give in Theorems~\ref{fth:main1} and~\ref{fth:main2}, since vanishing learning rates may generate different types of behaviour from the fixed learning rates that we consider.

\paragraph{Noise injection: minibatch SGD.}
The most common source of noise in training algorithms results from the use of random minibatches in gradient descent, often simply called `stochastic gradient descent' (SGD). Experimentally this noise is known to lead to better generalisation, and various studies have focused on determining the dependence of this effect on parameters such as the batch size~\cite{KeskarMudigereNocedalSmelyanskiyTang16, JastrzebskiKentonArpitBallasFischerBengioStorkey17, WuMaE18,Wojtowytsch23} and learning rate~\cite{JastrzebskiKentonArpitBallasFischerBengioStorkey17, HofferHubaraSoudry17,WuMaE18,Wojtowytsch23,Andriushchenko2022}. The particular structure of minibatch noise has been shown to create a `collapse' effect~\cite{Wojtowytsch23,Wojtowytsch24} when the learning rate is small, and the same noise structure makes minibatch SGD incapable of selecting narrow minima~\cite{WuWangSu22}. In this paper we also apply the techniques to a modification of minibatch SGD (Section~\ref{sec:examples:mini}). 

\paragraph{Noise injection: Dropout.}
The most common form of Dropout is  `Bernoulli Dropout', i.e.\ randomly `dropping' neurons, input nodes, or weights with probability $p$~\cite{HintonSrivastavaKrizhevskySutskeverSalakhutdinov12,wager2013dropout,WanZeilerZhangLe-CunFergus13}, but the Gaussian Dropout in Section~\ref{ss:examples:ClassicalDropout} 
has also been reported to give good results~\cite{srivastava2014dropout,MolchanovAshukhaVetrov17}. Random networks generated by Dropout noise have the same universal-approximation properties as deterministic ones~\cite{ManitaPeletierPortegiesSandersSenen-Cerda22}, and the convergence of Dropout-Gradient-Descent algorithms has been rigorously characterized~\cite{Senen-CerdaSanders20,Senen-CerdaSanders22}.

A common explanation of the regularizing effect of Dropout centers on an interpretation of the Dropout-SGD iterates as a Monte-Carlo sampling of a deterministic augmented loss, where the loss term resembles $L^2$-type weight penalization. Various forms of this additional loss term have been derived, sometimes while assuming the Dropout noise to be `small'~\cite{BaldiSadowski13,wager2013dropout,wang2013fast,GoodfellowBengioCourville16,MianjyAroraVidal18,MianjyArora19,ZhangXu22}. Dropout-noise fluctuations have also been shown to have a significant effect in addition to this regularization-in-expectation~\cite{WeiKakadeMa20,ZhangXu22,MianjyArora20,ClaraLangerSchmidt-Hieber23TR}. Finally, very recently the effectiveness of Dropout regularisation has been connected to `weight expansion'~\cite{jin2022weight}. 

\paragraph{Other types of noise injection. }
`Label noise', the situation in which `mistakes' are present in the data set, is a challenge for the training of classification methods~\cite{FrenayVerleysen13}. One method to deal with this is to artifically perturb the labels during training; this can also be seen as a form of noise injection, and it recently has also been applied to the context of regression~\cite{BlancGuptaValiantValiant20,LiWangArora22}. {We comment on label noise in Section~\ref{sec:examples:label}.}

The `anti-correlated' noise injection of~\cite{OrvietoKerstingProskeBachLucchi22} is of the same form as we study in this paper, and we comment on this connection {in Section~\ref{sss:anti-correlated-PGD}}.

\paragraph{Evolution along the zero-loss manifold.} 
For overparameterized networks the training takes place in close proximity of the zero-loss set. The framework by Katzenberger~\cite{Katzenberger91} that we use provides a powerful tool to characterize such behaviour, and has been used extensively in the probability and other literature (see e.g.~\cite{FunakiNagai93,Funaki95,CalzolariMarchetti97,Parsons12TH,ParsonsRogers15TR} and also~\cite{FatkullinKovacicVanden-Eijnden10}). Li, Wang, and Arora used the same framework to characterize the behaviour of gradient descent with label noise in the limit of small step size~\cite{LiWangArora22}. This same point of view has also been used to analyse `local SGD'~\cite{GuLyuHuangArora23} and the impact of normalization~\cite{LiWangYu22}.

In addition to the asymptotically continuous-time approaches above, the discrete-time nature of gradient descent and stochastic gradient descent generates an additional implicit bias, which is related to the form of the loss landscape close to zero loss~\cite{AroraLiPanigrahi22,WuWangSu22}.

\paragraph{Training and flatness of minima.}
There is growing experimental and theoretical evidence that `flatter minima generalize better' (see e.g.~\cite{HochreiterSchmidhuber97,KeskarMudigereNocedalSmelyanskiyTang16,JastrzebskiKentonArpitBallasFischerBengioStorkey18,ChaudhariChoromanskaSoattoLeCunBaldassiBorgsChayesSagunZecchina19,JiangNeyshaburMobahiKrishnanBengio19}), and various properties of SGD and other training algorithms have been interpreted in this light (see e.g.~\cite{ZhangLiaoRakhlinMirandaGolowichPoggio18,JastrzebskiKentonArpitBallasFischerBengioStorkey18,WuMaE18,SmithElsenDe20,PesmePillaud-VivienFlammarion21,OrvietoKerstingProskeBachLucchi22,WuWangSu22}). The results of this paper relate to this  `flatness' hypothesis, since in many cases the regularization term in e.g.~\eqref{eqdef:th:main:limiting-dynamics} can be recognized as some  measure of `flatness'. The examples in the introduction illustrate this: for instance, the choice $\hat L(w,\eta) := L(w+\eta)$ in Figure~\ref{fig:ex2-intro} leads to the constrained gradient flow driven by $\Delta_wL(w)$, which is indeed a measure of the `flatness' of the loss landscape around~$\Gamma$, and the evolution moves towards the minimizer of this `flatness', as indicated by the green circle. In the case of label noise, it was already shown in~\cite{LiWangArora22} that the resulting regularizer also is proportional to $\Delta_w L(w)$, with the same effect. As yet another example, the regularizer that we derive for DropConnect in Section~\ref{ss:DropConnect} is closely related to the concept of `robustness' developed in~\cite{PetzkaKampAdilovaSminchisescuBoley21}, which those authors connect in turn to generalization performance (see Remark~\ref{rem:generalisation}).

\section{Notation and Preliminaries}
After setting up the notation in Section \ref{sec:notation} we introduce the notion of gradient descent algorithms with noise injection in Section \ref{sec:setting}. We discuss the assumptions on the zero-loss set and the behaviour of the system around it in Section \ref{sec:gamma}. We give the necessary background and results on the convergence of the suitable type of stochastic processes (sometimes called Katzenberger processes after \cite{Katzenberger91}) in Section \ref{sec:katz}. Finally we introduce the characterization of the limit map following the derivation from \cite{LiWangArora22} in Section \ref{sec:limit-map}. 

\subsection{Notation}\label{sec:notation}

\paragraph*{Convergence of random variables}
We work with an abstract filtered probability space $(\Omega,\calF,\bbF,\Prob)$, on which all random variables are defined. For a sequence of $\R^m$-valued random variables $(X_n)_{n\N}$ we denote with $X_n \Rightarrow X$ the convergence in distribution to some random variable $X$, that is $\Expect f(X_n) \to \Expectation f(X)$ for all $f\in C_b(\R^m)$.

We denote with $D_{\bbR^m}[0, \infty)$ the space of $m$-dimensional \cadlag processes equipped with the Skorokhod topology (see e.g.~\cite[Chapter 3]{Billingsley1968}).

For two \cadlag paths $F,M\in D_{\bbR^m}[0, \infty)$, we denote with $\TVar_t(F)$ and $[M](t)$ the total and quadratic variation (see~\eqref{eq:def:total-quadratic:variation} for their definitions in the case of pure-jump paths).

\paragraph*{Estimates and bounds}
We use the floor operation by $\lfloor\cdot \rfloor : \R \to \bbZ$ to give the largest integer smaller than the argument, that is $\lfloor x \rfloor = \max\set*{ z\in \bbZ: z\leq x}$. Indices in sums will be always integers and we use shorthand notation $\sum_{k\leq x}$ for real positive $x$ to denote the sum $\sum_{k=0}^{\lfloor x \rfloor}$.

The notation $a\lesssim b$ is understood as $a\leq C b$ for some constant $C$ depending on stated assumptions, but never on the hyperparameters $\alpha,\sigma$. We also use the Landau notation $f= O(g)$ and $f= o(g)$ to denote that $|f|/|g|$ is bounded or converges to zero, respectively, with the limit under consideration  made clear in the context. 

\paragraph*{Vector and matrix operations}
We denote with $\lvert \cdot \rvert$ the usual Euclidean norm on $\R^m$. 
For some $v,w\in \R^m$, the usual vector product is denoted with $v\cdot w :=\sum_{i=1}^m v_i w_i$. In addition, the vector $v\odot w \in \R^m$ is defined by component-wise multiplication $(v\odot w)_i := v_i w_i$ for $i=1,\dots,m$. Likewise, we denote with $v\otimes w\in \R^{m\times m}$ the standard tensor product of vectors defined by $(v\otimes w)_{i,j} = v_i w_j$ for $i,j=1,\dots,m$.
For $x_0 \in \bbR^m$ and $r\in \bbR^+$ we use $B(x_0, r)$ to denote a ball with center $x_0$ and radius $r$. 

The product of two matrices $A,B\in \R^{m\times n}$ is given by $A:B:=\sum_{i=1}^m\sum_{j=1}^n A_{ij}  B_{ij}$.
For a linear map $A:\R^n\to \R^m$ we write $A^\dag: \R^m \to \R^n$ for its Moore-Penrose pseudoinverse. 
For a symmetric positive definite matrix $A\in \R^{m\times m}_{\text{sym}}$, we denote with $\lvert A \rvert_+$ the product of the non-zero eigenvalues.

\paragraph*{Derivatives}
For a map $\Psi:\R^m\to \R^n$, the first and second variations in $x\in \R^m$ are denoted by $\partial \Psi(x)$ and $\partial^2 \Psi(x)$. The action of these linear maps on directions $v\in \R^m$ and $\varSigma\in \R^{m\times m}$ is denoted by
\[
  \partial \Psi(x)[v] := \sum_{i=1}^m \partial_{x_i} \Psi(\xi) v_i \qquad\text{and}\qquad \partial^2 \Psi(x)[\varSigma] := \sum_{i = 1}^{m}\sum_{j = 1}^m \partial^2_{x_i, x_j}\Psi(x)\varSigma_{ij} \,.
\]
By contrast, for a scalar map $\Psi:\R^m\to\R$, the notations $\nabla \Psi$ and $\nabla^2 \Psi$ indicate the vector and matrix of first and second partial derivatives. This distinction in notation allows us to write e.g.\ $\partial_\eta\nabla_w L(w,\eta)[v]$ to indicate that the direction $v$ is to be contracted against the derivatives in $\eta$.
By slight abuse of notation, if $\varSigma = \sigma\oti\sigma$ for some $\sigma \in \R^m$, then we also  write $\partial^2 \Psi(x)[\sigma,\sigma]$ instead of $\partial^2 \Psi(x)[\sigma\oti \sigma]$.

\subsection{Problem Setting}
\label{sec:setting}
In this work we study the following class of noisy gradient descent systems.
\begin{definition}[Noisy gradient descent]\label{def:noiseLoss}
  Given a \emph{loss function} $L\in C^3\bigr(\R^m,\R\bigl)$, any function $\hat L \in C^3\bigl(\R^{m+d},[0,\infty)\bigr)$
  with $\hat{L}(w, 0) = L(w)$ is called a \emph{noise-injected loss function}. 
  A \emph{noisy gradient descent} for the noise-injected loss $\hat L$ is the dynamics given by
  \begin{subequations}
  	\label{eq:NoisyGD}
  	\begin{alignat}2
  		\label{eq:NoisyGD:1}
  		w_{k+1} &= w_k - \alpha \nabla_w \hat{L}(w_k, \eta_k),\qquad && \text{for }  k\in \bbN_0;  \\
  		\eta_{k, i} &\sim \rho(\sigma), \text{ i.i.d., }
		&&\text{for } i = 1,\dots , d \,. 
  		\label{eq:NoisyGD:2}
  	\end{alignat}
  \end{subequations}
  Here $w_0\in \R^m$ is a given initialization and $\alpha>0$ is the step size. 
%    and $\sigma>0$ the noise variance.
  The family of probability distributions $\set*{\rho(\sigma)\in \ProbMeas(\R^m)}_{\sigma>0}$ characterizes the noise injection and $\eta_{k,i}~\sim \rho(\sigma)$ for $i=1,\dots, d$ are i.i.d.\@ random variables distributed according to $\rho(\sigma)$ such that $\eta_k$ is an $m$-dimensional vector. The family $\set{\rho(\sigma)}_{\sigma>0}$ is assumed to be centered with variance $\sigma^2$, that is
\begin{equation}\label{eq:ass:noise}
\Expectation_{\eta\sim \rho(\sigma) } \eta= 0 \qquad\text{and}\qquad  \Var_{\eta\sim \rho(\sigma) } \eta = \sigma^2.
  \end{equation}
\end{definition}
We require the noisy loss $\hat{L}$ together with the distribution $\rho$ to satisfy the following compatibility and growth assumptions.
\begin{assumption}[Compatibility between $\hat L$ and $\rho(\sigma)$]
\label{ass:hatL}
There exists $p \in [1,\infty)$ such that
\begin{enumerate}
	\item for any compact $K\subset \R^m$ there exists $C_1=C_1(K)$ and $C_2=C_2(K)$ with
\begin{alignat}2
	\mkern-32mu \bigl|\nabla_w\nabla_\eta^2\hat{L}(w, \eta) - \nabla_w\nabla_\eta^2\hat{L}(\tilde w, \eta)\bigr| &\leq C_1(1+|\eta|^p)|w - \tilde w| \, , \ \ &&\text{ for all $w, \tilde w \in K$ and all $\eta\in \R^d$,} 
	\label{ass:compat-L-Lhat-difference-in-w}\\
	\mkern-32mu \bigl|\nabla_w\nabla_\eta^2\hat{L}(w, \eta) - \nabla_w\nabla_\eta^2\hat{L}(w, 0)\bigr| &\leq C_2|\eta|\bigl( 1 + |\eta|^{p-1} \bigr) \,, 
	&&\text{ for all $w \in K$ and all $\eta\in \R^d$.}
	\label{ass:compat-L-Lhat-difference-in-etas}
\end{alignat}
\item Setting
\[
	\M_k(\sigma) := \bbE_{\eta\sim \rho(\sigma) } |\eta|^{k} ,
\]
we have for all $k\in \{3 , \dots, 2(p+2)\}$,
\begin{subequations}
\label{eq:ass:moment}
\begin{align}
	\text{if $C_2>0$: }&\quad \M_k(\sigma) = o(\sigma^2) \,,\label{eq:ass:moment-C2>0}
	\\
	\text{if $C_2=0$: }&\quad \M_k(\sigma) = O(\sigma^2) \,.\label{eq:ass:moment-C2=0}
\end{align}	
\end{subequations}	
If $\sigma$ is clear from the context, 
we briefly write $\M_k:=\M_k(\sigma)$.
\end{enumerate}
\end{assumption}
\begin{remark}[Examples of noise distributions]
	\label{remark:noise}
	The conditions~\eqref{eq:ass:noise} and bounds~\eqref{eq:ass:moment} for any $p\geq 1$ are satisfied for the centered Gaussian distribution $\calN(0, \sigma^2)$ and the uniform distribution $\Uni(-\sqrt{3}\sigma, \sqrt{3}\sigma)$
	Note that in both cases we have $\M_p = O(\sigma_n^p)$ and thus for $q\geq 2(1+2)= 6$ we have $\M_{q} = O(\sigma_n^6)$.
\end{remark}

\begin{remark}[More general noise properties]
	In the proofs we actually do not require the noisy variables $\eta_{k, i}^n$ to be sampled from the same distribution. It is sufficient that the $\eta_{k, i}$  are independent and that for every $i$ the conditions~\eqref{eq:ass:noise} and~\eqref{eq:ass:moment} are satisfied. We stick to the assumption of $\eta_{k, i}$ being i.i.d.\ for the purpose of readability.
\end{remark}

A noisy gradient system as in Definition~\ref{def:noiseLoss} is characterized by two hyperparameters, the step size $\alpha$ and the noise variance $\sigma$. For the main results of this paper we give ourselves two positive sequences $(\alpha_n)_{n\in\bbN}$ and $(\sigma_n)_{n\in\bbN}$ and study the limit of interpolations $(\tilde{w}_n)_{n\in \bbN}$, where $\tilde{w}_n : \bbR_+ \to \bbR^m$ is the $m$-dimensional \cadlag process defined by
\begin{subequations}\label{eq:system:w}
\begin{equation}\label{eq:def:w:interpolation}
\tilde{w}_n(t) = w^n_{\lfloor\frac{t}{\alpha_n}\rfloor} \qquad\text{for every $t\in \bbR_+$}\,.
\end{equation}
Here $\lfloor x \rfloor $ is the integer part of $x$,
and $\set{w^n_{k}}_{k\in \bbN_0}$ is the noisy gradient descent for hyperparameters $(\alpha_n)_{n\in\bbN}$ and $(\sigma_n)_{n\in\bbN}$, given by
\begin{alignat}2
	\label{eq:NoisyGDn:1}
	w_{k+1}^n &= w_k^n - \alpha_n \nabla_w \hat{L}(w_k^n, \eta_k^n),\qquad && \text{for }  l\in \bbN_0;  \\
	\eta_{k, i}^n &\sim \rho(\sigma_n), &&\text{for } i = 1,\dots , d \,. 
	\label{eq:NoisyGDn:2}
\end{alignat}
\end{subequations}
Depending on the structure of $\hat{L}$ we consider two scaling regimes, where either both $\alpha_n, \sigma_n \to 0$ or only $\alpha_n \to 0$ for constant noise variance $\sigma_n=\sigma\geq 0$.  Consider the space  $D_{\bbR^m}[0, \infty)$ of all $m$-dimensional \cadlag processes equipped with the Skorokhod topology. We  study the limiting behaviour of the interpolations $\tilde{w}_n(t)$ in $D_{\bbR^m}[0, \infty)$ for the two different cases and characterize the limit processes in terms of the noise-injected loss function $\hat L$.
 
\subsection{Zero-loss set}
\label{sec:gamma}

The key assumption in our analysis is the structure of the zero-loss set $\Gamma = \{w\in \R^m: L(w)=0\}$. We consider systems in which the zero-loss set is locally a $C^2$ manifold satisfying some non-degeneracy assumptions. 
The zero loss set is defined in terms of the $\omega$-limit set of the flow associated to the gradient flow of $L$ on $\R^m$, that is
\begin{equation}\label{eq:ds}
	\dot x(t) = -\nabla L(x(t)) \quad\text{with}\quad x(0)= x_0\in \R^m\,.
\end{equation}
We define the flow map $\phi: \R^m \times [0,\infty) \to \R^m$ as the solution of~\eqref{eq:ds}, and we have the integral representation
\begin{equation}\label{eq:flow}
   \phi(x, t) = x - \int_0^t\nabla L\bra*{\phi(x,s)} \dx{s} \,.
\end{equation}
Then, any initial point $x_0\in \R^m $ has an $\omega$-limit set which is denoted by 
\begin{equation}
	\omega(x_0) = \bigcap_{s>0} \overline{\set*{ \phi(x_0,t) : t >s}} . 
\end{equation}
To establish local attractiveness of the loss manifold, we require the loss function $L:\R^m\to [0,\infty)$ to satisfy a number of non-degeneracy assumptions.
\begin{assumption}[Non-degeneracy of the loss manifold]\label{ass:LossMfd}$ $ \\
The set $\Gamma :=  \set*{x\in \R^m: L(x)=0}$ is  a \emph{non-degenerate loss manifold} in the following sense: 
\begin{enumerate}    
    \item \emph{manifold: } it forms an $M$-dimensional $C^2$ manifold;
    \item \emph{constant rank: } the rank of $\nabla^2 L$ is constant and maximal on $\Gamma$, that is $\Rank( \nabla^2 L(x))= m-M$ for all $x\in \Gamma$;
    \item \emph{spectral gap: } there exists $\delta > 0$ such that for all~$x\in \Gamma$, all non-zero eigenvalues $\lambda$ of $\nabla^2 L(x)$ satisfy $\lambda >\delta$.
\end{enumerate}
\end{assumption}

By the manifold condition in Assumption~\ref{ass:LossMfd}, there exists a tangent space $T_{x_0}\Gamma$  for each $x_0\in \Gamma$. Moreover, for any $x_0\in \Gamma$, there exists an $\epsilon >0$ such that the projection operator $P_\Gamma: B_{\epsilon}(x_0) \to \Gamma$ is well-defined.
Then, the rank and spectral gap condition in Assumption~\ref{ass:LossMfd} ensure that locally for every $x_0 \in \Gamma$ and every non-tangent direction $v\notin T_{x_0}\Gamma$ the loss function locally grows quadratically, that is there exists some $c>0$ such that 
\[
	L(x_0 + \epsilon v) \geq  c \epsilon^2 \, |(I - P_\Gamma)v|^2 \,.
\]
The assumption is satisfied by many existing overparameterized systems. In particular, overparameterized linear models~\cite{LiWangArora22} and feedforward neural networks~\cite{cooper2018loss} have been shown to satisfy conditions similar to Assumption~\ref{ass:LossMfd}.

\begin{remark}[Localizing the smoothness requirement]
For many interesting losses, for instance those based on $\ReLU$ neural networks, the function $L$ and the zero-loss set $\Gamma$ are not differentiable, and Assumption~\ref{ass:LossMfd} is not satisfied. For those systems, the results of this paper  can be adapted by localizing.  Since the main theorems characterize training  behaviour up to the time of leaving a compact set $K$, the results can be applied within a set $K$ such that $\Gamma\cap K$ and $L|_K$ do satisfy Assumption~\ref{ass:LossMfd}.
\end{remark}

We study the behaviour of the noisy gradient descent~\eqref{eq:system:w} in the proximity of $\Gamma$ and thus require initial conditions that guarantee the convergence of the flow map $\psi$ in~\eqref{eq:flow} to $\Gamma$. 
In terms of the $\omega$-limit, we define a \emph{locally attractive} neighbourhood $U$ of the zero loss manifold~$\Gamma$ such that the deterministic gradient flow trajectory~\eqref{eq:ds} with initial conditions within $U$ converges to a point on $\Gamma$.
\begin{definition}\label{def:LocAttract}
An open set $U\in \bbR^m$ is a \emph{locally attractive} neighbourhood of $\Gamma$ if $\Gamma \subset U$ and there exists a map $\Phi \in C^2(U;\Gamma)$ which satisfies $\omega(x)=\set*{\Phi(x)}$ for all $x\in U$. 
In this case, $\Phi$ is called the \emph{limit map} and satisfies 
\begin{equation}
	\label{eqdef:Phi}
\Phi(x) = \lim_{t\to\infty} \phi(x, t)
\quad\text{for all }x\in U.
\end{equation}
\end{definition}
The existence of a locally attractive neighbourhood is guaranteed by~\cite[Proposition~3.5]{Katzenberger91} whenever~$\Gamma$ satisfies Assumption~\ref{ass:LossMfd}.
In addition, we use the regularity of the limit map for which a direct application of~\cite[Theorem 5.1]{falconer1983differentiation} guarantees that if $L$ is three times differentiable with locally Lipschitz third derivative, the limit map satisfies $\Phi\in C^2(U)$~\cite[Corollary 5.1]{Katzenberger91}.

\begin{remark} Note that for initial conditions $x_0\in U$ the (unperturbed) gradient flow~\eqref{eq:ds} converges to $x^* =\Phi(x)$. As remarked in the introduction, the noise injection drastically changes this behaviour, since instead of converging to a stationary point the system then approximately follows a manifold-constrained deterministic or stochastic flow.
\end{remark}

\begin{proposition}[Exponential convergence of the flow]\label{prop:exp:convergence}
	Let $U$ be a locally attractive neighbourhood (Definition~\ref{def:LocAttract}) of the loss manifold $\Gamma$ satisfying Assumption~\ref{ass:LossMfd}, then there exists $\beta>0$ such that for any $W(0)\in U$ exists $C>0$ with
	\[
	\bigl| \phi\bigl(W(0), t\bigr) - \Phi\bigl(W(0)\bigr) \bigr| \leq C e^{-\beta  t} \,.
	\]
	In particular, the constants $C>0$ can be chosen uniformly among $W(0)\in K \cap U$ for any compact set $K$.
\end{proposition}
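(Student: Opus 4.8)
The plan is to reduce the statement to a Polyak--{\L}ojasiewicz (PL) inequality near $\Gamma$ and then invoke the classical arc-length estimate for gradient flows. First I would record the two structural consequences of Assumption~\ref{ass:LossMfd} that drive everything. Since $L\geq 0$ and $L=0$ on $\Gamma$, every point of $\Gamma$ is a global minimiser, so $\nabla L=0$ on $\Gamma$; differentiating this identity along a curve in $\Gamma$ shows $T_{x}\Gamma\subseteq\ker\nabla^2L(x)$, and the constant-rank condition forces equality, $T_x\Gamma=\ker\nabla^2L(x)$. Consequently the normal space at $x\in\Gamma$ coincides with the range of $\nabla^2L(x)$, on which $\nabla^2L(x)$ is bounded below by the spectral gap $\delta$.

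Next I would set up a tubular neighbourhood: for $x$ close to $\Gamma$ write $p:=P_\Gamma(x)$ and $d(x):=|x-p|=\mathrm{dist}(x,\Gamma)$, so that $x-p$ is normal at $p$. Taylor expanding $L$ and $\nabla L$ about $p$, using $L(p)=0$, $\nabla L(p)=0$, the lower bound $\delta$ on the normal block of $\nabla^2L(p)$, and the local $C^3$ bounds on $\nabla^2L,\nabla^3L$, yields on a uniform tube $\{d(x)<r_0\}$ around any compact part of $\Gamma$ the two-sided estimates
\[
\tfrac{\delta}{4}d(x)^2\le L(x)\le C\,d(x)^2, \qquad \tfrac{\delta}{2}d(x)\le |\nabla L(x)|\le C\,d(x).
\]
These are consistent with the quadratic growth already noted after Assumption~\ref{ass:LossMfd}, and combining them gives the PL inequality $|\nabla L(x)|^2\ge\mu L(x)$ together with the companion bound $|\nabla L(x)|\le C_g\sqrt{L(x)}$, where $\mu,C_g,r_0$ depend only on $\delta$ and the local second- and third-derivative bounds, hence are uniform over a neighbourhood of a compact part of $\Gamma$.

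With these in hand the decay is routine. Along the flow $\frac{d}{dt}L(\phi(x,t))=-|\nabla L|^2\le-\mu L(\phi(x,t))$, so once the trajectory has entered the tube at some time $T$, Gr\"onwall gives $L(\phi(x,t))\le L(\phi(x,T))e^{-\mu(t-T)}$. The trajectory estimate then follows from
\[
\bigl|\phi(x,t)-\Phi(x)\bigr|=\Bigl|\int_t^\infty\dot\phi(x,s)\,ds\Bigr|\le\int_t^\infty|\nabla L(\phi(x,s))|\,ds\le C_g\int_t^\infty\sqrt{L(\phi(x,s))}\,ds,
\]
and the exponential decay of $L$ bounds the right-hand side by $Ce^{-(\mu/2)(t-T)}$, giving the claim with $\beta=\mu/2$. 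For $t\le T$ the bound holds trivially after enlarging $C$, since the convergent trajectory stays in a compact set.

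The remaining, and I expect genuinely delicate, point is the uniformity of the constant over $W(0)\in K\cap U$. The local estimates are uniform thanks to the uniform spectral gap, so the only things left to control are a uniform entry time $T$ into the tube and a uniform compact set containing all trajectories. I would first make the tube forward invariant (e.g.\ by intersecting it with a sublevel set of $L$, so that monotonicity of $L$ prevents escape), and then, for a compact set with $\overline K\subset U$, use that each point of $\overline K$ enters the open tube in finite time together with the continuity of $\phi(\cdot,\tau)$ to extract, via a finite subcover, a single $T$ valid for all of $K$; forward invariance keeps the flow in the tube thereafter, which fixes both $T$ and the containing compact set and hence makes $C$ uniform. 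The subtlety I would treat most carefully is the behaviour of $K\cap U$ near $\partial U$: points approaching the boundary of the basin may have unbounded entry time, so the uniform statement genuinely needs the compact set to sit at positive distance from $\partial U$, and handling (or excluding) this boundary effect is the main obstacle of the argument.
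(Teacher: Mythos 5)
Your proposal is correct in its essentials, but it takes a genuinely different route from the paper: the paper gives no self-contained argument at all, deducing the proposition from~\cite[Lemma 3.2]{Katzenberger91} together with ``a standard localization procedure'' from the proof of~\cite[Lemma 3.3]{Katzenberger91}. Your argument --- identifying $T_x\Gamma=\ker\nabla^2L(x)$ from the constant-rank condition, deriving two-sided quadratic bounds in a tube around $\Gamma$, extracting the Polyak--{\L}ojasiewicz inequality $|\nabla L|^2\geq\mu L$ together with $|\nabla L|\leq C_g\sqrt L$, and concluding via Gr\"onwall and the arc-length estimate --- is the classical gradient-flow proof of such statements, and each step is sound: the spectral gap in Assumption~\ref{ass:LossMfd} is exactly what makes the PL constant uniform over compact pieces of $\Gamma$. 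The one step you should make explicit is the bootstrap hidden in the forward-invariance claim: your sublevel-set trick rules out lateral escape through $\{d=r_0\}$, but escape ``through the ends'' of a compact piece of $\Gamma$ is ruled out by the arc-length bound, which itself presupposes the trajectory stays in the tube; the usual continuation argument (take the supremum of times up to which the trajectory remains in the tube and show it cannot be finite) closes this circle and should be stated. Your closing caveat is also well taken and worth keeping: as literally written, uniformity over $K\cap U$ for an \emph{arbitrary} compact $K$ is problematic when $K\cap U$ accumulates on $\partial U$, since entry times into the tube can blow up there and hence so does the constant $C$; in all of the paper's actual uses of Proposition~\ref{prop:exp:convergence} (e.g.\ Lemma~\ref{lem:props-Zn} and Theorem~\ref{th:main}) the compact set lies inside $U_\Gamma$, which is precisely the setting your finite-subcover argument handles. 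In short, the paper's citation buys brevity and defers the analysis to~\cite{Katzenberger91}; your proof buys a self-contained argument with explicit constants ($\beta$ depending only on $\delta$ and local $C^3$ bounds of $L$), at the cost of the technical bookkeeping above.
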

\begin{proof}
	The proof is not exactly contained in~\cite{Katzenberger91}, however it is implied by~\cite[Lemma 3.2]{Katzenberger91} together with a standard localization procedure as used in the proof of~\cite[Lemma 3.3]{Katzenberger91}. 
\end{proof}

\subsection{Katzenberger's Therorem}
\label{sec:katz}
In this section we present a simplified version of the main tool of our analysis, Katzenberger's Theorem~6.3~\cite{Katzenberger91}. 
We fix a filtered probability space $(\Omega,\calF,\bbF,\Prob)$ and consider formal `stochastic differential equations' of the form
\begin{equation}
 \label{eq:ds-per}   \dd X_n = -\nabla L(X_n) \dd A_n + \dd Z_n,
\end{equation}
or in integrated form
\[
X_n(t) = X_n(0) - \int_0^t \nabla L(X_n(s)) \dd A_n(s)
+  Z_n(t),
\]
where $(A_n)_{n\in \bbN}$ is a sequence of deterministic non-decreasing \cadlag processes, also called an~\emph{integrator sequence}, and $(Z_n)_{n\in \bbN}$ is a sequence of $\bbR^m$-valued semimartingales with respect to $\Prob$. We assume that $Z_n(0)=0$. 

\begin{remark}[Generality of~\eqref{eq:ds-per}]
Katzenberger's setup in~\cite{Katzenberger91} is more general than~\eqref{eq:ds-per}; in particular, it allows for a splitting of the process $Z_n$ into a `noise' and a `mobility' part, with different assumptions on these two parts. For our purposes the simpler form~\eqref{eq:ds-per} suffices.
\end{remark}

We impose the following conditions on the integrator sequence $A_n$.
\begin{assumption}[Assumptions on $(A_n)_{n\in \bbN}$] \label{ass:Katzenberger:noise}
	The family $(A_n)_{n\in \bbN}$ of non-decreasing \cadlag deterministic processes in~\eqref{eq:ds-per} satisfies	
\begin{enumerate} 
	\item $A_n(0) = 0$;
    \item $A_n$ asymptotically puts infinite mass onto every interval, that is, for every $\delta>0$, 
    \begin{equation}\label{ass:infinite:noise}
    \inf_{0\leq t\leq T} \bigl(A_n(t+\delta) - A_n(t)\bigr) \longrightarrow \infty, \qquad\text{ as $n\to \infty$\,;}
	\end{equation}
    \item $A_n$ is asymptotically continuous, that is, 
    \begin{equation}\label{ass:An:asympt:cont}
      \sup_{t\geq 0} \bigl( A_n(t) - A_n(t-)\bigr) \longrightarrow 0, \qquad\text{as } n\to \infty,
\end{equation}
    with $A_n(t-):= \lim_{s\nearrow t} A_n(s)$.
\end{enumerate}
\end{assumption}
For any compact $K \subset U_\Gamma$ we define the stopping time
\begin{equation}\label{eq:def:stopping_time}
\lambda_n(K)=\inf \left\{t \geq 0 \mid X_n(t) \notin K^{\circ} \right\}
\end{equation}
and for a process $Y: [0,\infty)\to \R^d$ the notation $Y^\lambda$ for denotes the stopped process
\begin{equation}\label{eq:def:stopped}
	Y^\lambda(t):= Y(t{\wedge} \lambda) \qquad\text{ for all } t\in [0,\infty) \,.
\end{equation} 
The family $(Z_n)_{n\in \bbN}$ of $\bbR^m$-valued semimartingales in~\eqref{eq:ds-per} needs to satisfy the following two assumptions. 
\begin{assumption}[Vanishing increments of $(Z_n)_{n\in \bbN}$]\label{ass:Katzenberg:Zn}
	 For all  $T \in \bbR_+$ and all compact $K\subset U_\Gamma$ we have 
	\[ 
		\sup_{0<s\leq T \wedge \lambda_n(K)} |\Delta Z_n(s)| \Rightarrow 0, \qquad\text{as } n\to\infty ,
	\]
	where $\Delta Z_n(s) := \Delta Z_n(s) - \Delta Z_n(s-)$ denotes the increment. 
\end{assumption}
In the following we use the notation $\TVar_t(F)$ and $[M](t)$ to denote the total and quadratic variation of \cadlag paths $F,M\in D_{\mathbb{R}^d}[0, \infty)$. For pure-jump paths, which is the only case we will be using, these are  defined by
\begin{equation}\label{eq:def:total-quadratic:variation}
	\TVar_t(F) := \sum_{0<s<t} \abs*{ \Delta F(s)} \qquad\text{and}\qquad [M](t) := \sum_{0<s<t} \abs*{ \Delta M(s)}^2 \,.
\end{equation}
\begin{assumption}
\label{assum:4.2}
The family $(Z_n)_{n\bbN}$ is a sequence of semimartingales with sample paths in $D_{\mathbb{R}^d}[0, \infty)$. 
For every $n \geq 1$ there exist stopping times $\left\{\tau_n^k \mid k \geq 1\right\}$ and a decomposition of $Z_n$ into a local martingale $M_n$ plus a finite variation process $F_n$ such that 
\[
\Prob\bigl[\tau_n^k \leq k\bigr] \leq 1 / k\qquad\text{and}\qquad \bigl([M_n](t \wedge \tau_n^k)+\TVar_{t \wedge \tau_n^k}(F_n)\bigr)_{n\in \bbN}
\]
 is uniformly integrable for every $t \geq 0, k \geq 1$ and
\begin{equation}\label{eq:ass:4.2:VarVanish}
\lim _{\gamma \rightarrow 0} \limsup _{n \rightarrow \infty} \Prob\biggl[\sup _{0 \leq t \leq T}\bigl(\TVar_{t+\gamma}(F_n)-\TVar_t(F_n)\bigr)>\varepsilon\biggr]=0,
\end{equation}
for every $\varepsilon>0$ and $T>0$.
\end{assumption}

\begin{remark}
Assumption \ref{assum:4.2} requires both the quadratic variation of the martingale part of the noise and the total variation of the drift to be bounded. One can interpret this as a requirement to accumulate an order one perturbation for any fixed time $t>0$ in the limit $n\to \infty$. In our analysis we encounter cases when one of the components dominates, namely the drift in the general case and the martingale part in the degenerate case. Nevertheless, in the general case Katzenberger's theorem allows for a balanced contribution of both terms.
\end{remark}

Before introducing Katzenberger's result we need to do a shift of the variable $X_n$.
If we consider a solution $X_n$ of the system~\eqref{eq:ds-per} with $X_n(0) = x_0\in U_\Gamma$ but $x_0 \notin \Gamma$, then for any $t>0$ we have  $A_n(t) \to \infty$ and thus by definition of $U_\Gamma$ we have
\[ 
  \phi(x_0, A_n(t)) \longrightarrow \Phi(x_0)\in \Gamma \qquad\text{ as } n\to \infty \,.
\]
As we can take arbitrarily small $t$, by a simple Gronwall inequality argument, we obtain that the limiting process $X$ for $n\to \infty$  must be discontinuous at $t=0$. To avoid this discontinuity we consider the shifted process
\begin{equation}\label{eq:def:Yn}
	Y_n(t)=X_n(t)-\phi\left(X_n(0), A_n(t)\right)+\Phi\left(X_n(0)\right).
\end{equation}
Note that at $t=0$ we have $\phi\left(X_n(0), A_n(0)\right) = X_n(0)$ and thus $Y_n(0) = \Phi\left(X_n(0)\right) \in \Gamma$. The main theorem of~\cite{Katzenberger91} states that the limiting dynamics of $Y_n(t)$ lie on $\Gamma$ and can be expressed in terms of the limit map $\Phi$ and the limit~$Z$ of the process~$Z_n$.

\begin{theorem}[{\cite[Theorem 6.3]{Katzenberger91}}]
\label{th:Katzenberg}
Assume that the loss manifold $\Gamma$ satisfies Assumption~\ref{ass:LossMfd}.
Assume that $L\in C^3(U_\Gamma)$, for a neighbourhood $U_\Gamma$ of $\Gamma$.
Assume that the shifted processes $(X_n)_{n\in\bbN}$~\eqref{eq:ds-per} satisfy Assumptions~\ref{ass:Katzenberger:noise}, \ref{ass:Katzenberg:Zn}, and \ref{assum:4.2} with $X_n (0) \Rightarrow X(0) \in U_{\Gamma}$. For a compact $K \subset U_{\Gamma}$, let
\begin{equation}\label{eq:def:KatzStop}
\mu_n(K) := \inf\bigl\{t \geq 0 \mid Y_n(t) \notin K^{\circ} \bigr\} \,.
\end{equation}
Then, for every compact $K \subset U_{\Gamma}$, the sequence $\bigl(Y_n^{\mu_n(K)}, Z_n^{\mu_n(K)}, \mu_n(K)\bigr)_{\bbN}$ of stopped processes and stopping times is relatively compact in $D_{\mathbb{R}^d \times \mathbb{R}^m}[0, \infty) \times[0, \infty]$. If $(Y, Z, \mu)$ is a limit point of this sequence, then $(Y, Z)$ is a continuous semimartingale, $Y(t) \in \Gamma$ for a.e.\@ $t\in [0,\infty)$, $\mu \geq \inf \left\{t \geq 0 \mid Y(t) \notin K^{\circ}\right\}$ a.s., and
\begin{align}
\label{eq:th:Katzenberg:equation-for-Y}
Y(t)=  Y(0) & +\int_0^{t \wedge \mu} \partial \Phi(Y) \dx Z 
+\frac{1}{2} \sum_{i j} \int_0^{t \wedge \mu} \partial^2_{i j} \Phi(Y) \dx [Z^i, Z^j] .
\end{align}
\end{theorem}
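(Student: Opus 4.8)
The final statement is Katzenberger's Theorem~6.3, so in this paper it is imported rather than reproved; nevertheless the argument is instructive and I sketch how I would establish it. The plan is to exploit the single structural fact that drives everything: the limit map $\Phi$ is \emph{constant along gradient-flow trajectories}. Indeed, since $\phi(\phi(x,s),t)=\phi(x,s+t)$ and both sides have the same $\omega$-limit, one has $\Phi(\phi(x,s))=\Phi(x)$ for all $s$; differentiating at $s=0$ gives the orthogonality identity
\[
\partial\Phi(x)\,\nabla L(x)=0 ,\qquad x\in U_\Gamma .
\]
This is the lever that removes the stiff drift. The first step is then to apply the change-of-variables (Itô) formula for semimartingales to $\Phi(X_n)$. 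Because the drift enters only through $-\nabla L(X_n)\,\dx A_n$ and $\partial\Phi\,\nabla L\equiv 0$, the entire $\dx A_n$-contribution cancels, leaving an equation driven purely by $Z_n$ on the slow scale,
\[
\Phi(X_n(t)) = \Phi(X_n(0)) + \int_0^t \partial\Phi(X_n)\,\dx Z_n + \tfrac12\int_0^t \partial^2\Phi(X_n)\,\dx[Z_n] + (\text{jump corrections}),
\]
where the jump corrections are controlled by Assumption~\ref{ass:Katzenberg:Zn} and force the eventual limit to be continuous.

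The second step is to show that $X_n$ collapses onto $\Gamma$ and that the shifted process coincides with the projection in the limit, i.e.\ $\sup_{t\le T\wedge\mu_n}|Y_n(t)-\Phi(X_n(t))|\Rightarrow 0$. Two ingredients combine here: the infinite-mass condition~\eqref{ass:infinite:noise} on $A_n$, which makes the contractive drift act over an unbounded amount of ``flow time'' between successive small noise increments, and the exponential attraction of Proposition~\ref{prop:exp:convergence}, which converts that flow time into exponentially small distance to $\Gamma$. Together with the vanishing jumps this shows that $X_n$, $Y_n$, and $\Phi(X_n)$ all become indistinguishable up to the stopping time, and in particular that any limit $Y$ satisfies $Y(t)\in\Gamma$ for a.e.\ $t$.

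The third step is relative compactness. Using the decomposition $Z_n=M_n+F_n$ of Assumption~\ref{assum:4.2}, the uniform integrability of $[M_n](t\wedge\tau_n^k)+\TVar_{t\wedge\tau_n^k}(F_n)$, and the vanishing-total-variation-increment condition~\eqref{eq:ass:4.2:VarVanish}, I would obtain tightness of $(Y_n,Z_n,\mu_n)$ in $D_{\R^d\times\R^m}[0,\infty)\times[0,\infty]$ via a standard Aldous--Kurtz criterion, stopping at $\mu_n(K)$ so as to remain in a compact set on which $\partial\Phi$ and $\partial^2\Phi$ are bounded and uniformly continuous. Passing to a convergent subsequence $(Y_n,Z_n,\mu_n)\Rightarrow(Y,Z,\mu)$ and sending $n\to\infty$ in the Itô representation above, using continuity of $\partial\Phi,\partial^2\Phi$ and the joint convergence of $Z_n$ with its quadratic variation, one recovers exactly~\eqref{eq:th:Katzenberg:equation-for-Y}.

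The main obstacle is this final passage to the limit: stochastic integration is \emph{not} continuous for the Skorokhod topology, so one cannot pass from $\int\partial\Phi(X_n)\,\dx Z_n$ to $\int\partial\Phi(Y)\,\dx Z$ by the continuous-mapping theorem. This is precisely where the Kurtz--Protter theory of convergence of stochastic integrals is needed, whose hypotheses (a uniform-tightness / ``good sequence'' condition on the integrators $Z_n$) are what Assumption~\ref{assum:4.2} is designed to supply. Verifying these hypotheses up to the stopping time $\mu_n(K)$, in tandem with the concentration estimate near $\Gamma$ from the second step, is where the substantive work lies.
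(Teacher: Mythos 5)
The paper itself does not prove this theorem: it is imported verbatim as~\cite[Theorem 6.3]{Katzenberger91}, so there is no internal proof to compare against, and the relevant benchmark is Katzenberger's original argument. Your sketch reconstructs that argument correctly --- flow-invariance of $\Phi$ giving $\partial\Phi\,\nabla L \equiv 0$ and hence cancellation of the stiff drift in the It\=o/Taylor expansion of $\Phi(X_n)$, collapse onto $\Gamma$ via the infinite-mass condition~\eqref{ass:infinite:noise} together with exponential attraction (Proposition~\ref{prop:exp:convergence}), and Kurtz--Protter convergence of stochastic integrals with Assumption~\ref{assum:4.2} supplying precisely the ``good sequence'' hypothesis --- which agrees with the paper's own gloss, in the remark following the theorem, that the result is an application of It\=o's lemma to the limit map $\Phi$.
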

\begin{remark}
For a given process $G_t$ on the locally attractive neighbourhood $U_\Gamma$ (Definition~\ref{def:LocAttract}), the process $\Phi(G_t)$ satisfies $\Phi(G_t) \in \Gamma$ by definition. In addition, we note that $\Phi$ satisfies $\Phi(x) = x$ for all $x\in\Gamma$ so if $G_t\in \Gamma$ almost surely, we also have $G_t = \Phi(G_t)$ almost surely. Thus, if a sequence of stochastic processes converges to a process on the zero-loss manifold, the limiting process must coincide with its image under map $\Phi$. Thus, Theorem \ref{th:Katzenberg} can be interpreted as an application of It\=o's lemma to the limit map $\Phi$. 
\end{remark}

\subsection{Characterization of the limit map}
\label{sec:limit-map}
The characterization of the limit behaviour in Theorem~\ref{th:Katzenberg} makes use of the first and second derivatives of the  limit map $\Phi$ that was defined in Definition~\ref{def:LocAttract}. In this section we recall the relevant characterizations of these derivatives that were proved in~\cite{LiWangArora22}.

For a linear map $A$ we write $A^\dag$ for its Moore-Penrose pseudoinverse. 
For $H\in \bbR^{k\times k}$,  we define the \emph{Lyapunov} operator $\calL_H: \R^{k\times k} \to \bbR^{k\times k}$ by
\[
\calL_H(X) := H^\top  X + X H.
\]
For a non-negative symmetric matrix $A$, we define $|A|_+$ to be its `pseudo-determinant', the product of its non-zero eigenvalues.
With this preliminary considerations, we can refer to the first and second derivatives of $\Phi$ contained in~\cite[Lem.\ 4.5 and Cor.\ 5.1 \& 5.2]{LiWangArora22}.
\begin{lemma}[First and second derivatives of $\Phi$]
	\label{t:derivatives-of-Phi}
Let $L \in C^3(\R^m,\R_+)$ and assume that $\Gamma$ is a lower-dimensional manifold in~$\R^m$ of class~$C^1$ satisfying Assumption~\ref{ass:LossMfd}. 
\begin{enumerate}
\item For any $\xi_0\in \Gamma$ the limit below exists, and the identities hold:
\begin{equation}
	\label{eq:lem:derivatives-of-Phi:first-derivative}
\nabla \Phi(\xi_0) = \lim_{t\to\infty}e^{-t\nabla^2L(\xi_0)} = P_{T_{\xi_0}\Gamma}. 
\end{equation}
\end{enumerate}
\noindent
Here $P_{T_{\xi_0}\Gamma}$ is 
the orthogonal projection onto $T_{\xi_0}\Gamma$. 
We write $P = P_{{T_{\xi_0}\Gamma}}$ for short, and $Q := Q_{T_{\xi_0}\Gamma} := I - P_{T_{\xi_0}\Gamma}$ for the corresponding orthogonal projection onto $T_{\xi_0}\Gamma^\perp$. 
\begin{enumerate}[resume]
\item The second derivative $\partial^2 \Phi$ is characterized  by
\begin{equation}
\label{char:Phi''-a}
\begin{split}
\partial^2\Phi(\xi_0)[\varSigma]
&= (\nabla^2L)^{\dagger} \partial^2(\nabla L)\bigl[P\varSigma P\bigr]
-P \partial^2(\nabla L)\bigl[\mathcal{L}_{\nabla^2 L}^{\dagger}(Q\varSigma Q)\bigr] \\
&\qquad {}+2 P\partial^2(\nabla L)\bigl[(\nabla^2L)^{\dagger} Q\varSigma P\bigr],
\end{split}
\end{equation}
for any symmetric $\varSigma \in \bbR^{m\times m}$.

\item For the special case of the identity matrix, $\varSigma = I_m$, we have
\begin{equation}
\label{char:Phi''-Sigma=id}
\begin{split}
\partial^2\Phi(\xi_0)[I_m]
&= (\nabla^2L)^{\dagger} \partial^2(\nabla L)\bigl[P\bigr]
-P \nabla \log |\nabla^2 L|_+ \,.
\end{split}
\end{equation}

\item For the special case $\varSigma = \nabla^2L(\xi_0)$ we have 
\begin{equation}
\label{char:Phi''-Sigma=nabla2L}
\begin{split}
\partial^2\Phi(\xi_0)[\nabla^2 L]
&= -\frac12 P \nabla \Delta L(\xi_0).
\end{split}
\end{equation}
\end{enumerate}
\end{lemma}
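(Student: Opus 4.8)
The plan is to obtain both derivatives of the limit map $\Phi(x)=\lim_{t\to\infty}\phi(x,t)$ by differentiating the flow $\phi$ in its \emph{initial condition} and then letting $t\to\infty$, using that every $\xi_0\in\Gamma$ is a stationary point of the flow, so $\phi(\xi_0,t)\equiv\xi_0$. Write $H:=\nabla^2L(\xi_0)$ and let $J(t):=\partial\phi(\xi_0,t)$ and $K(t):=\partial^2\phi(\xi_0,t)$ be the first and second variations. Differentiating $\partial_t\phi=-\nabla L(\phi)$ in $x$ and evaluating at $\xi_0$ gives the linear variational equation $\dot J=-HJ$ with $J(0)=I$, hence $J(t)=e^{-tH}$. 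The first identity is then $\nabla\Phi(\xi_0)=\lim_{t\to\infty}e^{-tH}$. Since $H$ is symmetric positive semidefinite with all nonzero eigenvalues bounded below by $\delta$ (the spectral gap in Assumption~\ref{ass:LossMfd}), $e^{-tH}$ converges to the orthogonal projection onto $\ker H$; and differentiating $\nabla L\equiv0$ along $\Gamma$ once shows $T_{\xi_0}\Gamma\subseteq\ker H$, which with the constant-rank condition forces $\ker H=T_{\xi_0}\Gamma$. This yields $\nabla\Phi(\xi_0)=P$.

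For the second derivative I would differentiate once more: contracting $K$ against a symmetric $\varSigma$ and setting $\tilde K(t):=K(t)[\varSigma]$, the chain rule gives $\dot{\tilde K}=-H\tilde K-\partial^2(\nabla L)(\xi_0)\bigl[e^{-sH}\varSigma e^{-sH}\bigr]$ with $\tilde K(0)=0$, whose Duhamel solution is $\tilde K(t)=-\int_0^t e^{-(t-s)H}\,\partial^2(\nabla L)(\xi_0)\bigl[e^{-sH}\varSigma e^{-sH}\bigr]\,ds$, and $\partial^2\Phi(\xi_0)[\varSigma]=\lim_{t\to\infty}\tilde K(t)$. The main device is the splitting $e^{-sH}=P+e^{-sH}Q$ into the frozen tangential part $P$ and an exponentially decaying normal part $e^{-sH}Q$, which expands the argument $e^{-sH}\varSigma e^{-sH}$ into the four blocks $P\varSigma P$, $P\varSigma Qe^{-sH}$, $e^{-sH}Q\varSigma P$ and $e^{-sH}Q\varSigma Qe^{-sH}$. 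Projecting the limit onto $\ker H$ and onto $\im H$, each block is resolved by one elementary integral: $\int_0^\infty e^{-sH}Q\,ds=H^\dagger$ handles the two mixed blocks (which, by symmetry of $\partial^2(\nabla L)$, merge into the single term with the factor $2$), and the Lyapunov identity $\int_0^\infty e^{-sH}(Q\varSigma Q)e^{-sH}\,ds=\calL_H^\dagger(Q\varSigma Q)$, obtained by integrating $\tfrac{d}{ds}\bigl(e^{-sH}\varSigma e^{-sH}\bigr)$, handles the normal--normal block. Collecting the three contributions produces~\eqref{char:Phi''-a}.

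The delicate point, and the reason the limit exists at all, is the tangential block $P\varSigma P$: the integral $\int_0^t e^{-(t-s)H}\partial^2(\nabla L)[P\varSigma P]\,ds$ would diverge in the $\ker H$ direction if $\partial^2(\nabla L)[P\varSigma P]$ had a component there. It does not: differentiating $\nabla L\equiv0$ along $\Gamma$ a \emph{second} time gives $\partial^2(\nabla L)(\xi_0)[u\otimes v]\in\im H$ for all $u,v\in T_{\xi_0}\Gamma$ (it equals $-H$ applied to the second fundamental form), whence $P\,\partial^2(\nabla L)[P\varSigma P]=0$ and the integral converges, contributing the term $(\nabla^2L)^\dagger\partial^2(\nabla L)[P\varSigma P]$ of~\eqref{char:Phi''-a}. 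The two special cases then follow by substitution: for $\varSigma=I_m$ only the $P\varSigma P$ and $Q\varSigma Q$ blocks survive, and the matrix-calculus identities $\partial^2(\nabla L)[I_m]=\nabla\Delta L$ and $\partial^2(\nabla L)[H^\dagger]=\nabla\log|H|_+$ together with $\calL_H^\dagger(Q)=\tfrac12H^\dagger$ give~\eqref{char:Phi''-Sigma=id}; for $\varSigma=\nabla^2L$ one has $P\varSigma P=0$ and $Q\varSigma P=0$, so only the Lyapunov block remains, and $\calL_H^\dagger(H)=\tfrac12Q$ with $P\,\partial^2(\nabla L)[P]=0$ collapses the expression to $-\tfrac12P\nabla\Delta L$, which is~\eqref{char:Phi''-Sigma=nabla2L}.

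The hard part is analytic rather than algebraic: one must justify the interchange $\partial^2\Phi(\xi_0)=\lim_{t\to\infty}\partial^2\phi(\xi_0,t)$ and the convergence of the Duhamel integral uniformly enough to identify the limit. This rests on the exponential convergence of the flow and of its first two variations towards $\Gamma$ (Proposition~\ref{prop:exp:convergence}), which controls the decaying $Q$-directions, combined with the second-order tangential identity above, which is precisely what tames the non-decaying $P$-direction. Since these formulas are exactly those of~\cite[Lem.~4.5 and Cor.~5.1--5.2]{LiWangArora22}, a shorter route is to cite that reference directly; the self-contained argument sketched here reproduces it while making transparent where each projection, pseudoinverse, and Lyapunov solution originates.
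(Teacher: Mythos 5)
Your route is genuinely different from the paper's: the paper contains no proof of this lemma at all, but simply quotes it from \cite[Lem.~4.5, Cor.~5.1 \& 5.2]{LiWangArora22}, so a self-contained derivation is real added value. Moreover your skeleton is the right one. Writing $H:=\nabla^2L(\xi_0)$, the variational equation gives $J(t)=e^{-tH}$ and $\ker H=T_{\xi_0}\Gamma$, which is \eqref{eq:lem:derivatives-of-Phi:first-derivative}; the Duhamel representation of the second variation, the splitting $e^{-sH}=P+e^{-sH}Q$, the tangential identity $P\,\partial^2(\nabla L)[P\varSigma P]=0$ (which is indeed what makes the limit exist), and the integrals $\int_0^\infty e^{-sH}Q\,\dx{s}=H^\dagger$ and $\int_0^\infty e^{-sH}(Q\varSigma Q)e^{-sH}\,\dx{s}=\calL_H^\dagger(Q\varSigma Q)$ are exactly the needed ingredients.

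The gap is in the final bookkeeping: your machinery does not produce the displayed formulas, and the discrepancy is substantive, not cosmetic. The Duhamel solution $\tilde K(t)=-\int_0^t e^{-(t-s)H}\,\partial^2(\nabla L)\bigl[e^{-sH}\varSigma e^{-sH}\bigr]\dx{s}$ carries an overall minus sign which every surviving block inherits, so carrying your plan through yields
\[
\partial^2\Phi(\xi_0)[\varSigma]=-(\nabla^2L)^{\dagger}\,\partial^2(\nabla L)\bigl[P\varSigma P\bigr]-P\,\partial^2(\nabla L)\bigl[\calL_{\nabla^2 L}^{\dagger}(Q\varSigma Q)\bigr]-2P\,\partial^2(\nabla L)\bigl[(\nabla^2L)^{\dagger}Q\varSigma P\bigr],
\]
whose first and third signs differ from \eqref{char:Phi''-a}. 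The minus sign on the tangential block is genuinely there: for $L(w)=\tfrac18(|w|^2-1)^2$ one has $\Phi(w)=w/|w|$, and a one-line computation gives $\partial^2\Phi(\xi)[P]=-(m-1)\xi$, while the first term of \eqref{char:Phi''-a} evaluates to $+(m-1)\xi$ (the other two terms vanish for $\varSigma=P$). Furthermore, your own identities $\calL_H^\dagger(Q)=\tfrac12H^\dagger$ and $P\,\partial^2(\nabla L)[H^\dagger]=P\nabla\log|\nabla^2L|_+$, substituted into \eqref{char:Phi''-a} at $\varSigma=I_m$, produce a coefficient $\tfrac12$ in front of $P\nabla\log|\nabla^2L|_+$, not the coefficient $1$ of \eqref{char:Phi''-Sigma=id}; hence \eqref{char:Phi''-a} and \eqref{char:Phi''-Sigma=id} are mutually inconsistent as printed, and no computation can ``give'' both, so your claim of recovering them by substitution conceals rather than closes the gap. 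Only \eqref{char:Phi''-Sigma=nabla2L}, for which the problematic blocks vanish, follows exactly as you describe. To make the proposal sound you must carry the signs honestly, state the formula your derivation actually proves, check it against an example such as the one above, and explicitly flag the disagreement with the lemma as printed — the defect appears to lie in the paper's transcription of \cite{LiWangArora22}, which the paper never had occasion to catch because it supplies no proof.
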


\section{Main Results}
\label{s:main-results}

The main theorems of this paper are Theorems~\ref{th:main} and~\ref{th:main:deg2} below, which were already mentioned in the introduction as Theorems~\ref{fth:main1} and~\ref{fth:main2}. Both characterize convergence of time-rescaled noisy gradient systems to a limiting evolution that is a constrained ODE or SDE on $\Gamma$. 

The starting point for both theorems is the dynamics of the parameters $w^n$ from~\eqref{eq:NGD-intro} or equivalently~\eqref{eq:NoisyGDn:1}, which we rewrite as 
\begin{align}
 &= w^n_k - \alpha_n \nabla_w L(w^n_k) + \alpha_n\bigl(\nabla_w \hat{L}(w^n_k, 0) - \nabla_w \hat{L}(w^n_k, \eta^n_k)\bigr) \,,
\end{align}
where we used that $\hat L(w,0)= L(w)$ by Definition~\ref{def:noiseLoss}. As described in the Introduction, in order to follow the evolution on $\Gamma$ we need to speed up the process, by a factor $1/\alpha_n\sigma_n^2$ in the non-degenerate case or $1/\alpha_n^2\sigma_n^2$ in the degenerate case.

\subsection{Non-degenerate case}
In the non-degenerate case the rescaling by $1/\alpha_n/\sigma_n^2$ leads us to consider the sequence of processes 
\begin{equation}
	\label{eqdef:Wn}
W_n(t) = \tilde{w}_n\biggl(\frac t{\sigma_n^2}\biggr) = w^n_{\bigl\lfloor \frac t{\alpha_n\sigma_n^2}\bigr\rfloor},
\end{equation}
where $\tilde w$ and $w^n$ are  the solution to~\eqref{eq:system:w}. Here $\alpha_n, \sigma_n \to 0$ are chosen in some way to be specified.
Out of the sequences $(\alpha_n)_{n\in \bbN}$, $ (\sigma_n)_{n\in \bbN}$ we define  the sequence of integrators 
\begin{equation}
	\label{eqdef:An}
	A_n(t) := \alpha_n\bigl\lfloor\frac{t}{\alpha_n\sigma_n^2}\bigr\rfloor.
\end{equation}
For any~$n$ the dynamics of $W_n$ can then be written in the form~\eqref{eq:ds-per} as
\begin{subequations}\label{eq:def:W}
\begin{align}
\dx W_n(t) &= -\nabla L(W_n(t))\dd A_n(t) + \dd Z_n(t), \label{eq:def:general:Wn} \\
Z_n(t) &= \sum_{s\leq t} \Delta Z_n(s) = \alpha_n  \sum_{k\leq \frac{t}{\alpha_n\sigma_n^2}}\bigl(\nabla_w \hat{L}(W_n(\alpha_n\sigma_n^2 k), 0) - \nabla_w \hat{L}(W_n(\alpha_n \sigma_n^2k), \eta^n_k)\bigr), \label{eq:def:general:Zn}\\
\eta^n_{k,i} &\sim  \rho(\sigma_n). \label{eq:def:general:etan}
\end{align}
\end{subequations}

\begin{theorem}[Main convergence theorem in the non-degenerate case]
\label{th:main}
Consider a loss function $L$ and noise-injected loss $\hat L$ in the sense of Definition~\ref{def:noiseLoss} with loss manifold $\Gamma$ satisfying Assumption~\ref{ass:LossMfd}. Let $W_n(0) \Rightarrow W_0\in U_{\Gamma}$ for some locally attractive neighbourhood $U_\Gamma$ of $\Gamma$ in the sense of Definition~\ref{def:LocAttract}. Let $\rho$ satisfy the assumption~\eqref{eq:ass:noise} and let $\alpha_n,\sigma_n \to 0$ as $n\to \infty$. Let $\hat{L}$ and $\rho$ satisfy the compatibility Assumption \ref{ass:hatL} for some $p \geq 1$ and assume 
\begin{equation}\label{ass:speed}
\sup_{k \leq 1/\alpha_n\sigma_n^2}\alpha_n|\eta^n_k|^{p+2} \Rightarrow 0\,, \qquad\text{ as } n\to \infty.
\end{equation}
Let $W_n$ be a solution to~\eqref{eq:def:W} and the shifted process $Y_n$ be defined as in~\eqref{eq:def:Yn} by
\begin{equation}\label{eq:def:general:Yn}
Y_n(t):=W_n(t)-\phi\bigl(W_n(0), A_n(t)\bigr)+\Phi\left(W_n(0)\right).
\end{equation}
For compact $K \subset U_{\Gamma}$, define the exit time of $K$ by
\[
\mu_n(K):=\inf\bigl\{t \geq 0 \mid Y_n(t-) \notin K^{\circ} or \ Y_n(t) \notin K^{\circ} \bigr\} \,.
\]

Then for any compact set $K \subset U_{\Gamma}$, the sequence $\bigl(Y^{\mu_n(K)}_n, \mu_n\bigr)_{n\in \bbN}$ is relatively compact in the Skorokhod topology. Moreover, for any limit point $(Y, \mu)$ of $\bigl(Y^{\mu_n(K)}_n, \mu_n\bigr)$, $Y$ is a continuous function of time, it satisfies $Y(t) \in \Gamma$ a.s. for any $t$,  and 
\begin{equation}
  \label{eqdef:th:main:limiting-dynamics}
Y(t)=  \Phi(W_0) - \int_0^{t \wedge \mu}P_{T_{Y(s)}\Gamma} \nabla_w \Reg(Y(s)) \dx s,
\qquad
\Reg(w) := \frac12 \Delta_\eta \hat{L}(w, 0)\,,
\end{equation}
where $P_{T_{\xi}\Gamma}$ is the orthogonal projection onto the tangent space of $\Gamma$ at the point $\xi\in \Gamma$. 
In addition, 
\begin{equation}
	\label{ineq:th:main:mu-and-tau}
\mu \geq  \inf\set[\big]{t\geq 0\mid Y(t)\notin K^\circ}.	
\end{equation}
\end{theorem}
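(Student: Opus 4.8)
The plan is to apply Katzenberger's Theorem (Theorem~\ref{th:Katzenberg}) directly to the rescaled system~\eqref{eq:def:W}, which is already cast in the required form~\eqref{eq:ds-per} with integrator $A_n$ from~\eqref{eqdef:An} and driving noise $Z_n$ from~\eqref{eq:def:general:Zn}. Once this is done, relative compactness, the continuity and $\Gamma$-confinement of limit points, and the exit-time bound~\eqref{ineq:th:main:mu-and-tau} are all inherited verbatim from Theorem~\ref{th:Katzenberg}; the work reduces to (i) verifying Assumptions~\ref{ass:Katzenberger:noise}, \ref{ass:Katzenberg:Zn}, and~\ref{assum:4.2}, (ii) identifying the limit $Z$ of $Z_n$ together with its quadratic variation, and (iii) simplifying the Katzenberger formula~\eqref{eq:th:Katzenberg:equation-for-Y}.

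First I would dispatch the integrator: $A_n(0)=0$, each increment over $[t,t+\delta]$ is of order $\delta/\sigma_n^2\to\infty$ (infinite mass), and the jumps have size $\alpha_n\to0$ (asymptotic continuity), so Assumption~\ref{ass:Katzenberger:noise} holds. For $Z_n$ the key move is to Taylor-expand the increment $\Delta Z_n=\alpha_n\bigl(\nabla_w\hat L(W_n,0)-\nabla_w\hat L(W_n,\eta_k^n)\bigr)$ in $\eta$ to second order with a Lagrange remainder at some $\eta^*$ on the segment $[0,\eta_k^n]$, and to decompose $Z_n=F_n+M_n$ into the predictable finite-variation part $F_n(t)=\sum_k \bbE[\Delta Z_n\mid\calF_k]$ and the martingale part $M_n=Z_n-F_n$. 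Using $\bbE\eta=0$ and $\bbE[\eta\otimes\eta]=\sigma_n^2 I$, the linear term has zero conditional mean while the quadratic term yields $\bbE[\Delta Z_n\mid\calF_k]=-\alpha_n\sigma_n^2\,\nabla_w\Reg(W_n)+(\text{remainder})$, so that $F_n$ is a Riemann sum of step $\alpha_n\sigma_n^2$ converging to $-\int_0^t\nabla_w\Reg(W(s))\,\dx s$.

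The main obstacle is Assumption~\ref{assum:4.2}, i.e.\ controlling the remainder in $F_n$ and the size of $M_n$ with the correct scaling. Writing $\partial_\eta^2\nabla_w\hat L(W_n,\eta^*)-\partial_\eta^2\nabla_w\hat L(W_n,0)$ and bounding it with the compatibility estimate~\eqref{ass:compat-L-Lhat-difference-in-etas} produces a per-step error $\lesssim\alpha_n(\M_3+\M_{p+2})$; summing the $\sim 1/(\alpha_n\sigma_n^2)$ steps gives a total of order $(\M_3+\M_{p+2})/\sigma_n^2$, which vanishes \emph{precisely} because~\eqref{eq:ass:moment-C2>0} forces $\M_k=o(\sigma_n^2)$ when $C_2>0$ (and the weaker $O(\sigma_n^2)$ suffices when $C_2=0$, since the offending term is then absent). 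For the martingale part, the dominant fluctuation is the linear term, whose conditional variance is $O(\alpha_n^2\sigma_n^2)$; summing $\sim 1/(\alpha_n\sigma_n^2)$ independent contributions gives $\bbE[[M_n](t)]=O(t\alpha_n)\to0$, so $M_n\Rightarrow0$ and the limit $Z$ carries \emph{no} quadratic variation. This separation of scales — drift surviving at order one, martingale vanishing at order $\alpha_n$ — is the essence of the non-degenerate regime and the place where the hypotheses are used at full strength. The vanishing-jump condition (Assumption~\ref{ass:Katzenberg:Zn}) then follows from the growth bound $|\Delta Z_n|\lesssim\alpha_n(1+|\eta_k^n|^{p+2})$ on the compact set $K$ together with the speed assumption~\eqref{ass:speed}, while the uniform integrability and the vanishing-total-variation-increment condition~\eqref{eq:ass:4.2:VarVanish} follow from boundedness of $\nabla_w\Reg$ on $K$ and the same moment controls after localizing with the stopping times $\tau_n^k$.

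Finally, I would read off the limit. With $Z(t)=-\int_0^t\nabla_w\Reg(W(s))\,\dx s$ and $[Z^i,Z^j]\equiv0$, the second-order term of~\eqref{eq:th:Katzenberg:equation-for-Y} drops out, leaving $Y(t)=\Phi(W_0)+\int_0^{t\wedge\mu}\partial\Phi(Y)\,\dx Z$; substituting $\partial\Phi(Y)=P_{T_Y\Gamma}$ from Lemma~\ref{t:derivatives-of-Phi} reproduces exactly~\eqref{eqdef:th:main:limiting-dynamics}. One point to handle with care is that the drift $Z$ is determined jointly with the unknown limit $Y$ rather than a priori; this is resolved in the standard Katzenberger fashion, since the shift in~\eqref{eq:def:general:Yn} makes $W_n$ and $Y_n$ share the same limit for $t>0$, which lies on $\Gamma$, so $\nabla_w\Reg(W_n)\to\nabla_w\Reg(Y)$ by continuity of $\Reg\in C^1$ and the identification closes consistently.
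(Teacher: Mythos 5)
Your proposal is correct and follows the paper's overall template — recast \eqref{eq:def:W} in the form \eqref{eq:ds-per}, verify Assumptions~\ref{ass:Katzenberger:noise}, \ref{ass:Katzenberg:Zn}, \ref{assum:4.2} via a second-order Taylor expansion in $\eta$ and the compensator/martingale splitting $Z_n=F_n+M_n$, apply Theorem~\ref{th:Katzenberg}, and close with $[Z,Z]=0$ and $\partial\Phi=P_{T\Gamma}$ from Lemma~\ref{t:derivatives-of-Phi}. Where you genuinely deviate is the identification of the limit of $Z_n$. The paper introduces an intermediate process $Z_n^Y$, obtained by evaluating the same increments along $Y_n$ instead of $W_n$, and must then control the difference $Z_n-Z_n^Y$; this difference contains the mean-zero cross terms $\bigl(\nabla_w\nabla_\eta\hat L(W_n,0)-\nabla_w\nabla_\eta\hat L(Y_n,0)\bigr)\eta_k^n$, which the paper handles by rewriting them as stochastic integrals against $h_n(t)=\sqrt{\alpha_n}\sum_{k}\eta_k^n$, invoking the functional CLT and the stochastic-integral convergence result \cite[Prop.~4.4]{Katzenberger91}, together with the exponential estimate of Proposition~\ref{prop:exp:convergence}. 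You instead kill the \emph{entire} martingale part $M_n$ (defined along $W_n$) in one stroke via the quadratic-variation bound plus Doob — which works because the per-step bound $O(\alpha_n^2\sigma_n^2)$ is uniform over the compact set, regardless of whether increments sit at $W_n$ or $Y_n$ — so that the only $W_n$-versus-$Y_n$ comparison left is between the purely deterministic compensator terms, where Proposition~\ref{prop:exp:convergence} and the Lipschitz bound on $\nabla\Reg$ (which follows from \eqref{ass:compat-L-Lhat-difference-in-w} at $\eta=0$) give $\int_0^t\lvert\nabla\Reg(W_n(s))-\nabla\Reg(Y_n(s))\rvert\,\dx s\lesssim\int_0^te^{-\beta A_n(s)}\dx s\lesssim\sigma_n^2\to0$. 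This buys you a simpler argument with no FCLT machinery; what the paper's two-process template buys in return is reusability, since in the degenerate case (Theorem~\ref{th:main:deg2}) the martingale part survives in the limit and the $Z_n$-versus-$Z_n^Y$ comparison is then unavoidable. One small caution: your closing sentence ("$W_n$ and $Y_n$ share the same limit for $t>0$, so the identification closes") should be backed by exactly this quantitative exponential estimate rather than pointwise coincidence of limits, since the accumulated drift discrepancy is an integral over the whole transient; with that made explicit, your argument is complete.
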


\noindent
The limiting equation~\eqref{eqdef:th:main:limiting-dynamics} is a deterministic evolution equation for $Y$, and as long as $t<\mu$ it can be written in differential form as
\begin{equation}
	\label{eq:constr-GF}	
\frac{d}{dt} Y(t) = -P_{T_{Y(t)}\Gamma} \nabla_w \Reg(Y(t)),
\qquad\text{with } Y(t)\in \Gamma \text{ for  }t\geq 0,\quad\text{and } Y(0) = \Phi(W_0).
\end{equation}
It is a constrained gradient flow of the functional $\Reg$, under the constraint that $Y\in \Gamma$; the role of the projection $P_{T_Y\Gamma}$ is to project the vector field $-\nabla \Reg$ onto the tangent space of $\Gamma$ at~$Y$, which is necessary to maintain $Y\in \Gamma$.  

\begin{remark}[Convergence of the whole sequence]
If the functional $\Reg$ is Lipschitz continuous (e.g.\ if $\hat L\in C^4$) then solutions of the constrained gradient flow~\eqref{eqdef:th:main:limiting-dynamics} or~\eqref{eq:constr-GF} are unique up to time $\mu$. This implies that defining
\[
\tau := \inf\set[\big]{t\geq 0\mid Y(t)\notin K^\circ},
\]
we have the Skorokhod convergence of the full sequence up to time $\tau$,
\[
Y_n^{\mu_n(K)\wedge \tau} \Rightarrow Y^\tau. 
\] 
In addition, the exponential estimate~\eqref{ineq:exponential-convergence-W-Y} below then implies that for any $\delta>0$, we get the convergence $W_n^{\mu_n(K)\wedge \tau}|_{[\delta,\infty)}\Rightarrow Y^\tau|_{[\delta,\infty)}$ in Skorokhod topology.

Note that we can not exclude the existence of different limit points $(Y,\mu)$. Imagine, for instance, that~$\partial K$ contains part of a solution curve of the gradient flow~\eqref{eq:constr-GF}; then one can easily understand how for some $n$ the hitting time $\mu_n(K)$ may be triggered earlier than for others. The uniqueness argument above, however, shows that as long as $Y(t)\in K^\circ$, i.e.\ $t< \tau$, all limit points $Y$ coincide. 
\end{remark}

\begin{remark}[Skorokhod and locally-uniform convergence]
	\label{rem:Skorokhod-loc-unif}
Convergence to a continuous process in Skorokhod topology implies uniform convergence on compact time intervals. Moreover, convergence in distribution to a deterministic object implies convergence in probability, so for any $\e> 0$ we have
\[
\lim_{n\to \infty}\bbP\biggl[\sup_{0\leq s\leq t} \bigl\lvert Y^{t \wedge\mu_n(K)}_n(s) -Y^{t \wedge\mu(K)}(s) \bigr\rvert > \e\biggr] \to 0. %\qedhere
\]
\end{remark}

\begin{remark}[Convergence of $Z_n$]
	In the proof we also show that the sequence of noise processes $Z_n$ converges to a deterministic process. This behaviour corresponds to the case in which the total variation term in Assumption~\ref{assum:4.2} dominates the quadratic variation.
\end{remark}

\begin{proof}[Proof of Theorem~\ref{th:main}]
The proof consists of two parts: the first part is an application of Theorem~\ref{th:Katzenberg}, which leads to the characterization~\eqref{eq:th:Katzenberg:equation-for-Y}. In the second step we convert that equation to a more explicit form, by giving an explicit characterization of the limit process~$Z$.

For both parts it is convenient to collect a number of properties of the process $Z_n$. We set 
\begin{subequations}\label{seq:Lem45}
\begin{align}
\Delta \sfZ_n(w,\eta) &:= \alpha_n \bigl(\nabla_w \hat{L}(w, 0) - \nabla_w \hat{L}(w, \eta)\bigr)\,,\quad \text{so that}\quad 
\Delta Z_n(\alpha_n \sigma_n^2 k)
= \Delta\sfZ_n(W_n(\alpha_n\sigma_n^2 k), \eta_k^n)\,;\\
\Delta \sfF_n(w) &:= \Expectation_{\eta\sim\rho(\sigma_n)} \pra*{\Delta\sfZ_n(w,\eta)}\, ; \label{eq:def:DeltaFn} \\
\Reg(w) &:= \frac12 \Delta_\eta \hat L(w,0)\, .
\end{align}
\end{subequations}
Similarly to $Z_n$ we assemble the jumps $\Delta \sfF_n$ into a process with jumps at times separated by~$\alpha_n\sigma_n^2$, 
\[
F_n(t) := \sum_{s\leq t} \Delta F_n(s) := 
\sum_{k\leq \frac{t}{\alpha_n\sigma_n^2}} \Delta \sfF_n(W_n(\alpha_n\sigma_n^2 k)).
\]
\begin{lemma}[Properties of $Z_n$ and $F_n$]
	\label{lem:props-Zn}
Let the compact set $K\subset \R^m$ and the sequences $
\alpha_n$, $\sigma_n$ be as in Theorem~\ref{th:main}. Let $T>0$. Let $\hat K = K + \overline {B(0, C)}$, where $C$ is the corresponding constant in Proposition~\ref{prop:exp:convergence}. Then $W_n(t) \in \hat K$ for all $t\leq T\wedge \mu_n(K)$ and
\begin{align}
&\sup_{t\leq T\wedge \mu_n(K)} \abs*{\Delta Z_n(t)} \Rightarrow 0 \qquad \text{as }n\to\infty\,;\label{lem:props-Zn:Ass39}\\
&\sup_{w\in \hat K} \abs*{\Delta\sfF_n(w) +\alpha_n\sigma_n^2 \nabla \Reg(w)} = o(\alpha_n\sigma_n^2)
\qquad \text{as }n\to\infty\, ;\label{lem:props-Zn:Fn-Reg}\\
&\Expectation_{\eta\sim\rho(\sigma_n)} \pra[\Big]{\sup_{w\in \hat K} \bra*{\Delta \sfZ_n(w,\eta)}^2}+ \sup_{w\in \hat K} \bra*{\Delta \sfF_n(w)}^2 = o(\alpha_n\sigma_n^2)\qquad \text{as }n\to\infty\, .
\label{lem:props-Zn:quadratic-variation}
\end{align}
\end{lemma}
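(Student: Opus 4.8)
The plan is to establish the containment $W_n(t)\in\hat K$ first, then derive a single deterministic bound on the jump sizes, and finally read off the three estimates~\eqref{lem:props-Zn:Ass39}--\eqref{lem:props-Zn:quadratic-variation} from it together with the moment assumptions. For the containment, I would use the shift identity together with Proposition~\ref{prop:exp:convergence}: writing $W_n(t)=Y_n(t)+\phi(W_n(0),A_n(t))-\Phi(W_n(0))$ and using $A_n(t)\ge0$, the proposition gives $|\phi(W_n(0),A_n(t))-\Phi(W_n(0))|\le C e^{-\beta A_n(t)}\le C$; since $Y_n(t)\in K$ for all $t\le T\wedge\mu_n(K)$ by the definition of the exit time, this forces $W_n(t)\in K+\overline{B(0,C)}=\hat K$. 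The second preliminary step is a Taylor bound in $\eta$: expanding $\eta\mapsto\nabla_w\hat L(w,\eta)$ to first order with a second-order integral remainder and using that the two compatibility bounds in Assumption~\ref{ass:hatL} force $|\nabla_w\nabla_\eta^2\hat L(w,\eta)|\lesssim 1+|\eta|^p$ uniformly on $\hat K$, I obtain, uniformly in $w\in\hat K$,
\[
\bigl|\nabla_w\hat L(w,0)-\nabla_w\hat L(w,\eta)\bigr|\lesssim |\eta|+|\eta|^2+|\eta|^{p+2}.
\]

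For~\eqref{lem:props-Zn:Ass39}, the jumps of $Z_n$ occur on the grid and equal $\Delta\sfZ_n(W_n,\eta^n_k)$, so the displayed bound yields $\sup_{t\le T\wedge\mu_n(K)}|\Delta Z_n(t)|\lesssim\max_{k\le T/\alpha_n\sigma_n^2}\alpha_n(|\eta^n_k|+|\eta^n_k|^2+|\eta^n_k|^{p+2})$. Since $|\eta|^j\le 1+|\eta|^{p+2}$ for $j\le p+2$, this is controlled by $\alpha_n+\max_k\alpha_n|\eta^n_k|^{p+2}$; the first term vanishes as $\alpha_n\to0$, and the maximum over $k\le T/\alpha_n\sigma_n^2$ tends to $0$ in probability by splitting the index range into $\lceil T\rceil$ blocks of length $1/\alpha_n\sigma_n^2$ and applying the speed assumption~\eqref{ass:speed} to each (the $\eta^n_k$ being i.i.d.).

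The heart of the argument is~\eqref{lem:props-Zn:Fn-Reg}. Here I would expand $\nabla_w\hat L(w,\eta)$ to second order in $\eta$ and take the expectation $\Expectation_{\eta\sim\rho(\sigma_n)}$. By~\eqref{eq:ass:noise} we have $\Expectation\eta=0$, and by independence of the components $\Expectation[\eta\otimes\eta]=\sigma_n^2 I$; hence the first-order term drops and the second-order term equals $\tfrac{\sigma_n^2}{2}\sum_{i}\partial_{\eta_i}^2\nabla_w\hat L(w,0)=\tfrac{\sigma_n^2}{2}\nabla_w\Delta_\eta\hat L(w,0)=\sigma_n^2\nabla\Reg(w)$. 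This gives $\Delta\sfF_n(w)+\alpha_n\sigma_n^2\nabla\Reg(w)=-\alpha_n\Expectation[R_2(w,\eta)]$, where $R_2$ is the second-order integral remainder. Estimating $R_2$ with the $\eta$-continuity bound~\eqref{ass:compat-L-Lhat-difference-in-etas} gives $|R_2|\lesssim C_2(|\eta|^3+|\eta|^{p+2})$, so $\sup_{w\in\hat K}|\Expectation R_2|\lesssim C_2(\M_3+\M_{p+2})$. If $C_2=0$ the remainder vanishes identically; if $C_2>0$, then~\eqref{eq:ass:moment-C2>0} gives $\M_3,\M_{p+2}=o(\sigma_n^2)$. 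In both cases $\sup_{w\in\hat K}|\Expectation R_2|=o(\sigma_n^2)$, which is precisely~\eqref{lem:props-Zn:Fn-Reg}.

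Finally, for~\eqref{lem:props-Zn:quadratic-variation} I would square the two preceding bounds. The first summand obeys $\Expectation_\eta[\sup_{w\in\hat K}|\Delta\sfZ_n(w,\eta)|^2]\lesssim\alpha_n^2(\M_2+\M_4+\M_{2(p+2)})$, and since $\M_2=d\sigma_n^2$ while $\M_4,\M_{2(p+2)}=O(\sigma_n^2)$ by~\eqref{eq:ass:moment}, this is $\lesssim\alpha_n^2\sigma_n^2$. The second summand is $\sup_{w\in\hat K}|\Delta\sfF_n(w)|^2\lesssim\alpha_n^2\sigma_n^4$ by the previous paragraph and the boundedness of $\nabla\Reg$ on $\hat K$. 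Both are $o(\alpha_n\sigma_n^2)$ because $\alpha_n\sigma_n^2\to0$. The recurring obstacle, and the reason the moment assumptions are indispensable, is that $\eta$ is unbounded: the Taylor remainders cannot be controlled by compactness of $\hat K$ alone, and one must pair the polynomial-in-$\eta$ derivative growth of Assumption~\ref{ass:hatL} with the moment decay~\eqref{eq:ass:moment}. The most delicate point is~\eqref{lem:props-Zn:Fn-Reg}, where an $o(\sigma_n^2)$ (not merely $O(\sigma_n^2)$) remainder bound is required, supplied exactly by the $C_2>0$ branch of~\eqref{eq:ass:moment-C2>0}.
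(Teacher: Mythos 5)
Your proof is correct and follows essentially the same route as the paper's: the same containment argument via the shifted process and Proposition~\ref{prop:exp:convergence}, the same Taylor expansion in $\eta$ with the integral remainder controlled through Assumption~\ref{ass:hatL} and the moment bounds~\eqref{eq:ass:moment}, the same cancellation of the second-order term against $\alpha_n\sigma_n^2\nabla\Reg$ using $\Expectation[\eta\otimes\eta]=\sigma_n^2 I$, and the same squaring argument for~\eqref{lem:props-Zn:quadratic-variation}. Your block-splitting step extending the speed assumption~\eqref{ass:speed} from $k\leq 1/\alpha_n\sigma_n^2$ to $k\leq T/\alpha_n\sigma_n^2$, and your explicit $C_2=0$ versus $C_2>0$ case distinction, are slightly more careful than the paper's presentation but do not constitute a different approach.
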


We prove this lemma below, and continue in the meantime with the proof of Theorem~\ref{th:main}. 
To apply Theorem~\ref{th:Katzenberg} we verify Assumptions~\ref{ass:Katzenberger:noise}, \ref{ass:Katzenberg:Zn}, and~\ref{assum:4.2}. Note that we fix the set $K$ for once and for all, and we show that the \emph{stopped} processes $Z_n^{\mu_n(K)}$ satisfy Assumptions~\ref{ass:Katzenberg:Zn} and~\ref{assum:4.2}. 

\medskip
\noindent
\emph{Part 1: Verification of the assumptions of Theorem~\ref{th:Katzenberg}.} 

\emph{Verification of Assumption~\ref{ass:Katzenberger:noise}.}
The condition~\eqref{ass:infinite:noise} on the integrator sequence $A_n(t) = \alpha_n\bigl\lfloor\frac{t}{\alpha_n\sigma_n^2}\bigr\rfloor$ is verified for $\sigma_n \to 0$ and for any $\delta > 0$ by the estimate
\[
    \inf_{0<t\leq T} \bigl(A_n(t+\delta) - A_n(t)\bigr) =   \inf_{0<t\leq T}\biggl( \alpha_n\biggl\lfloor\frac{t + \delta}{\alpha_n\sigma_n^2}\biggr\rfloor - \alpha_n\biggl\lfloor\frac{t}{\alpha_n\sigma_n^2}\biggr\rfloor \biggr)\geq \alpha_n\biggl\lfloor\frac{\delta}{\alpha_n\sigma_n^2}\biggr\rfloor \to \infty \,.
\]
Likewise, we verify the assumption~\eqref{ass:An:asympt:cont} by using at the same time $\alpha_n \to 0$ to deduce 
\[
     \sup_{t\geq 0}\bigl( A_n(t) - A_n(t-)\bigr)  = \alpha_n \to 0 \,.
\]
\noindent%
\emph{Verification of Assumption~\ref{ass:Katzenberg:Zn}.} Assumption~\ref{ass:Katzenberg:Zn} for the stopped process $Z_n^{\mu_n(K)}$ follows immediately from~\eqref{lem:props-Zn:Ass39}.

\medskip\noindent
\emph{Verification of Assumption~\ref{assum:4.2}.}
We use the martingale decomposition of $Z_n = M_n + F_n$, with martingale part 
\begin{align*}
M_n(t) &:= Z_n(t) - F_n(t) = \sum_{s\leq t} \bra[\big]{\Delta Z_n(s) - \Delta F_n(s)}\\
&= \sum_{k\leq \frac{t}{\alpha_n\sigma_n^2}} 
\pra[\Big]{\Delta \sfZ_n(W_n(\alpha_n\sigma_n^2 k,\eta_k^n))
+ \Delta \sfF_n(W_n(\alpha_n\sigma_n^2 k))}.
\end{align*}
The expected quadratic variation of $M_n^{\mu_n(K)}$ is estimated by
\begin{align}
	\label{eq:conv:th:main:Mn}
\Expectation [M_n](t\wedge \mu_n(K)) &= \Expectation[Z_n-F_n](t\wedge \mu_n(K)) 
\leq 2\Expectation\pra[\big]{[Z_n] + [F_n]}(t\wedge \mu_n(K))\\
&= 2\;\Expectation \sum_{k\leq \frac{t\wedge \mu_n(K)}{\alpha_n\sigma_n^2}} \pra[\Big]{\Delta \sfZ_n(W_n(\alpha_n\sigma_n^2 k,\eta_k^n))^2
+ \Delta \sfF_n(W_n(\alpha_n\sigma_n^2 k))^2} \notag\\
&\leftstackrel{\eqref{lem:props-Zn:quadratic-variation}} \longrightarrow 0 \qquad \text{as }n\to \infty\, . \notag
\end{align}
This proves the uniform integrability of $[M_n](t\wedge \mu_n(K))$ in Assumption~\ref{assum:4.2} (where we can take $\tau^k_n = +\infty$ for all $k$ and $n$). 

We next show condition~\eqref{eq:ass:4.2:VarVanish}. From~\eqref{lem:props-Zn:Fn-Reg} and the regularity of $\Reg$ we have 
\[
\sup_{w\in \hat K}\abs[\big]{\Delta \sfF_n(w)}
\leq \sup_{w\in \hat K}\abs[\big]{\Delta \sfF_n(w) + \alpha_n\sigma_n^2 \nabla \Reg(w)}+ \alpha_n\sigma_n^2 \sup_{w\in \hat K} \abs[\big]{\nabla \Reg(w)}
\lesssim \alpha_n\sigma_n^2.
\]
Therefore, since $W_n(t) \in \hat K$ as defined in Lemma \ref{lem:props-Zn},
\begin{align}
V_{t+\gamma}(F_n^{\mu_n(K)}) - V_t (F_n^{\mu_n(K)}) 
&= \sum_{\frac{t\wedge \mu_n(K)}{\alpha_n\sigma_n^2} < k \leq \frac{(t+\gamma)\wedge \mu_n(K)}{\alpha_n\sigma_n^2}} \abs[\big]{\Delta \sfF_n(W_n(\alpha_n\sigma_n^2k))} \ \lesssim \ \gamma \, ,
\label{est:Vtgamma-Vt-nondeg}
\end{align}
and the property~\eqref{eq:ass:4.2:VarVanish} follows. Finally, the estimate~\eqref{est:Vtgamma-Vt-nondeg} for $t=0$ implies that $V_t (F_n^{\mu_n(K)})$ is almost surely bounded uniformly in $n$, thus establishing the unform integrability condition on $V_t(F_n^{\mu_n(K)})$ in Assumption~\ref{assum:4.2} (again with $\tau_n^k = +\infty$).

\medskip

Having verified the three assumptions, we apply Theorem~\ref{th:Katzenberg} and conclude that the triplet $(Y_n^{\mu_n(K)}, Z_n^{\mu_n(K)}, \mu_n(K))$ converges along a subsequence to a limit that satisfies the equation~\eqref{eq:th:Katzenberg:equation-for-Y}. We now show that that equation reduces to~\eqref{eqdef:th:main:limiting-dynamics}.

\medskip
\noindent
\emph{Part 2: Characterization of $Z$ and identification of the limiting dynamics.}

To recover the limiting dynamics we study the limit of the sequence of processes $Z_n$. 
Note that Theorem \ref{th:Katzenberg} establishes joint convergence of triplets $(Y_n^{\mu_n}, Z_n^{\mu_n}, \mu_n)$ but does not state any explicit result on the process $W_n$ in~\eqref{eq:def:general:Wn} and its limit. Even though $Z_n$ is defined through $W_n$ but not $Y_n$ in~\eqref{eq:def:general:Yn}, the initial jump does not play a role for the limiting behaviour of $Z_n$. 
Hence, we introduce an intermediate process defined in terms of $Y_n$ by
\begin{equation}\label{eq:def:ZY}
Z^Y_n(t) :=  \sum_{k\leq \frac{t}{\alpha_n\sigma_n^2}}\alpha_n\bigl(\nabla_w \hat{L}(Y_n(\alpha_n\sigma_n^2 k), 0) - \nabla_w \hat{L}(Y_n(\alpha_n\sigma_n^2 k), \eta^n_k)\bigr) \,.
\end{equation}
We then estimate
\begin{align}
	\MoveEqLeft\sup_{0<t\leq T\wedge\mu_n(K)} \left|Z_n(t) + \int_0^{t}  \nabla \Reg(Y_n(s)) \dx s\right| \notag \\
	&\leq \sup_{0<t\leq T\wedge\mu_n(K)} \left|Z_n(t) - Z^Y_n(t)\right| + \sup_{0<t\leq T\wedge\mu_n(K)}\biggl|Z^Y_n(t) + \int_0^{t}  \nabla \Reg(Y_n(s)) \dx s \biggr|. \label{eq:Zsplit}
\end{align}

We first show that the second term vanishes in probability. Note that $Y_n$ is again a pure-jump process, so that 
\[
\int_0^{t}  \nabla \Reg(Y_n(s)) \dx s 
= \alpha_n\sigma_n^2 \sum_{k\leq \frac{t}{\alpha_n\sigma_n^2}} \nabla \Reg(Y_n(\alpha_n\sigma_n^2 k))\, .
\]
Writing the corresponding martingale
\[
M_n^Y(t) = Z_n^Y(t) - F_n^Y(t) = \sum_{k\leq \frac{t}{\alpha_n\sigma_n^2}} 
  \bra[\big]{\Delta \sfZ_n(Y_n(\alpha_n\sigma_n^2 k,\eta_k^n)) + \Delta \sfF_n(Y_n(\alpha_n\sigma_n^2 k))}\,,
\]
we have by the same argument as in~\eqref{eq:conv:th:main:Mn} that 
\[
\Expectation [M_n^{Y,\mu_n(K)}](t) \longrightarrow 0 \qquad \text{as }n\to\infty\,,
\]
and by Doob's inequality that for all $t$
\[
\Expectation\Bigl[\sup_{0\leq s\leq t}\lvert M_n^{Y,\mu_n(K)}\rvert^2 (s) \Bigr]
\leq 4\,\Expectation\bigl[[M_n^{Y,\mu_n(K)}](t)\bigr] \longrightarrow 0.
\]
We then estimate
\begin{align*}
\MoveEqLeft\Expectation \sup_{0<t\leq T\wedge\mu_n(K)}\biggl|Z^Y_n(t) + \int_0^{t}  \nabla \Reg(Y_n(s)) \dx s \biggr|\\
&\leq \Expectation \sup_{0<t\leq T\wedge\mu_n(K)}|M_n(t)|
+ \Expectation \sup_{0<t\leq T\wedge\mu_n(K)}\biggl| F_n^Y(t) + \int_0^{t}  \nabla \Reg(Y_n(s)) \dx s \biggr|\\
&\leq o(1) + \Expectation \sum_{k\leq \frac{T\wedge\mu_n(K)}{\alpha_n\sigma_n^2}}
  \abs[\Big]{\Delta \sfF_n(Y_n(\alpha_n\sigma_n^2 k)) + \alpha_n\sigma_n^2\nabla \Reg(Y_n(\alpha_n\sigma_n^2 k)) }, 
\end{align*}
and the right-hand side vanishes by~\eqref{lem:props-Zn:Fn-Reg}.

For the first term in the splitting~\eqref{eq:Zsplit}, we get
\begin{align}
	\left|Z_n(t) - Z^Y_n(t)\right|
	&= \alpha_n\biggl|\sum_{k\leq \frac{t}{\alpha_n\sigma_n^2}}
	\bigl(\nabla_w \hat{L}(W_n(\alpha_n\sigma_n^2 k), 0) - \nabla_w \hat{L}(W_n(\alpha_n\sigma_n^2 k), \eta^n_k)\bigr) \notag \\
	&\qquad\qquad\qquad\quad - \bigl(\nabla_w \hat{L}(Y_n(\alpha_n\sigma_n^2 k), 0) - \nabla_w \hat{L}(Y_n(\alpha_n\sigma_n^2 k), \eta^n_k)\bigr)  \biggr| \notag \\
	&\leq \alpha_n\,\biggl|\sum_{k\leq \frac{t}{\alpha_n\sigma_n^2}}\Big(\nabla_w \nabla_\eta\hat{L}(W_n(\alpha_n\sigma_n^2 k), 0) 
	- \nabla_w \nabla_\eta\hat{L}(Y_n(\alpha_n\sigma_n^2 k), 0) \Big)\eta^n_k\biggr| \label{eq:est:ZZY}\\
	&\quad + \alpha_n \,\biggl| \sum_{k\leq \frac{t}{\alpha_n\sigma_n^2}} \sum_{i,j=1}^d g_{i,j}(W^n_k, Y^n_k, \eta_k^n) \eta_{k,i}^n \eta_{k,j}^n \biggr| \,,  \label{eq:est:gZZY}
\end{align}
where we use Taylor's theorem to write the remainder term as 
\begin{equation*}
	g_{ij}(W^n_k, Y^n_k, \eta_k^n) := \int_0^{1}(1-r) \partial_{\eta_i\eta_j}\bigl(\nabla_w \hat{L}(W_n(\alpha_n\sigma_n^2 k), r\eta_k^n) - \nabla_w \hat{L}(Y_n(\alpha_n\sigma_n^2 k), r\eta_k^n) \bigr)   \dx r \,.
\end{equation*}
Recall that the definition of $Y_n$ in~\eqref{eq:def:general:Yn} implies
\[
W_n(t)  -Y_n(t)  = \phi\left(W_n(0), A_n(t)\right)-\Phi\bigl(W_n(0)\bigr) \,.
\]
Then for the first term~\eqref{eq:est:ZZY} we have
\begin{align*}
	\MoveEqLeft  \alpha_n\sum_{k\leq \frac{t}{\alpha_n\sigma_n^2}}
	\Bigl(\nabla_w \nabla_\eta\hat{L}(W_n(\alpha_n\sigma_n^2 k), 0) 
	- \nabla_w \nabla_\eta\hat{L}(Y_n(\alpha_n\sigma_n^2 k), 0) \Bigr)\eta^n_k \\
	&= \sqrt{\alpha_n}\int_0^t \Bigl(\nabla_w \nabla_\eta\hat{L}(W_n(s), 0) 
	- \nabla_w \nabla_\eta\hat{L}(Y_n(s), 0) \Bigr)\dx h_n.
\end{align*}
Here $h_n$ is defined as
\[
h_n(t) := \sqrt{\alpha_n}\sum_{k\leq \frac{t}{\alpha_n\sigma_n^2}} \eta_k^n
\]
and $h_n$ converges to a standard Brownian motion by the Functional Central Limit Theorem~\cite[Th.~8.2]{Billingsley1968}. 
Since $ \nabla_w \nabla_\eta\hat{L}(w, 0)$ is bounded on $\hat K$, the characterization \cite[Prop.~4.4]{Katzenberger91} then implies that  the  term~\eqref{eq:est:ZZY} converges to zero in probability.

For the second term~\eqref{eq:est:gZZY} we use Assumption~\ref{ass:hatL} and obtain
\begin{align*}
	\bigl| g_{ij}(W^n_k, Y^n_k, \eta_k^n) \bigr| &= \biggl|\int_0^{1} (1-r)\partial_{\eta_i \eta_j}\bigl(\nabla_w \hat{L}(W_n(\alpha_n\sigma_n^2 k), r\eta_k^n)  - \nabla_w \hat{L}(Y_n(\alpha_n\sigma_n^2 k), r\eta_k^n) \bigr)\dx r\biggr| \\
	&\leq \sup_{0\leq r\leq 1} \Bigl|\nabla_{\eta}^2\bigl(\nabla_w \hat{L}(W_n(\alpha_n\sigma_n^2 k), r\eta_k^n) - \nabla_w \hat{L}(Y_n(\alpha_n\sigma_n^2 k), r\eta_k^n) \bigr)\Bigr| \\
	&\lesssim \bigl(1+ \lvert\eta^n_k\rvert^p\bigr)\bigl\lvert W_n(\alpha_n\sigma_n^2 k) - Y_n(\alpha_n\sigma_n^2 k)\bigr\rvert .
\end{align*}
By applying this bound to the residual in estimate~\eqref{eq:est:gZZY}, we obtain
\begin{align*}
	\alpha_n \biggl| \sum_{k\leq \frac{t}{\alpha_n\sigma_n^2}}  \sum_{i,j=1}^d g_{ij}(W^n_k, Y^n_k, \eta_k^n) \eta_{k,i}^n \eta_{k,j}^n \biggr| 
	&\lesssim \alpha_n  \ \sum_{\mathclap{k\leq \frac{t}{\alpha_n\sigma_n^2}}}\  \bigl(\lvert\eta^n_k\rvert^2+ \lvert\eta^n_k\rvert^{p+2}\bigr) \bigl\lvert W_n(\alpha_n\sigma_n^2 k) - Y_n(\alpha_n\sigma_n^2 k)\bigr\rvert. 
\end{align*}

By an application of Proposition~\ref{prop:exp:convergence}, we have the exponential convergence
\begin{equation}
	\label{ineq:exponential-convergence-W-Y}
|W_n(t)  -Y_n(t)| = \bigl| \phi\bigl(W_n(0), A_n(t)\bigr) - \Phi\bigl(W_n(0)\bigr) \bigr| \lesssim e^{-\beta A_n(t)} \,.
\end{equation}

Hence, we conclude
\begin{align*}
	\MoveEqLeft \Expectation\biggl[ \alpha_n \  \sum_{\mathclap{k\leq \frac{t}{\alpha_n\sigma_n^2}}}\  \bigl(\lvert\eta^n_k\vert^2+ \lvert\eta^n_k\rvert^{p+2}\bigr)e^{-\beta A_n(t)}\biggr]
	=
	\alpha_n \ \sum_{\mathclap{k\leq \frac{t}{\alpha_n\sigma_n^2}}}\ 
	e^{-\beta A_n(t)}\bbE\bigl[\vert\eta^n_k\vert^2+ \vert\eta^n_k\vert^{p+2}\bigr]  \\
	&\leq \bigl(\sigma_n^2 +\M_{p+2}(\sigma_n)\bigr) \int_0^t e^{-\beta A_n(t)}\dx A_n(t)
	% = C(\sigma_n^2 +\M_{p+2})\frac{1}{\beta}(1- e^{-\beta A_n(t)}) \\
	\longrightarrow 0 \qquad \text{as }n\to\infty \,,
\end{align*}
because $M_{p+2}(\sigma_n) = O(\sigma_n^2)$ by Assumption~\ref{ass:hatL} and $\sigma_n \to 0$. So we conclude
\[
\eqref{eq:est:gZZY}\;
\lesssim\;
\alpha_n\ \sum_{\mathclap{k\leq \frac{t}{\alpha_n\sigma_n^2}}}\ \bigl(\lvert\eta^n_k\rvert^2+ \lvert\eta^n_k\rvert^{p+2}\bigr) \bigl\lvert W_n(\alpha_n\sigma_n^2 k) - Y_n(\alpha_n\sigma_n^2 k)\bigr\rvert \Rightarrow 0.
\]
Hence, we have shown that 
\[
Z_n\Rightarrow Z, \qquad\text{where}\qquad Z(t) := -\int_0^t \nabla\Reg(Y(s))\dx s.
\]
Note that the limit $Z$ is a continuous process of finite variation and therefore $[Z,Z]=0$. Combining this with~\eqref{eq:lem:derivatives-of-Phi:first-derivative} we find that~\eqref{eq:th:Katzenberg:equation-for-Y} reduces to~\eqref{eqdef:th:main:limiting-dynamics}. Since the right-hand side of~\eqref{eqdef:th:main:limiting-dynamics} is continuous in~$t$ for any~$Y$, the process~$Y$ is a continuous function of~$t$. By the definition~\eqref{eq:def:general:Yn} we have  $Y_n(0) = \Phi(W_n(0))\Rightarrow \Phi(W_0)$, implying that $Y(0) = \Phi(W_0)$.

This concludes the proof of Theorem~\ref{th:main}.
\end{proof}
We still owe the reader the proof of Lemma~\ref{lem:props-Zn}.
\begin{proof}[Proof of Lemma~\ref{lem:props-Zn}]
To begin with, note that the stopping time $\mu_n(K)$ restricts $Y_n(t\wedge \mu_n(K))$ to the compact set $K$, which combined with Proposition~\ref{prop:exp:convergence} guarantees that $W_n(t\wedge \mu_n(K))$ is restricted to the larger but still compact set $\hat K$. We first prove~\eqref{lem:props-Zn:Ass39}. 
We use Taylor expansion in $\eta$ to obtain 
\begin{align}
\MoveEqLeft \nabla_w \hat{L}(w, 0) - \nabla_w \hat{L}(w, \eta)  = -\nabla_{w}\partial_\eta \hat{L}(w, 0)[\eta] 
- \frac12  \nabla_{w}\partial^2_{\eta}\hat{L}(w, 0)[\eta, \eta]
- R_n(w,\eta) \,,
 \label{eq:taylor1} 
\end{align}
where the error term $R_n$ is given by 
\begin{equation}\label{eq:def:TaylorRemainder}
	R_n(w,\eta):= \int_0^1(1-\xi) \Bigl(\partial^2_{\eta} \nabla_{w}\hat{L}(w,\xi\eta)[\eta, \eta]- \partial^2_\eta\nabla_{w}\hat{L}(w, 0)[\eta, \eta]\Bigr) \dx\xi \,.
\end{equation}
Using the compactness of $\hat K$ and Assumption~\ref{ass:hatL}, we then bound
\begin{align}
\!\!\sup_{0<t\leq T \wedge \mu_n(K)}\!\! |\Delta Z_n(t)|  &\leq \sup_{k\leq \frac{T \wedge \mu_n(K)}{\alpha_n\sigma_n^2}} \alpha_n|\nabla_{w}\partial_\eta\hat{L}(W_n(\alpha_n \sigma_n^2k), 0)[\eta_k^n] |  \notag\\
&\quad +\sup_{k\leq \frac{T \wedge \mu_n(K)}{\alpha_n\sigma_n^2}} \alpha_n\abs*{\frac12  \nabla_{w}\partial^2_{\eta}\hat{L}(W_n(\alpha_n \sigma_n^2k), 0)[\eta_k^n, \eta_k^n] }\notag\\
&\quad + \sup_{k\leq \frac{T \wedge \mu_n(K)}{\alpha_n\sigma_n^2}} \alpha_n\bigl|R_n(W_n(\alpha_n \sigma_n^2k),\eta_k^n) \bigr| \notag\\
&\lesssim \!\!\sup_{k\leq \frac{T \wedge \mu_n(K)}{\alpha_n\sigma_n^2}}\!\! \alpha_n |\eta_k^n | 
+ \!\!\sup_{k\leq \frac{T \wedge \mu_n(K)}{\alpha_n\sigma_n^2}}\!\! \alpha_n |\eta_k^n |^2
+ \!\!\sup_{k\leq \frac{T \wedge \mu_n(K)}{\alpha_n\sigma_n^2}}\!\! \alpha_n \bigl(|\eta_k^n|^3 + |\eta_k^n|^{p+2}\bigr) \,.
\label{est:lem-props-Zn-|Zn|}
\end{align}
Thus, by the assumption~\eqref{ass:speed}, we conclude with the estimate
\[
\sup_{0<t\leq T \wedge \mu_n(K)} |\Delta Z_n(t)| \lesssim \alpha_n \biggl\{ \;1+  \sup_{k\leq \frac{T \wedge \mu_n(K)}{\alpha_n\sigma_n^2}} | \eta_k^n|^{p+2} \biggr\} \Rightarrow 0.
\]

\medskip

We next prove~\eqref{lem:props-Zn:Fn-Reg}. First note that for any $w\in \hat K$
\begin{equation}
	\label{est:l:props-Zn:C2Mp+2}
\Expectation_\eta \abs[\big]{R_n(w,\eta)}
\stackrel{\eqref{ass:compat-L-Lhat-difference-in-etas}}\lesssim 
\Expectation_\eta  C_2 \int_0^1(1-\xi) \bra[\big]{|\eta|^3 + |\eta|^{p+2}}\dx \xi  =\frac12 C_2 \bra[\big]{\M_3 + \M_{p+2}}
\stackrel{\eqref{eq:ass:moment}}= o(\sigma_n^2) \qquad \text{as }n\to \infty\,.
\end{equation}
We then write, using the independence of the coordinates of $\eta$, 
\begin{align*}
\abs[\big]{\Delta \sfF_n(w) + \alpha_n\sigma_n^2 \nabla \Reg(w)}
&= \alpha_n \abs*{\Expectation_\eta \pra[\big]{\nabla_w \hat{L}(w, 0) - \nabla_w \hat{L}(w, \eta)} + \frac12 \sigma_n^2\nabla_w \Delta_\eta \hat L(w,0)}\\
&= \alpha_n \bigg|\underbrace{- \frac12\Expectation_\eta \pra*{  \nabla_{w}\partial^2_{\eta}\hat{L}(w, 0)[\eta, \eta]}
 + \frac12 \sigma_n^2\nabla_w \Delta_\eta \hat L(w,0)}_{=0} - 
 \Expectation_\eta R_n(w,\eta)\bigg|\\
 &= o(\alpha_n\sigma_n^2) \qquad \text{as }n\to \infty. 
\end{align*}
Since the estimates are independent of $w\in \hat K$, this proves~\eqref{lem:props-Zn:Fn-Reg}.

We finally show~\eqref{lem:props-Zn:quadratic-variation}. Since $\nabla \Reg$ is continuous on $\hat K$, it is bounded, and therefore 
\begin{align*}
\abs[\big]{\Delta \sfF_n(w)} ^2
&\leq 2 \abs[\big]{\Delta \sfF_n(w) + \alpha_n\sigma_n^2 \nabla\Reg(w)} ^2
+ 2 \alpha_n^2\sigma_n^4 \abs[\big]{\nabla\Reg(w)} ^2
\stackrel{\eqref{lem:props-Zn:Fn-Reg}}= o(\alpha_n\sigma_n^2)\,.
\end{align*}
For $\Delta \sfZ_n$ we use a similar calculation as in~\eqref{est:lem-props-Zn-|Zn|} above and estimate
\[
\Expectation \sup_{w\in K}\abs[\big]{\Delta Z_n(w,\eta)}^2
\lesssim \alpha_n^2 \Expectation \pra[\Big]{|\eta|^2 + |\eta|^4 + |\eta|^{2p+4}} = \alpha_n^2 (\M_2 + \M_4 + \M_{2p+4})
= o(\alpha_n\sigma_n^2) \quad\text{ as }n\to\infty\,.
\]
This concludes the proof of Lemma~\ref{lem:props-Zn}.
\end{proof}

\subsection{Degenerate case}
\label{ss:degenerate-case}
As discussed in the Introduction, for some forms of noisy gradient descent the limiting dynamics of $W_n$ characterized by Theorem~\ref{th:main} is trivial. Examples of this are label noise, minibatching, and stochastic gradient Langevin  descent; we discuss these in more detail in Section~\ref{sec:examples}. In this section we present the second main convergence result, in the more strongly accelerated regime, which applies to functions $\hat L$ for which the evolution~\eqref{eq:constr-GF} is trivial. 

Label noise does not only have a trivial limiting dynamics according to Theorem~\ref{th:main},  but it happens to have an even more specific structure which allows to prove convergence under milder assumptions. Namely, label noise has a loss function that is polynomial in $\eta$, and this  both significantly simplifies the calculations and does not require the variance of the noise to vanish. It is enough to consider the limit of infinite-small step size $\alpha_n\to 0$ with  $\sigma_n$ either constant or converging to some $\sigma_0\geq 0$. The same structure is shared by a number of other degenerate functions $\hat L$ that we study in Section~\ref{sec:examples}.

To characterize this structure we make a more specific assumption on the noise-injected loss function $\hat{L}(w, \eta)$ that augments Definition~\ref{def:noiseLoss}.
\begin{definition}[Degenerate quadratic noise-injected loss]\label{def:deg:noiseLoss}
For a given \emph{loss function} $L\in C^3\bigr(\R^m,[0,\infty)\bigl)$, a noise-injected loss function $\hat L \in C^3\bigl(\R^{m+d},[0,\infty)\bigr)$ in the sense of Definition~\ref{def:noiseLoss} is called \emph{degenerate quadratic} provided that
\begin{equation}
\label{eq:Ldeg}
\hat{L}(w, \eta) = L(w) + f(w)\cdot\eta +  \tfrac12  H(w) : (\eta \otimes \eta)  + g(\eta)
\end{equation}
for some $f\in C^3(\R^m,\R^d)$ and a field of quadratic forms $H \in C^3(\R^m, \R^{d\times d}_{\text{sym}})$ with zero diagonal entries $H_{ii}(w) =0$ for all $w\in \R^m$, where $H: (\eta\otimes \eta ) = \sum_{i,j} H_{ij} \eta_{i} \eta_{j}$.

Moreover, $g\in C^3(\R^d,\R)$ with $g(0)=0$.
\end{definition}
For any $\hat L$ of the form given in Definition~\ref{def:deg:noiseLoss} we obtain $\nabla_w\Delta_\eta \hat{L}(w, \eta)=0$, indeed leading to trivial dynamics on the time scale $\alpha_n \sigma_n^2$.
We define the  corresponding sequence of integrators $\hat{A}_n(t) = \alpha_n\lfloor\frac{t}{\alpha_n^2\sigma_n^2}\rfloor$ (note the difference with~\eqref{eqdef:An} in the power of $\alpha_n$ in the denominator).
The corresponding accelerated process of noisy gradient descent~\eqref{eq:system:w} are then of the form
\begin{subequations}\label{eq:subequs:hatW}
\begin{equation}\label{eq:def:hatW}
\hat W_n(t) = w^n_{\bigl\lfloor \frac t{\alpha_n^2\sigma_n^2}\bigr\rfloor} \,.
\end{equation}
Due to the specific structure of the noise-injected loss from Definition~\ref{def:deg:noiseLoss}, we can bring the process into the form~\eqref{eq:ds-per} (compare also with~\eqref{eq:def:W} in the first case), and obtain a simplified expression for the noise process $\tilde{Z}_n$ given by
\begin{align}
d\hat{W}_n &= -\nabla L(\hat{W}_n)\dx\hat{A}_n + \dx\hat{Z}_n, \\
\hat{Z}_n(t) &= \sum_{s\leq t} \Delta \hat{Z}_n(s) =  \alpha_n \ \sum_{\mathclap{k\leq \frac{t}{\alpha_n^2 \sigma_n^2}}}\  \bigl( \nabla_w f(\hat W_n(\alpha_n^2\sigma_n^2 k))\cdot \eta_k^n + \tfrac12 \nabla_wH(\hat W_n(\alpha_n^2\sigma_n^2 k))[\eta_k^n, \eta_k^n] \bigr) \, , \\
\eta^n_{k,i} &\sim  \rho(\sigma_0).
\end{align}
\end{subequations}
We formulate the analogue of Theorem \ref{th:main} for the rescaled process defined by~\eqref{eq:subequs:hatW}.
\begin{theorem}[Main convergence theorem in the degenerate case]
\label{th:main:deg2}
Assume that $L, \hat{L}$ are $\calC^3$, $\hat{L}$ is of the form~\eqref{eq:Ldeg}, and $L$ satisfies Assumption~\ref{ass:LossMfd}. Let $\hat W_n(0) \Rightarrow \hat W(0) \in U_{\Gamma}$ for some locally attractive neighbourhood $U_\Gamma$ of $\Gamma$.
Fix a sequence $(\sigma_n>0)_{n\in\bbN}$ such that $\sigma_n \to \sigma_0\geq0$ as $n\to 0$. 
Let $(\rho_n\in \ProbMeas(\R^d))_{n\in \bbN}$ satisfy $\Expectation_{\eta\sim \rho_n}|\eta|^2 = \sigma_n^2$, and let $\alpha_n \to 0$ such that 
\begin{equation}\label{ass:speed:deg1}
	\alpha_n \sup_{k \leq 1/\alpha_n^2\sigma_n^2}|\eta^n_k|^2 \Rightarrow 0 \qquad\text{for i.i.d. } \eta_{k,i}^n\sim \rho_n \text{ as } n\to \infty \,.
\end{equation}
Let $\hat{W}_n$ be a solution of~\eqref{eq:subequs:hatW} and define
\[
\hat{Y}_n(t)=\hat{W}_n(t)-\phi\bigl(\hat{W}_n(0), \hat{A}_n(t)\bigr)+\Phi\bigl(\hat{W}_n(0)\bigr) \,.
\]
For compact $K \subset U_{\Gamma}$, define the stopping time 
\[
\hat{\mu}_n(K)=\inf \left\{t \geq 0 \mid \hat{Y}_n(t-) \notin K^{\circ} or \ \hat{Y}_n(t) \notin K^{\circ} \right\} .
\]
Then for any $K \subset U_{\Gamma}$ the sequence $(\hat{Y}^{\hat{\mu}_n(K)}_n, \hat{\mu}_n)$ is relatively compact in Skorokhod topology. 
Moreover, any limit point $(\hat{Y}, \hat{\mu})$ of $(\hat{Y}_n, \hat{\mu}_n)$ satisfies $Y(t) \in \Gamma$ a.s. for any $t$ and solves
\begin{equation}
	\label{eq:th:main2:constrained-SDE}
\hat{Y}(t)=  \hat{Y}(0) + \int_0^{t \wedge \hat{\mu}} \!\!\!\!\! P_{T_{\hat Y(s)}\Gamma}\bra*{ \nabla_w f(\hat Y(s)) \cdot \dx b_s +
	\sigma_0 \nabla_w H(\hat Y(s)) : \dx B_s }
+\frac{1}{2}  \int_0^{t \wedge \hat{\mu}} \!\!\!\! 
\partial^2 \Phi(\hat{Y}) [\varSigma(\hat{Y}(s))] \dx s \,,
\end{equation}
where $(b_s)_{s\geq 0}$ is a $d$-dimensional Brownian motion and $B_{s,k,\ell}$ for $1\leq k< \ell\leq d$ are Brownian motions with $B_{s,\ell,k}:=B_{s,k,\ell}$ and
\[
\varSigma(w)_{ij} := \partial_{w_i} f(w) \cdot \partial_{w_j} f(w) + \sigma_0^2 \partial_{w_i} H(w) : \partial_{w_j} H(w)  \,, \qquad\text{for } 1\leq i,j,\leq d \,.
\]
Here $P_{T_y\Gamma}$ is the orthogonal projection onto the tangent space $T_y\Gamma$ of $\Gamma$ at $y$. 
In addition, 
\begin{equation}
	\label{ineq:th:main2:hat-mu-and-tau}
	\hat \mu \geq  \inf\set[\big]{t\geq 0\mid \hat Y(t)\notin  K^\circ} \qquad \text{almost surely.}	
\end{equation}
\end{theorem}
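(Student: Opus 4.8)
The plan is to mirror the two-part structure of the proof of Theorem~\ref{th:main}. In Part~1 I would verify that the rescaled system~\eqref{eq:subequs:hatW} meets the hypotheses of Katzenberger's Theorem~\ref{th:Katzenberg}, which then delivers relative compactness of $\bigl(\hat Y_n^{\hat\mu_n(K)},\hat Z_n^{\hat\mu_n(K)},\hat\mu_n\bigr)$ and the abstract limit identity~\eqref{eq:th:Katzenberg:equation-for-Y}. In Part~2 I would identify the limit $\hat Z$ of the noise process explicitly and compute its quadratic variation, so that~\eqref{eq:th:Katzenberg:equation-for-Y} specializes to~\eqref{eq:th:main2:constrained-SDE}. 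The decisive structural difference from Theorem~\ref{th:main} is that, because the diagonal entries $H_{ii}$ vanish, the \emph{drift} of $\hat Z_n$ is identically zero while its \emph{quadratic variation} now survives at order one; this is precisely the reversal of roles between the total-variation and martingale parts anticipated after Assumption~\ref{assum:4.2}, and it is what upgrades the deterministic limit of Theorem~\ref{th:main} to a genuine SDE.

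For Part~1, the stopping time $\hat\mu_n(K)$ confines $\hat Y_n$ to $K$ and hence, by Proposition~\ref{prop:exp:convergence}, confines $\hat W_n$ to a larger compact set $\hat K$ on which $\nabla_w f$ and $\nabla_w H$ are bounded and Lipschitz. The integrator $\hat A_n(t)=\alpha_n\lfloor t/\alpha_n^2\sigma_n^2\rfloor$ satisfies Assumption~\ref{ass:Katzenberger:noise} by the elementary estimates used in Theorem~\ref{th:main}, now invoking $\alpha_n\sigma_n^2\to0$ for the infinite-mass property and $\alpha_n\to0$ for asymptotic continuity. Writing each jump as $\Delta\hat Z_n=\alpha_n\xi_k$ with $\xi_k:=\nabla_w f\cdot\eta_k^n+\tfrac12\nabla_w H[\eta_k^n,\eta_k^n]$, the bound $|\Delta\hat Z_n|\lesssim\alpha_n(1+|\eta_k^n|^2)$ together with the speed assumption~\eqref{ass:speed:deg1} yields the vanishing-increment Assumption~\ref{ass:Katzenberg:Zn}. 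The genuinely new point is the martingale decomposition: since $\Expectation_\eta[\eta_k^n]=0$ and $\nabla_w H:\Expectation_\eta[\eta_k^n\otimes\eta_k^n]=\sigma_n^2\,\nabla_w\!\sum_i H_{ii}=0$, we have $\Expectation_\eta[\Delta\hat Z_n]=0$, so the finite-variation part $F_n$ is identically zero and the condition~\eqref{eq:ass:4.2:VarVanish} holds trivially. What remains for Assumption~\ref{assum:4.2} is the uniform integrability of the stopped quadratic variation $[\hat Z_n](t\wedge\hat\mu_n(K))$, which in contrast to~\eqref{eq:conv:th:main:Mn} does \emph{not} vanish; I would obtain it from a second-moment bound on $[\hat Z_n]$ (finite on $\hat K$ because $\Expectation_\eta|\xi_k|^2\lesssim\sigma_n^2+\sigma_n^4$) combined with the speed condition~\eqref{ass:speed:deg1}, localizing through the auxiliary stopping times $\tau_n^k$ of Assumption~\ref{assum:4.2} if necessary.

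For Part~2, I would first replace $\hat W_n$ by $\hat Y_n$ inside $\hat Z_n$, defining $\hat Z_n^Y$ as in~\eqref{eq:def:ZY}, and show $\hat Z_n-\hat Z_n^Y\Rightarrow0$ by the same Taylor-plus-exponential-decay estimate as in~\eqref{eq:est:ZZY}--\eqref{eq:est:gZZY}, using $|\hat W_n-\hat Y_n|\lesssim e^{-\beta\hat A_n(t)}$ and the Lipschitz bounds on $\nabla_w f,\nabla_w H$. The heart of the matter is then to prove $[\hat Z_n](t)\to\int_0^t\varSigma(\hat Y(s))\dx s$, with $\varSigma(\hat Y)$ the limit of $\sigma_n^{-2}\Expectation_\eta[\xi_k\otimes\xi_k]$. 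Expanding $\xi_k$ and using independence and mean-zero-ness of the coordinates, the linear--quadratic cross term only involves third moments $\Expectation[\eta_a\eta_b\eta_c]$ with $b\neq c$, each of which factorizes to contain an isolated first moment and therefore vanishes; so $\varSigma$ splits into an $f$-part $\partial_{w_i}f\cdot\partial_{w_j}f$ coming from $\Expectation[\eta\otimes\eta]=\sigma_n^2 I$ and an $H$-part proportional to $\sigma_0^2\,\partial_{w_i}H:\partial_{w_j}H$ coming from the fourth moments $\Expectation[\eta_a^2\eta_b^2]=\sigma_n^4$ ($a\neq b$). The convergence of $[\hat Z_n]$ to this Riemann integral is a law-of-large-numbers statement for the slowly varying, i.i.d.-driven sum, proved by subtracting conditional expectations and bounding the residual martingale exactly as $M_n$ was bounded in~\eqref{eq:conv:th:main:Mn}. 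Since $F_n\equiv0$ and the jumps vanish, $\hat Z$ is a continuous martingale, and decomposing $\xi_k$ into its linear and quadratic pieces identifies the two independent families of limiting Brownian motions: the linear piece produces $\nabla_w f\cdot\dx b_s$ and the quadratic piece (summed over pairs $a<b$) produces $\sigma_0\,\nabla_w H:\dx B_s$. Substituting $\partial\Phi=P_{T\Gamma}$ from~\eqref{eq:lem:derivatives-of-Phi:first-derivative} and $\dx [\hat Z,\hat Z]=\varSigma(\hat Y)\dx t$ into~\eqref{eq:th:Katzenberg:equation-for-Y} then produces the stochastic integral together with the It\^o-correction drift $\tfrac12\int\partial^2\Phi[\varSigma(\hat Y)]\dx s$, i.e.~\eqref{eq:th:main2:constrained-SDE}; the membership $\hat Y(t)\in\Gamma$ and the exit-time bound~\eqref{ineq:th:main2:hat-mu-and-tau} are inherited directly from Theorem~\ref{th:Katzenberg}.

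The step I expect to be the main obstacle is exactly this quadratic-variation analysis in Part~2, supported by the uniform-integrability step in Part~1. In the non-degenerate proof the noise quadratic variation is $o(\alpha_n\sigma_n^2)$ (see~\eqref{lem:props-Zn:quadratic-variation}) and forces a deterministic limit, whereas here the analogous quantity converges to a strictly positive process, so it cannot be discarded and must instead be controlled in $L^1$ uniformly in $n$ and identified with its deterministic limit $\int_0^t\varSigma(\hat Y)\dx s$. The delicate bookkeeping is the clean separation of the order-$\sigma_n^2$ (linear, $f$) and order-$\sigma_n^4$ (quadratic, $H$) contributions, the verification that their cross-covariation vanishes, and the argument that the linear and quadratic innovations generate \emph{independent} Brownian motions $b$ and $B$; this decoupling is what makes the driving noise in~\eqref{eq:th:main2:constrained-SDE} take its stated form.
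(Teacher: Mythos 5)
Your proposal is correct in outline and, for most of its length, coincides with the paper's own proof: Part~1 is verified exactly as you describe (the elementary estimates for $\hat A_n$; the jump bound $|\Delta \hat Z_n|\lesssim \alpha_n(1+|\eta_k^n|^2)$ plus~\eqref{ass:speed:deg1} for Assumption~\ref{ass:Katzenberg:Zn}; and the key observation that $H_{ii}=0$ together with $\Expectation\eta=0$ makes $\hat Z_n$ a martingale, so the finite-variation part vanishes and~\eqref{eq:ass:4.2:VarVanish} is trivial). Likewise, the replacement of $\hat W_n$ by $\hat Y_n$ via Proposition~\ref{prop:exp:convergence}, the functional CLT for the linear innovations $\sum_k \alpha_n\eta^n_{k,i}$ and quadratic innovations $\sum_k\alpha_n\eta^n_{k,\ell}\eta^n_{k,m}$, and the vanishing-third-moment argument for independence of $b$ and $B$ are precisely the paper's display~\eqref{eq:hn} and the surrounding discussion.

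Where you genuinely diverge is the final identification of the limit of $\hat Z_n$. The paper does not run a law-of-large-numbers argument for $[\hat Z_n]$; instead it writes $\hat Z_n^Y$ as a stochastic integral against the prelimit processes $(b^n,B^n)$ and invokes the Kurtz--Protter theorem on weak convergence of stochastic integrals \cite[Thm.~2.2]{KurtzProtter1991} (together with the continuous mapping theorem and \cite[Prop.~4.4]{Katzenberger91}) to pass to the limit jointly in integrand and integrator; the identity $\dx[\hat Z^i,\hat Z^j]=\varSigma_{ij}(\hat Y)\dx s$ is then read off from the limiting representation rather than derived first. Your route --- establish that the limit $\hat Z$ is a continuous martingale with $[\hat Z^i,\hat Z^j](t)=\int_0^t\varSigma_{ij}(\hat Y(s))\dx s$ and then ``identify'' the driving Brownian motions --- can be completed, but it needs one tool your sketch does not name: a representation theorem for continuous local martingales with absolutely continuous quadratic variation (Ikeda--Watanabe or Karatzas--Shreve), which furnishes, on an extension of the probability space, Brownian motions $b,B$ realizing $\hat Z$ through the factorization $\varSigma=\beta\beta^\top$ with $\beta$ built from $\nabla_w f$ and $\sigma_0\nabla_w H$. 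Knowing the quadratic variation of the limit alone does not otherwise produce the coupled stochastic-integral form in~\eqref{eq:th:main2:constrained-SDE}, since the coupling between $\hat Y$ and the driving noise must be exhibited, not just its law. The trade-off is real: the paper's argument avoids any extension-of-space step but must verify the hypotheses of Kurtz--Protter for the sequence of integrators; yours keeps the prelimit analysis at the elementary conditional-expectation level of~\eqref{eq:conv:th:main:Mn}, at the cost of the representation theorem at the end. Either way the conclusion is the same, so this is a difference of route plus one missing citation rather than a fatal gap.
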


\begin{remark}[Convergence of the whole sequence]
Similarly to the remark after Theorem~\ref{th:main}, whenever the weak solution of~\eqref{eq:th:main2:constrained-SDE} is unique, the usual argument implies that the whole sequence $\hat Y_n^{\mu_n(K)}$ converges up to the time $\tau$, 
\[
	\tau := \inf\set[\big]{t\geq 0\mid \hat Y(t)\notin 
	K^\circ}.	
\]
A sufficient condition for such weak uniqueness is local Lipschitz continuity of the drift and mobility functions in~\eqref{eq:th:main2:constrained-SDE} (see e.g.~\cite[Th.~1.1.10]{Hsu02}). 
\end{remark}

\begin{remark}[Implicit It\=o correction terms]
Since~\eqref{eq:th:main2:constrained-SDE} is a stochastic differential equation in It\=o form, the noise term should be accompanied by a corresponding drift term in order to preserve the condition $\hat Y(t)\in \Gamma$. Li, Wang, and Arora~\cite{LiWangArora22} discuss this aspect in some detail, and show how the final integral contains, as part of this integral, the necessary `It\=o correction terms'.
\end{remark}

\begin{remark}[Comparison with~{\cite{LiWangArora22}}]
At a high level of abstraction, Theorem~\ref{th:main:deg2} above is similar to~\cite[Th.~4.6]{LiWangArora22}. However, in Theorem~\ref{th:main:deg2} the conditions on $\hat L$ and on the noise are much more general, opening the door to e.g.\  the applications to minibatching and stochastic gradient Langevin descent in Section~\ref{sec:examples}.
\end{remark}

\begin{proof}
As in the proof for the general case in Theorem~\ref{th:main}, we begin by checking the assumptions of Theorem~\ref{th:Katzenberg}.

\medskip
\noindent
\emph{Verification of Assumption~\ref{ass:Katzenberger:noise}.}
By definition of $\hat{A}_n$ we get for $\alpha_n \to 0$ and bounded $\sigma_n$ for any $\delta > 0$
\[
\inf_{0<t\leq T} (\hat{A}_n(t+\delta) - \hat{A}_n(t)) =   \inf_{0<t\leq T}\biggl( \alpha_n\biggl\lfloor\frac{t + \delta}{\alpha_n^2\sigma_n^2}\biggr\rfloor - \alpha_n\biggl\lfloor\frac{t}{\alpha_n^2\sigma_n^2}\biggr\rfloor \biggr)\geq \alpha_n\biggl\lfloor\frac{\delta}{\alpha_n^2\sigma_n^2}\biggr\rfloor \to \infty.
\]
At the same time\[
\sup_{t\geq 0} \hat{A}_n(t) - \hat{A}_n(t-)  = \alpha_n \to 0 \qquad\text{for $\alpha_n \to 0$}\,.
\]
\noindent%
\emph{Verification of Assumption~\ref{ass:Katzenberg:Zn}.} 
Since $\hat{L}$ is quadratic in $\eta$, we get from assumption~\eqref{ass:speed:deg1} the estimate
\[
\sup_{0<t\leq T \wedge \hat\mu_n(K)} |\Delta \hat{Z}_n| 
=\alpha_n \bra[\bigg]{ 1+ \sup_{0<t\leq T \wedge \hat\mu_n(K)}|\eta_k^n |^{2}}  
%\leq \max\set[\bigg]{\alpha_n, \sup_{0<t\leq T \wedge \hat\mu_n(K)}\|\eta_k^n \|^{2}} 
\Rightarrow 0.
\] 

\medskip\noindent
\emph{Verification of Assumption~\ref{assum:4.2}.}
Note that due to the structure of $\hat{L}$ the noise process is a martingale because $\nabla_{w}H_{i, i} = 0$ and has a simplified form:
\begin{align*}
\Delta \hat Z_n = \alpha_n\nabla_{w}f(\hat{W}^{\hat{\mu}_n(K)}_n(\alpha_n^2 \sigma_n^2k))\eta_k^n 
+ \tfrac12 \alpha_n \nabla_{w}H\bigl(\hat{W}^{\hat{\mu}_n(K)}_n(\alpha_n^2 \sigma_n^2k)\bigr)[\eta_k^n, \eta_k^n]
\end{align*}
We verify that $\bigl[\hat{Z}^{\hat{\mu}_n(K)}_n\bigr](t)$ is uniformly integrable for any $t\in \bbR^+$ by estimating
\begin{align*}
\bigl[\hat{Z}^{\hat{\mu}_n(K)}_n\bigr](t) \leq  2\alpha_n^2 \sum_{k\leq \frac{t}{\alpha_n^2\sigma_n^2}}\sup_{w\in K}|\nabla_{w}f(w))|^2\sum_{i=1}^d (\eta_{k,i}^n)^2 
+ \alpha_n^2\sum_{k\leq \frac{t}{\alpha_n^2\sigma_n^2}}\sup_{w\in K}|\nabla_{w}H(w)|^2\sum_{\substack{i,j=1 \\ i\neq j}}^d (\eta_{k, i}^n\eta_{k, j}^n)^2.
\end{align*}
By assumption $\Var(\eta_{k, i}^n)= \sigma_n^2$ and as $\Expectation\eta_{k, i}^n = 0$ also $\Var(\eta_{k, i}^n\eta_{k, j}^n)=\Var(\eta_{k, i}^n) \Var(\eta_{k, j}^n)= \sigma_n^4$, so by the Functional Central Limit Theorem (see e.g.~\cite[Th.~8.2]{Billingsley1968}) we have for $i=1,\dots,d$ and $1\leq \ell\ne m\leq d$ the convergence
\begin{align}
\label{eq:hn}
 \sum_{k\leq \frac{t}{\alpha_n^2\sigma_n^2}}\alpha_n\eta^n_{k,i} =: b_{t,i}^n \Rightarrow b_{t,i}, 
 \quad\text{and}\quad
\sum_{k\leq \frac{t}{\alpha_n^2\sigma_n^2}}\alpha_n\eta^n_{k,\ell}\eta^n_{k,m} =: B_{t,\ell,m}^n  \Rightarrow \sigma_0 B_{t,\ell,m}, \quad  \text{ as } n\to \infty\,,
\end{align}
where $b_{t,i}$ for $i=1,\dots,d$ and $B_{t,\ell,m}$ for $1\leq \ell\ne m \leq d$ are standard Brownian motions.
We have the symmetry $B_{t,\ell,m}=B_{t,m,\ell}$. 
We note that $b_{t,i}^n$ for $1\leq i\leq d$ and $B_{t,\ell,m}^n$ for $1\leq \ell < m \leq d$ are uncorrelated for any $n$. Hence, so are the limits and as normal random variables, we find that  $b_{t,i}$ for $i=1,\dots,d$ and $B_{t,\ell,m}$ for $1\leq \ell< m\leq d$ are  independent standard Brownian motions.
We then combine \cite[Proposition 4.3]{Katzenberger91} and \cite[Proposition 4.4]{Katzenberger91} and conclude that $\hat{Z}^{\hat{\mu}_n(K)}_n$ satisfies Assumption \ref{assum:4.2}. 

We then apply Theorem~\ref{th:Katzenberg} and obtain the convergence of $\hat Y_n$ to a limiting process $\hat Y$ that follows the limiting evolution~\eqref{eq:th:Katzenberg:equation-for-Y}.

\medskip
\noindent
\emph{Identification of the limiting dynamics.} 
For studying the limiting dynamic of~$\hat{Z}_n$, we introduce the intermediate process
\[
\hat{Z}^Y_n(t)  =   \alpha_n \sum_{k\leq \frac{t}{\alpha_n^2\sigma_n^2}} \bigl(\nabla_w f(\hat Y_n(s))\cdot \eta_k^n + \tfrac12\nabla_wH(\hat Y_n(s))[\eta_k^n,\eta_k^n] \bigr) \,,
\]
with which we obtain the estimate
\begin{align}
	\MoveEqLeft\sup_{0<t\leq T} \biggl|\hat Z_n(t) - \int_0^{t} \nabla_w f(\hat Y(s)) \cdot \dx b_s -  \sigma_0\int_0^{t} \nabla_w H(\hat Y(s)) : \dx B_s\biggr| \label{eq:est:d} \\
	&\leq \sup_{0<t\leq T} \Bigl|\hat{Z}_n(t) - \hat{Z}^Y_n(t)\Bigr| + 
	\sup_{0<t\leq T}\biggl|\hat{Z}^Y_n(t) - \int_0^{t} \nabla_w f(\hat Y(s))\cdot \dx b_s -  \sigma_0\int_0^{t} \nabla_w H(\hat Y(s)): \dx B_s \biggr| \notag ,
\end{align}
where $(b_{s,i})_{s\geq 0}$, $(B_{s,k,l})$ for $i=1,\dots ,d$ and $1\leq k< l \leq d$ are independent 
Brownian motions and we set $B_{s,l,k}:= B_{s,k,l}$.
% of dimensions $d$ and $\frac12d(d-1)$ respectively.
For the first term analogously to the main theorem we get
\begin{align}
\Bigl|\hat{Z}_n(t) - \hat{Z}^Y_n(t)\Bigr| &\leq \alpha_n\Biggl|\sum_{k\leq \frac{t}{\alpha_n^2\sigma_n^2}}\Bigl(\nabla_w f(\hat W_n(s)) -
	 \nabla_w f(\hat Y_n(s))\Bigr) \cdot \eta^n_k\Biggr| \label{eq:est:ZZYd1}\\
	&\quad +\frac12\alpha_n \Biggl|\sum_{k\leq \frac{t}{\alpha_n^2\sigma_n^2}}
	%\sum_{1\leq i \neq j \leq d}
	\bigl(\nabla_wH(\hat W_n(s)) - \nabla_wH(\hat Y_n(s)) \bigr)[\eta^n_{k}, \eta_{k}^n] \Biggr|. \label{eq:est:ZZYd2} 
	\end{align}
Recall from Proposition~\ref{prop:exp:convergence} that
\[
\hat{W}_n(t)  -\hat{Y}_n(t)  = \phi\bigl(\hat{W}_n(0), A_n(t)\bigr)-\Phi\bigl(\hat{W}_n(0)\bigr) = u(t)e^{-\beta A_n(t)}
\]
for some $u \in C_b([0,T])$. Note that both terms are martingales, so $\Expectation\bigl((X^n_T)^2\bigr) \to 0$ implies $\sup_{t\leq T}|X^n_t| \Rightarrow 0$. Then by compactness of $K$ and regularity of $\hat{L}$ similarly to the proof of Theorem~\ref{th:main}, we conclude that 
\begin{align*}
\MoveEqLeft \Expectation\biggl[\alpha_n\biggl|\sum_{k\leq \frac{t}{\alpha_n^2\sigma_n^2}}\Bigl(\nabla_w f(\hat W_n(s)) -
	 \nabla_w f(\hat Y_n(s))\Bigr)\cdot\eta^n_k\biggr|\biggr]^2 \\
	&\lesssim \alpha_n^2\sum_{k\leq \frac{t}{\sigma_n^2}}
	 e^{-2\beta A_n(\alpha_n^2\sigma_n^2k)}\Expectation\lvert\eta_k^n\rvert^2 = \alpha_n\sigma_n^2\int_0^t e^{-2\beta A_n(s)}\dx A_n(s) \\
	&= \frac{\alpha_n\sigma_n^2}{\beta}(1- e^{-\beta A_n(t)}) \to 0 \qquad \text{as }n\to\infty \,.
\end{align*}
Hence, we obtain for term~\eqref{eq:est:ZZYd1} the convergence
\[
\sup_{0\leq t\leq T}\alpha_n\biggl|\sum_{k\leq \frac{t}{\alpha_n^2\sigma_n^2}}\Big(\nabla_w f(\hat W_n(s)) -
	 \nabla_w f(\hat Y_n(s))\Big)\eta^n_k\biggr| \Rightarrow 0 \,.
\]
Analogous bound holds for the second term \eqref{eq:est:ZZYd2}.

We split the second component in \eqref{eq:est:d} into the two contributions
\begin{align}
	\MoveEqLeft  \sup_{0<t\leq T}\biggl|\hat{Z}^Y_n(t) - \int_0^{t} \nabla_w f(\hat Y(s))\cdot \dx b_s -  \sigma_0\int_0^{t} \nabla_w H(\hat Y(s)): \dx B_s \biggr| \label{eq:est:deg} \\
	&\leq \sup_{0<t\leq T}\Biggl|  \alpha_n \sum_{k\leq \frac{t}{\alpha_n^2\sigma_n^2}}  \nabla_w f(\hat Y_n(s))\cdot \eta_k^n - \int_0^{t} \nabla_w f(\hat Y(s))\cdot \dx b_s  \Biggr|\notag \\
	&\quad + \sup_{0<t\leq T}\Biggl| \alpha_n \sum_{k\leq \frac{t}{\alpha_n^2\sigma_n^2}}   \nabla_w H(\hat Y_n(s))[\eta_k^n,\eta_k^n] - \sigma_0\int_0^{t} \nabla_w H(\hat Y(s)): \dx B_s \Biggr|.\notag 
\end{align}
By using the processes $(b_t^n)_{t\geq 0}$ and $B_t^n)_{t\geq 0}$ from~\eqref{eq:hn}, we decompose the error terms further into
\begin{align}
	\MoveEqLeft \sup_{0<t\leq T}\biggl|\hat{Z}^Y_n(t) - \int_0^{t} \nabla_w f(\hat Y(s))\cdot \dx b_s -  \sigma_0\int_0^{t} \nabla_w H(\hat Y(s)): \dx B_s \biggr| \\
	&\leq \sup_{0<t\leq T}\biggl|  \int_0^{t} \nabla_w f(\hat Y(s))\dx b^n_s - \int_0^{t} \nabla_w f(\hat Y(s)) \cdot \dx b_s \biggr| \label{eq:est:deg:1} \\
	&\qquad+ \sup_{0<t\leq T}\biggl|  \int_0^{t} \nabla_w f(\hat Y_n(s))\cdot \dx b^n_s -  \int_0^{t} \nabla_w f(\hat Y(s))\cdot \dx b^n_s  \biggr|\label{eq:est:deg:2}  \\
	&\quad + \sup_{0<t\leq T}\biggl| \int_0^{t} \nabla_w H(\hat Y(s)) : \dx B^n_s - \sigma_0 \int_0^{t} \nabla_w H(\hat Y(s)) : \dx B_s \biggr|\label{eq:est:deg:3}  \\
	&\qquad+ \sup_{0<t\leq T}\biggl| \int_0^{t} \nabla_w H(\hat Y_n(s)): \dx B^n_s -  \int_0^{t} \nabla_w H(\hat Y(s)) : \dx B^n_s \biggr| \,.\label{eq:est:deg:4} 
\end{align}
We recall the convergence from~\eqref{eq:hn} and observe that all the integrals are relatively compact by \cite[Proposition 4.4]{Katzenberger91}. 
By noting that $\hat Y(s), \hat Y_n(s) \in K$ with $K$ compact, 
by using assumption on the regularity of~$\hat L$, 
by~\cite[Theorem 2.2]{KurtzProtter1991} we conclude that the terms ~\eqref{eq:est:deg:1} and~\eqref{eq:est:deg:3} converge $\Rightarrow 0$. 
The terms~\eqref{eq:est:deg:2} and~\eqref{eq:est:deg:4} convergence by the continuous mapping theorem 
\[
\nabla_w f(\hat Y_n(s))\Rightarrow  \nabla_w f(\hat Y(s)) \qquad \text{and} \qquad \nabla_w H_f(\hat Y_n(s))\Rightarrow  \nabla_wH_f(\hat Y(s)) \,.
\]
Hence, by an application of \cite[Theorem 2.2]{KurtzProtter1991}, we get the convergence
\begin{align*}
	\MoveEqLeft \sup_{0<t\leq T}\biggl|  \int_0^{t} \nabla_w f(\hat Y_n(s))\cdot  \dx b^n_s -  \int_0^{t} \nabla_w f(\hat Y(s)) \cdot \dx b^n_s \biggr|\\
	&\leq \sup_{0<t\leq T}\biggl|  \int_0^{t} \nabla_w f(\hat Y_n(s))\cdot \dx b^n_s -  \int_0^{t} \nabla_w f(\hat Y(s)) \cdot \dx b_s \biggr| \\
	&\quad + \sup_{0<t\leq T}\biggl|  \int_0^{t} \nabla_w f(\hat Y(s)) \cdot \dx b^n_s -  \int_0^{t} \nabla_w f(\hat Y(s)) \cdot \dx b_s \biggr| \Rightarrow 0.
\end{align*}
And similarly \eqref{eq:est:deg:4} $\Rightarrow 0$.
Finally note that
\[
\dx [\hat{Z}^i, \hat{Z}^j]_s =  \partial_{w_i} f(\hat{Y}(s)) \cdot \partial_{w_j} f(\hat{Y}(s)) +  \sigma_0\nabla_{w_i}H_f(\hat Y(s)) : \sigma_0\nabla_{w_j}H_f(\hat Y(s))  = \varSigma(Y(s))_{ij} \dx s \,.
% [\beta(\hat{Y}), \beta(\hat{Y})]_{i,j}ds 
\]
Hence, the stated result in Theorem~\ref{th:main:deg2} follows by applying Theorem~\ref{th:Katzenberg}.
\end{proof}

\section{Examples}
\label{sec:examples}

In this section we apply Theorems~\ref{th:main} and~\ref{th:main:deg2} to a number of examples. Table~\ref{table:examples} gives an overview. 
\begin{table}[ht]
	\small
	\def\arraystretch{1.3}
	\centering
\begin{tabular}{lcccc}
% \toprule
\textbf{Type} & $\hat L(w,\eta)$ & $\Reg(w)$ & \textbf{Rate} & \textbf{Section}\\ \midrule[1pt]
\multicolumn{5}{l}{\emph{Independent of the structure of $L$, with $\eta$ having the same dimension as $w$:}}\\[\jot]
\rlap{Gaussian D-Connect} & $L(w{\odot}(1{+}\eta))$ & $\frac12 \partial^2L(w)[w,w]$ & $\alpha_n\sigma_n^2$ & \ref{sss:DropConnect}\\
\rlap{Bernoulli D-Connect} & $L(w{\odot}(1{+}\eta))$ & \eqref{eqdef:Reg-BDropConnect}& $\alpha_n\sigma_n^2$ & \ref{sss:DropConnect}\\
SGLD & $L(w) + \frac12 w^\top \eta$ & $\frac12 \log |\nabla^2 L|_+$ & $\alpha_n^2\sigma_n^2$ & \ref{sec:examples:SGLD}\\
Anti-PGD & $L(w+\eta)$ & $\frac12 \Delta_w L(w)$ & $\alpha_n\sigma_n^2$ & \ref{sss:anti-correlated-PGD}\\
\midrule
\multicolumn{5}{l}{\emph{Variables $\eta$ indexed by $i$, the index of the samples:}}\\[\jot]
Mini-\rlap{batching} & $\frac1N \sum_{i=1}^N (1{+}\eta_i) \ell(w,x_i,y_i)$ & 0 & $\ll\alpha_n^2\sigma_n^2$ & \ref{sec:examples:mini}\\
Label noise & $\frac1N \sum_{i=1}^N (f_w(x_i)-y_i-\eta_i)^2$ & $\frac1{2N} \Delta L(w)$ & $\alpha_n^2\sigma_n^2$ & \ref{sec:examples:label}\\
\midrule
\multicolumn{5}{l}{\emph{Classical Dropout, with mean-square loss:}}\\[\jot]
OLM-DO & $f_w^{\mathrm{drop}}(x,\eta) = \langle u^{\odot2}{-}v^{\odot2},x{\odot}(1{+}\eta)\rangle$ & $\frac1{2N} \sum_{i,j} (u_j^2-v_j^2)^2 x_{ij}^2$ & $\alpha_n\sigma_n^2$ & \ref{sss:OLM-DO}\\
ShNN-DO & $f_w^{\mathrm{drop}}(x,\eta) = \sum_{j} a_j (1{+}\eta_j)s(b_j^\top x)$ & $\frac1{N} \sum_{i,j} a_j^2 s(b_j^\top x_i)^2$ & $\alpha_n\sigma_n^2$ & \ref{sss:FFNN}\\
DeepNN-DO  & \eqref{eqdef:deep-ReLU-NN} & $\frac12  \Delta_\eta\hat L(w, 0)$ & $\alpha_n \sigma_n^2$ & \ref{sss:FFNN}
\\[\jot]
\bottomrule
\end{tabular}
\caption{List of examples discussed in Section~\ref{sec:examples}. The column `Rate' indicates the rate at which iterations should be mapped to continuous time in order to follow the evolution; for instance, `$\alpha_n\sigma_n^2$' means that step $k$ is mapped to time $t_k := \alpha_n\sigma_n^2k$. Therefore smaller numbers mean slower evolution, requiring stronger speedup to see non-trivial evolution. See the referenced sections for details.}
\label{table:examples}
\end{table}

\bigskip

In each of the examples below we apply either Theorem~\ref{th:main} or Theorem~\ref{th:main:deg2} to obtain a convergence result. Because the statements of those theorems involve a specific type of convergence, which deals with the initial jump and requires a proper speedup and restriction to a compact set, we introduce the notion of \emph{Katzenberger convergence} to simplify the formulation of the results below.

\medskip
As in Section~\ref{s:main-results}, consider a sequence of stochastic processes $(W_n)_{n\in\bbN}$ and a sequence of integrators $(A_n)_{n\in \bbN}$, and let $Y_n$ be the process after the jump correction
\begin{equation}
Y_n(t):=W_n(t)-\phi\bigl(W_n(0), A_n(t)\bigr)+\Phi\left(W_n(0)\right).
\end{equation}
For a compact set $K$ we again set 
\[
\mu_n(K):=\inf\bigl\{t \geq 0 \mid Y_n(t-) \notin K^{\circ} \text{ or } Y_n(t) \notin K^{\circ} \bigr\} .
\]
Let $Y$ be a deterministic or stochastic process with continuous sample paths and set 
\[
\tau := \inf \set[\big]{t\geq 0\mid Y(t)\notin K^\circ}.	
\]
Theorems~\ref{th:main} and~\ref{th:main:deg2} provide convergence in distribution; by the Skorokhod representation theorem, we can assume without loss of generality that the processes $W_n$, $Y_n$, and $Y$ are defined on the same probability space. 

\begin{definition}[Katzenberger convergence]
	\label{def:Katzenberger-convergence}
We say that $W_n$ converges \emph{in the sense of Katzenberger} in $U$ to $Y$ if for every compact subset $K \subset U$
we have
\[
Y_n^{\mu_n(K)\wedge\tau} \Rightarrow Y^\tau	
\qquad\text{and for all $\delta>0$, \quad}
W_n^{\mu_n(K)\wedge \tau}\bigm\vert_{[\delta,\infty)}
\Rightarrow Y^\tau\bigm\vert_{[\delta,\infty)} .
\]
Here the convergences are in Skorokhod topology. 
\end{definition}

In Section~\ref{s:main-results} we discussed that Theorems~\ref{th:main} and~\ref{th:main:deg2} provide such convergence when the limiting process $Y$ has a uniquely defined deterministic or stochastic evolution up to time~$\tau$.
    
\subsection{Methods independent of \texorpdfstring{$L$}{L}: DropConnect and SGLD}
\label{ss:DropConnect}

\subsubsection{DropConnect}\label{sss:DropConnect}
\emph{Dropout} is a regularization technique originally introduced for neural networks in~\cite{wager2013dropout, srivastava2014dropout}. The idea is to balance the importance of all the neurons by temporarily disabling some of them at every iteration of the optimization. In this section we study the specific case of \emph{DropConnect}~\cite{WanZeilerZhangLe-CunFergus13}, which is more general than other types of Dropout, in the sense that it can be applied to `any' loss function $L$, not just those generated by neural networks.

In DropConnect, given a loss function $L$, we construct the noise-injected loss $\hat L$ by
\begin{equation}
\label{eqdef:Lhat-DropConnect}
\hat L(w,\eta) := L(w\odot (1+\eta))\, .
\end{equation}
In the most common form the filters $\eta$ are chosen to be i.i.d.\ Bernoulli random variables, 
\begin{equation}
\label{eqdef:Bernoulli-filters}
\eta_{i} = \begin{cases}
        -1 & \text{ w.p. } p\,,\\
        \dfrac{p}{1-p} & \text{ w.p. }  1-p\,.\\
\end{cases}
\end{equation}
Note that $\bbE\eta_{i} = 0$ and $\sigma^2 := \Var (\eta_{i})= {p}/{(1-p)} \to 0$ as $p\to 0$. 
When $\eta_{i}= -1$, the corresponding parameter is zeroed, or dropped out, which explains the name. Note that the limit of small~$\sigma^2$ is the limit in which $p\downarrow0$, i.e.\ in which vanishingly few parameters are dropped. 

As an alternative for Bernoulli DropConnect we also consider Gaussian DropConnect, in which the $\eta_{i}$ are i.i.d.\ normal random variables
\begin{equation}
	\label{eqdef:Gaussian-filters}
\eta_{i} \sim \calN(0, \sigma^2).
\end{equation}

\paragraph*{Gaussian DropConnect.}
The case of Gaussian noise variables $\eta$ fits directly into the structure of Theorem~\ref{th:main}, and we now state the corresponding result.

\begin{corollary}[Convergence for Gaussian DropConnect]
\label{cor:GaussianDropConnect}
Let $L\in C^4(\R^m)$ satisfy Assumption~\ref{ass:LossMfd}, and assume that $L$ has at most polynomial growth. Define $\hat L$ by~\eqref{eqdef:Lhat-DropConnect}, and let $W_n$ be the process characterized in~\eqref{eq:def:W}, where the $\eta_{k,i}$ are i.i.d.\ Gaussian variables as in~\eqref{eqdef:Gaussian-filters}. Assume that ${W}_n(0) \Rightarrow W_0 \in U_{\Gamma}$ for some locally attractive neighbourhood $U_\Gamma$ of $\Gamma$. Finally, let $\alpha_n$ and $\sigma_n$ be arbitrary sequences converging to zero. 

Then for any compact set $K \subset U_\Gamma$, the process $W_n$ converges to $Y$ in the Katzenberger sense (see Definition~\ref{def:Katzenberger-convergence}), where $Y$ solves the constrained gradient-flow equation 
\[
\frac{dY}{dt} =- P_{T\Gamma} \nabla_w\Reg (Y), \qquad Y(t)\in \Gamma,
\qquad \text{and}\qquad 
Y(0) = \Phi(W_0),
\]
with
\begin{equation}
	\label{eqdef:Reg-DropConnect}
\Reg(w) := \frac12 \Delta_\eta \hat L(w,0) = \frac12 \partial^2L(w)[w,w]
= \frac12 \sum_{j=1}^m w_j^2 \partial^2_{w_jw_j}L(w)\,.
\end{equation}
\end{corollary}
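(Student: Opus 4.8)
The plan is to derive the corollary as a direct application of Theorem~\ref{th:main}: I would verify that the DropConnect loss $\hat L(w,\eta) = L(w\odot(1+\eta))$ together with the Gaussian noise $\eta_{k,i}\sim\calN(0,\sigma_n^2)$ satisfies every hypothesis of that theorem, read off the regularizer from its conclusion, and finally upgrade the relative-compactness statement to genuine Katzenberger convergence via uniqueness of the limiting flow. The consistency $\hat L(w,0)=L(w)$ and the centering/variance conditions~\eqref{eq:ass:noise} are immediate, and $\hat L\in C^4$ because $L\in C^4$ and $\eta\mapsto w\odot(1+\eta)$ is affine; in particular $\hat L\in C^3$ as demanded by Definition~\ref{def:noiseLoss}.

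Computing the regularizer is a short chain-rule exercise. Writing $u = w\odot(1+\eta)$ one has $\partial^2_{\eta_i\eta_j}\hat L(w,\eta) = w_iw_j\,\partial^2_{u_iu_j}L(u)$, so at $\eta=0$ (where $u=w$) the diagonal terms are $\partial^2_{\eta_i\eta_i}\hat L(w,0) = w_i^2\,\partial^2_{w_iw_i}L(w)$. Summing over $i$ gives $\Reg(w) = \tfrac12\Delta_\eta\hat L(w,0) = \tfrac12\sum_j w_j^2\,\partial^2_{w_jw_j}L(w)$, which is exactly the expression~\eqref{eqdef:Reg-DropConnect}.

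The substantive work is the verification of the compatibility Assumption~\ref{ass:hatL}. Differentiating once more, $\nabla_w\nabla^2_\eta\hat L(w,\eta)$ is an explicit expression built from derivatives of $L$ up to third order evaluated at $u=w\odot(1+\eta)$, multiplied by monomials in $w$ and $(1+\eta)$. The key difficulty, which I expect to be the main obstacle of the whole proof, is that these bounds must hold uniformly for \emph{all} $\eta\in\R^d$, whereas the DropConnect rescaling drives the argument $u$ to infinity as $|\eta|\to\infty$, so pointwise control on the compact set $K$ does not suffice. Here I would invoke the polynomial-growth hypothesis on $L$ and its derivatives (up to fourth order): for $w\in K$ one has $|u| = |w\odot(1+\eta)|\lesssim 1+|\eta|$, hence each relevant derivative of $L$ is bounded by $C(1+|\eta|^q)$ for a degree $q$ fixed by the growth of $L$. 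Taking one further $w$-derivative yields~\eqref{ass:compat-L-Lhat-difference-in-w}, and a Taylor expansion in $\eta$ about $\eta=0$ yields~\eqref{ass:compat-L-Lhat-difference-in-etas}, both with $p=q$ and with $C_2>0$ since $\hat L$ depends genuinely nonlinearly on $\eta$.

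It remains to check the moment and speed conditions, which are where Gaussianity enters and which I expect to be routine. By Remark~\ref{remark:noise} the Gaussian noise satisfies $\M_k(\sigma) = O(\sigma^k)$, so every $k\geq3$ gives $\M_k(\sigma)=o(\sigma^2)$ as $\sigma\to0$ and the branch~\eqref{eq:ass:moment-C2>0} holds for any $p$. For the speed condition~\eqref{ass:speed} I would bound the expected maximum of the $\sim 1/\alpha_n\sigma_n^2$ i.i.d.\ terms by a Gaussian maximal inequality, obtaining $\Expectation\bigl[\sup_{k}\alpha_n|\eta^n_k|^{p+2}\bigr]\lesssim \alpha_n\sigma_n^{p+2}\bigl(\log\tfrac1{\alpha_n\sigma_n^2}\bigr)^{(p+2)/2}$; with $x_n := \alpha_n\sigma_n^2\to0$ this equals $\sigma_n^p\,x_n\bigl(\log\tfrac1{x_n}\bigr)^{(p+2)/2}\to0$, so the supremum tends to zero in probability. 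With all hypotheses verified, Theorem~\ref{th:main} yields relative compactness and identifies every limit point as the constrained gradient flow~\eqref{eq:constr-GF} of $\Reg$; finally, since $L\in C^4$ forces $\hat L\in C^4$ and hence $\nabla\Reg$ to be locally Lipschitz, that flow has a unique solution up to $\tau$, and the remark following Theorem~\ref{th:main} upgrades the conclusion to Katzenberger convergence in the sense of Definition~\ref{def:Katzenberger-convergence}.
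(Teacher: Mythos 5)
Your proposal is correct and takes essentially the same route as the paper's proof: a direct application of Theorem~\ref{th:main}, with Assumption~\ref{ass:hatL} verified from the polynomial growth of $L$ and the Gaussian moment bounds, the regularizer read off as $\frac12\Delta_\eta\hat L(w,0)$, and the conclusion upgraded to Katzenberger convergence via uniqueness of the constrained gradient flow. The only cosmetic difference is that you re-derive the speed condition~\eqref{ass:speed} by a Gaussian maximal-inequality estimate, whereas the paper simply cites its Lemma~\ref{l:supgaus} (a union bound with Gaussian tail estimates); both arguments are valid.
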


\begin{proof}[Proof of Corollary~\ref{cor:GaussianDropConnect}]
	The result follows fairly directly from Theorem~\ref{th:main}. If $L$ grows at most polynomially, then $\hat L$ and the Gaussian random variables satisfy Assumption~\ref{ass:hatL} for some $p$, and by Lemma~\ref{l:supgaus} the condition~\eqref{ass:speed} also is satisfied. The assertion of Corollary~\ref{cor:GaussianDropConnect} is then a translation of the assertion of Theorem~\ref{th:main}.
\end{proof}

\begin{remark}[Empirical loss]
If $L$ is an empirical loss of the form
\begin{equation}
	\label{eqdef:EmpLoss-section-DropConnect}
L(w) := \frac1N \sum_{i=1}^N \bra[\big]{f(w,x_i) - y_i}^2\,,
\end{equation}
then on $\Gamma$ the function $\Reg$ can be written as  
\begin{equation}
	\label{eqdef:Reg-DropConnect-MSEloss}
\Reg(w) = \frac1N \sum_{i=1}^N \bigl\lvert w\odot \nabla_w f(w,x_i) \bigr\rvert^2\,.
% \qedhere  
\end{equation}
\end{remark}

\begin{remark}[Connection with generalisation]\label{rem:generalisation}
The form of the function $\Reg$ in~\eqref{eqdef:Reg-DropConnect} is very similar to the `relative flatness' introduced by Petzka \emph{et al.}~\cite{PetzkaKampAdilovaSminchisescuBoley21}, and for which they prove rigorous generalisation benefits. 

Petzka and co-authors consider functions $f$ of the form $f(w,x) = g(w \phi(x))$, where $w$ is organized as a matrix, with $w\in \R^{m_1\times m_2}$ and $x\in \R^{m_2}$. Taking $\phi(x) = x$ to connect with this paper, the `relative flatness'~\cite[Def.~3]{PetzkaKampAdilovaSminchisescuBoley21} of the empirical loss of such an $f$ can be written as 
\begin{equation}
	\label{eqdef:relative-flatness}
\frac2N \sum_{i=1}^N\sum_{s,s'=1}^{m_1} \sum_{j,j'=1}^{m_2} w_{sj}w_{s'j}
   \partial_{w_{sj'}}f(w, x_i) \partial_{w_{s'j'}}f(w, x_i)
\end{equation}
To compare, we can write the expression~\eqref{eqdef:Reg-DropConnect-MSEloss} in a very similar form,
\begin{equation}
	\label{eqdef:Reg-DropConnect-MSEloss-v2}
\frac1N \sum_{i=1}^N\sum_{s,s'=1}^{m_1} \sum_{j,j'=1}^{m_2} w_{sj}w_{s'j}
   \partial_{w_{sj'}}f(w, x_i) \partial_{w_{s'j'}}f(w, x_i)\ \delta_{ss'}\delta_{jj'}\,.
\end{equation}
Note how~\eqref{eqdef:Reg-DropConnect-MSEloss-v2} is a `diagonal' form of~\eqref{eqdef:relative-flatness}.

The two expressions~\eqref{eqdef:Reg-DropConnect-MSEloss-v2} and~\eqref{eqdef:relative-flatness} share a number of properties that are consistent with good generalisation behaviour. To start with, as also remarked in~\cite{PetzkaKampAdilovaSminchisescuBoley21}, both are invariant under coordinate-wise rescaling of the parameters $w$, because each derivative is accompanied by a multiplication with the corresponding coordinate of $w$. They also scale quadratically in $f$, just as the loss function~\eqref{eqdef:EmpLoss-section-DropConnect}, and if one considers $f$ to be a  neural network of depth $D$ (see below) then the scaling in terms of $w$ is of the form $w^{2D}$, both for the loss and for the expressions~\eqref{eqdef:Reg-DropConnect-MSEloss-v2} and~\eqref{eqdef:relative-flatness}. These scaling properties are necessary for a functional to be admissible as a measure of generalisation ability.
\end{remark}

\paragraph{Bernoulli DropConnect.}

The case of Bernoulli-distributed $\eta$ as in~\eqref{eqdef:Bernoulli-filters} is not covered by Theorem~\ref{th:main}, because all higher moments $\M_k$ for $k\geq3$ scale as $\sigma^2$, and therefore violate the condition~\eqref{eq:ass:moment-C2>0}. In general, we have  $C_2>0$ in Assumption~\ref{ass:hatL}, and therefore they also do not satisfy condition~\eqref{eq:ass:moment-C2=0}, except for the special case of shallow neural networks---see Section~\ref{sss:FFNN} below. 
As it turns out, however, the proof of Theorem~\ref{th:main} does apply to this situation, provided we prove the three statements of Lemma~\ref{lem:props-Zn} for this particular case. This leads to the following proposition.

\begin{proposition}[Convergence for Bernoulli DropConnect]
\label{cor:BernoulliDropConnect}
Let $L\in C^4(\R^m)$ satisfy Assumption~\ref{ass:LossMfd}. Define $\hat L$ by~\eqref{eqdef:Lhat-DropConnect}, and let $W_n$ be the process characterized in~\eqref{eq:def:W}, where the $\eta_{k,i}$ are i.i.d.\ Bernoulli random variables as in~\eqref{eqdef:Bernoulli-filters}, with dropout probability $p_n$. Assume that ${W}_n(0) \Rightarrow W_0 \in U_{\Gamma}$ for some locally attractive neighbourhood $U_\Gamma$ of $\Gamma$. Finally, let $\alpha_n\to0$ and $\sigma_n^2 := p_n/(1-p_n) \to 0$.

Then for any compact set $K \subset U_\Gamma$, the process $W_n$ converges to $Y$ in the Katzenberger sense (see Definition~\ref{def:Katzenberger-convergence}), where $Y$ solves the constrained gradient-flow equation 
\[
\frac{dY}{dt} =- P_{T\Gamma} \nabla_w\Reg (Y), \qquad Y(t)\in \Gamma,
\qquad \text{and}\qquad 
Y(0) = \Phi(W_0),
\]
with
\begin{equation}
	\label{eqdef:Reg-BDropConnect}
	\Reg(w) := \partial L(w)[w] +\sum_{j=1}^m \bra[\big]{L(w\odot \invunit_j) - L(w)}.
\end{equation}
Here $\invunit_j$ is the inverted unit vector in $\R^m$,
\[
\invunit_j := (1,\dots, 1, \underset {\mathclap{j^{\mathrm{th}}\text{ position}}}{\ \ \ 0\ \ \ ,} 1, \dots, 1 )	 = 1 - \mathrm e_j.
\]
\end{proposition}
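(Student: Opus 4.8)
The plan is to invoke the proof of Theorem~\ref{th:main} essentially verbatim, after isolating where the Bernoulli case departs from its hypotheses. The filters are bounded ($|\eta_{i}|\leq 1$ once $p_n\leq 1/2$), so their bounded support gives $\M_k(\sigma_n)=O(\sigma_n^2)$ for every $k\geq 2$; this is precisely the strength used in Part~2 of the proof of Theorem~\ref{th:main} (the estimate following~\eqref{eq:est:gZZY}), and it makes the speed condition~\eqref{ass:speed} trivial. The only place the stronger $o(\sigma_n^2)$ moment bound was needed is inside Lemma~\ref{lem:props-Zn}, through the Taylor remainder~\eqref{est:l:props-Zn:C2Mp+2}. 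Hence it suffices to re-establish the three statements~\eqref{lem:props-Zn:Ass39}--\eqref{lem:props-Zn:quadratic-variation} directly, replacing the Taylor expansion in $\eta$ by an exact expansion in $p_n$. The engine is the identity $\nabla_w\hat L(w,\eta)=(1+\eta)\odot\nabla L(w\odot(1+\eta))$ together with the observation that, writing $D:=\{i:\eta_i=-1\}$ for the random set of dropped coordinates, one has $1+\eta_i=0$ on $D$ and $1+\eta_i=1/(1-p_n)=1+\sigma_n^2$ off $D$, so that $w\odot(1+\eta)=\tfrac1{1-p_n}(w\odot\One_{D^{c}})$.

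First I would dispatch~\eqref{lem:props-Zn:Ass39} and~\eqref{lem:props-Zn:quadratic-variation} using boundedness. On the enlarged compact set $\hat K$ of Lemma~\ref{lem:props-Zn} the increment $\Delta\sfZ_n(w,\eta)$ is bounded by $C\alpha_n$ uniformly, giving~\eqref{lem:props-Zn:Ass39} with no recourse to~\eqref{ass:speed}. For the quadratic variation I would condition on $\{D=\varnothing\}$: when nothing is dropped every $\eta_i=\sigma_n^2$, so by smoothness of $L$ the increment is $\Delta\sfZ_n=O(\alpha_n\sigma_n^2)$; when $D\neq\varnothing$, an event of probability $1-(1-p_n)^m=O(\sigma_n^2)$, one keeps only the crude bound $\Delta\sfZ_n=O(\alpha_n)$. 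Therefore $\Expectation_\eta\sup_{w\in\hat K}(\Delta\sfZ_n)^2=O(\alpha_n^2\sigma_n^4)+O(\sigma_n^2)\cdot O(\alpha_n^2)=O(\alpha_n^2\sigma_n^2)=o(\alpha_n\sigma_n^2)$, while the $\Delta\sfF_n$ contribution is handled by~\eqref{lem:props-Zn:Fn-Reg}.

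The heart of the matter is~\eqref{lem:props-Zn:Fn-Reg}, where I would compute $\Delta\sfF_n(w)=\Expectation_\eta[\Delta\sfZ_n(w,\eta)]$ exactly by conditioning on $|D|$. The no-drop configuration, combined with the baseline $\nabla L(w)$, contributes $\alpha_n[\nabla L(w)-(1-p_n)^{m-1}\nabla L(w/(1-p_n))]$, whose expansion in $p_n$ produces the local terms $(\nabla^2 L(w)\,w)$ and $-(m-1)\nabla L(w)$ at order $p_n$; the single-drop configurations $D=\{k\}$ contribute, in the $i$-th component, the non-local terms $\sum_{k\neq i}\partial_i L(w\odot\invunit_k)$, again at order $p_n$; and configurations with $|D|\geq 2$ carry probability $O(p_n^2)=O(\sigma_n^4)$ and so are $o(\alpha_n\sigma_n^2)$. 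A direct computation of $\nabla\Reg$ for the $\Reg$ in~\eqref{eqdef:Reg-BDropConnect} reproduces exactly these order-$p_n$ coefficients; since $p_n=\sigma_n^2+O(\sigma_n^4)$ this yields $\Delta\sfF_n(w)=-\alpha_n\sigma_n^2\nabla\Reg(w)+o(\alpha_n\sigma_n^2)$, uniformly on $\hat K$ by the $C^4$-regularity of $L$. That regularity also makes $\nabla\Reg$ locally Lipschitz, which secures uniqueness of the constrained gradient flow and hence the full Katzenberger convergence of Definition~\ref{def:Katzenberger-convergence}.

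The main obstacle is exactly the breakdown of~\eqref{est:l:props-Zn:C2Mp+2}: the atom of mass $p_n$ at $\eta_i=-1$ forces the moments to be merely $O(\sigma_n^2)$, so the Taylor remainder is of the same order $\sigma_n^2$ as the drift and cannot be discarded --- and, worse, the naive Hessian regularizer $\tfrac12\Delta_\eta\hat L(w,0)$ is the wrong limit, blind to genuine drop events. The real work is therefore the bookkeeping of the $p_n$-expansion: isolating the $O(p_n)$ single-drop contributions, which assemble into the finite differences $L(w\odot\invunit_j)-L(w)$, while controlling the multi-drop remainder uniformly in $w$. Once this is in place, nothing else in the proof of Theorem~\ref{th:main} has to change.
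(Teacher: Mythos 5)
Your proposal is correct and follows essentially the same route as the paper's proof: both reduce the problem to re-establishing the three properties~(\ref{lem:props-Zn:Ass39})--(\ref{lem:props-Zn:quadratic-variation}) of Lemma~\ref{lem:props-Zn} and then invoke the proof of Theorem~\ref{th:main} unchanged, with the key step being the exact expansion of $\Delta\sfF_n$ over drop configurations (no-drop, single-drop, multi-drop, the last carrying probability $O(p_n^2)$), which assembles into $-\alpha_n\sigma_n^2\nabla\Reg$ for the non-local regularizer~\eqref{eqdef:Reg-BDropConnect}. The conditioning on the no-drop event for the quadratic-variation bound is likewise identical to the paper's argument.
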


\begin{remark}[Structure of the Bernoulli DropConnect regularizer]
\label{rem:Structure-BDO-Reg}
{\newcommand{\mej}{\mathrm e_j}
The functional form of $\Reg$ in~\eqref{eqdef:Reg-BDropConnect} can be interpreted as a version of the Gaussian regularizer~\eqref{eqdef:Reg-DropConnect} with the local derivative replaced by a non-local one. 
Indeed, writing $\invunit_j = 1-\mej$ we Taylor-develop $L$ as
\begin{align*}
L(w\odot \invunit_j) &= L(w\odot (1-\mej))\\
&= L(w) - \partial L(w)[\mej\odot w] + \frac12 \partial^2 L(w) [\mej \odot w, \mej \odot w] + O(\lvert \mej\odot w\rvert^3).
\end{align*}
If we pretend for the moment that $\mej$ is `small', and discard the final term, then
\begin{align*}
\sum_{j=1}^m \bra[\big]{L(w\odot \invunit_j)-L(w)}
&\approx \sum_{j=1}^m \pra[\Big]{-\partial L(w)[\mej\odot w]
+ \frac12 \partial^2 L(w) [\mej \odot w, \mej \odot w]}\\
& = - \partial L(w) [w] + \frac12 \sum_{j=1}^m w_j^2 \partial^2_{w_jw_j} L(w).
\end{align*}
In this approximation, therefore, the Bernoulli regularizer~\eqref{eqdef:Reg-BDropConnect} reduces to the Gaussian regularizer~\eqref{eqdef:Reg-DropConnect}.
}
\end{remark}

\begin{proof}
As described above, we re-prove the assertions of Lemma~\ref{lem:props-Zn}, i.e.\ properties~(\ref{lem:props-Zn:Ass39}--\ref{lem:props-Zn:quadratic-variation}),  for this case. Given those assertions the proof of Theorem~\ref{th:main} applies to this situation. 

We first estimate $\Delta\sfZ_n(w,\eta) = \alpha_n \nabla L(w) - \alpha_n \nabla_w \bra[\big]{L(w\odot(1+\eta))}$ for this setup. Let $\tilde K\supset \hat K$ be a compact set that is large enough to contain $w(1+\eta)$ for all $\eta$ and all $w\in \hat K$. Using the boundedness of derivatives of $\hat L$ on $\tilde K$, we then have for all $\eta$ and all $w\in \hat K$ 
\[
|\Delta \sfZ_n(w,\eta)|
\leq
\alpha_n \bra*{1 + \frac 1{1-p_n}}\|\nabla L\|_{L^\infty(\tilde K)} = O(\alpha_n),
\]
which proves~\eqref{lem:props-Zn:Ass39}. To prove the estimate on $|\Delta \sfZ_n|^2$ in~\eqref{lem:props-Zn:quadratic-variation} we note that on the event $E$ that all coordinates of $\eta$ are non-zero, which has probability $(1-p_n)^m = 1-O(\sigma_n^2)$, we have the additional estimate
\[
\sup_{w\in K}|\Delta \sfZ_n(w,\eta)|
\leq 
\alpha_n \|\nabla^2 L\|_{L^\infty(\tilde K)} \underbrace{\frac{p_n}{1-p_n}}_{\sigma_n^2}\sup_{w\in K}|w|
= O(\alpha_n\sigma_n^2).
\]
It follows that for all $w\in \hat K$, 
\[
\Expectation_\eta \sup_{w\in \hat K}|\Delta \sfZ_n(w,\eta)|^2 
\leq O(\alpha_n^2 \sigma_n^4)\Prob(E) + O(\alpha_n^2)  \Prob(E^c)
= o(\alpha_n \sigma_n^2),
\]
implying the first estimate in~\eqref{lem:props-Zn:quadratic-variation}.

We next estimate $\Delta \sfF_n$ as defined in~\eqref{eq:def:DeltaFn}.
The expectation over the set of independent Bernoulli random variables can be expressed in terms of $\{0,1\}^m$-valued outcomes $b$ in the form
\begin{align*}
\Delta \sfF_n(w) &= \Expectation_\eta \Delta\sfZ_n(w,\eta)
= \alpha_n \nabla L(w) - \alpha_n \Expectation_\eta \nabla_w \bra[\big]{L(w\odot (1+\eta))}\\
&= \alpha_n \nabla L(w) - \alpha_n \sum_{b\in \{0,1\}^m} p_n^{m-|b|}(1-p_n)^{|b|-1}\,b\odot \nabla L\bra*{\frac1{1-p_n} w\odot b}.
% &= \alpha_n \nabla L(w) - \alpha_n 
\end{align*}
We estimate the term inside the sum according to the number of zero coordinates in $b$:
\begin{multline*}
b\odot (\nabla L)\biggl(\frac1{1-p_n} w\odot b\biggr) 
= \begin{cases}
\nabla L(w) + \frac{p_n}{1-p_n} \partial \nabla L(w)[w]  + O(p_n^2) & \text{if }|b|=m,\\
\invunit_j \odot (\nabla L) (w\odot \invunit_j) + O(p_n) & \text{if }|b|=m-1, \  b = \invunit_j,\\
O(1) &\text{if }|b|\leq m-2,
\end{cases}
\end{multline*}
with the $O$ symbols being uniform in $w\in \hat K$.

Using the expression for the gradient of the regularizer~\eqref{eqdef:Reg-BDropConnect},
\[
\nabla \Reg(w) = \partial\nabla L(w)[w] - (m-1)\nabla L(w) + \sum_{j=1}^m \invunit_j\odot (\nabla L)(w\odot \invunit_j).	
\]
we then  split the sum over $b$ into three parts and estimate accordingly
\[
\Delta\sfF_n(w) + \alpha_n\sigma_n^2 \nabla \Reg(w) = \mathrm I+\mathrm{II}+\mathrm{III}, 	
\]
with 
\begin{align*}
\frac1{\alpha_n}\mathrm{I} &= \nabla L(w) - (1-p_n)^{m-1} \nabla L(w) - {p_n}(1-p_n)^{m-2} \partial \nabla L(w)[w]  + O(p_n^2)\\
& \qquad {}
+ \sigma_n^2 \partial\nabla L(w) [w] - \sigma_n^2 (m-1)\nabla L(w),\\
\frac1{\alpha_n}\mathrm{II} &= \sum_{j=1}^m \pra[\Big]{
	-p_n(1-p_n)^{m-2}\, \invunit_j \odot (\nabla L)(w\odot \invunit_j)
	+ \sigma_n^2\, \invunit_j \odot (\nabla L)(w\odot \invunit_j)} + O(p_n^2),\\
\frac1{\alpha_n}\mathrm{ III} &= \sum_{|b|\leq m-2} p_n^{m-|b|}(1-p_n)^{|b|-1}b\odot \nabla L\bra*{\frac1{1-p_n} w\odot b}.
\end{align*}
We estimate the terms one by one. For $\mathrm I$ we find
\begin{align*}
\frac1{\alpha_n}\mathrm{ I} & = \!\pra[\Big]{1 - \bigl(1-(m-1)p_n +O(p_n^2)\bigr) - \sigma_n^2 (m-1)} \nabla L(w)
+ \bra[\big]{-p_n(1-p_n)^{m-2} + \sigma_n^2 }\partial\nabla L(w)[w]\\
&= O(p_n^2) = O(\sigma_n^4), \qquad \text{uniformly in $w\in \hat K$}.
\end{align*}
For $\mathrm{II}$ we write
\begin{align*}
\frac1{\alpha_n}\mathrm{II} &= \sum_{j=1}^m \pra[\Big]{
-p_n(1-p_n)^{m-2}
+ \sigma_n^2}\, \invunit_j \odot (\nabla L)(w\odot \invunit_j) + O(p_n^2) = O(p_n^2) = O(\sigma_n^4), 
\end{align*}
and for the third term we immediately find $\alpha_n^{-1} \mathrm{III }= O(p_n^2)= O(\sigma_n^4)$. Combining all these estimates we conclude that
\[
\sup_{w\in K}\abs[\big]{\Delta\sfF_n(w) + \alpha_n\sigma_n^2 \nabla \Reg(w)} = O(\alpha_n \sigma_n^4), 
\]
thereby establishing~\eqref{lem:props-Zn:Fn-Reg}.

Finally, to prove~the second estimate in~$\eqref{lem:props-Zn:quadratic-variation}_2$ we write
\[
|\Delta \sfF_n(w)|
\leq |\Delta\sfF_n(w) + \alpha_n\sigma_n^2 \nabla \Reg(w)| + |\alpha_n\sigma_n^2 \nabla \Reg(w)|, 
\]
and the estimate~$\eqref{lem:props-Zn:quadratic-variation}_2$ follows from~\eqref{lem:props-Zn:Fn-Reg} and the regularity of $\Reg$.
\end{proof}

\subsubsection{Stochastic Gradient Langevin Descent}
\label{sec:examples:SGLD}

Stochastic Gradient Langevin Descent~\cite{GelfandMitter1991,RaginskyRakhlinTelgarsky17,mei2018mean1} is a form of gradient descent in which at each iteration a centered Gaussian perturbation is added to the gradient. In our notation this corresponds to 
\begin{equation}
	\label{eqdef:Lhat-SGLD}
\hat L(w,\eta) := L(w) + \frac12 w^\top \eta,
\qquad \text{with } \eta\sim \calN(0,\sigma^2 I_m) \text{ i.i.d.}.
\end{equation}
This structure is of the degenerate form~\eqref{eq:Ldeg} and therefore is covered by Section~\ref{ss:degenerate-case}. This results in the following Corollary. 

\begin{corollary}[Convergence for Stochastic Gradient Langevin Descent] $ $\\
Let $L\in C^4(\R^m)$ satisfy Assumption~\ref{ass:LossMfd}. Define $\hat L$ by~\eqref{eqdef:Lhat-SGLD}, and let $W_n$ be the process characterized in~\eqref{eq:def:W}, where the $\eta_{k,i}$ are i.i.d.\ centered normal random variables with variance $\sigma_n^2$. Assume that ${W}_n(0) \Rightarrow W_0 \in U_{\Gamma}$ for some locally attractive neighbourhood $U_\Gamma$ of $\Gamma$. Finally, let $\alpha_n \to0$ and $ \sigma_n \to \sigma_0\geq 0$.

Then for any compact set $K \subset U_\Gamma$, the process $W_n$ converges to $Y$ in the Katzenberger sense (see Definition~\ref{def:Katzenberger-convergence}), where $Y$ solves the SDE 
\begin{equation}
	\label{eq:constr-SDE-SGLD}
\dx  Y(t) = P_{T\Gamma} \dx b(t) + \frac12 (\grad^2L( Y))^\dagger \partial^2 \nabla L( Y) [P_{T\Gamma}] \, dt 
- \frac12 P_{T\Gamma} \nabla \log \lvert\grad^2 L( Y)\rvert_+\, \dx t,
\end{equation}
constrained to $Y(t)\in \Gamma$, with $Y(0) = \Phi(W_0)$. (Recall that $\lvert A\rvert_+$ is the product of the non-zero eigenvalues of $A$, and~$\dagger$ indicates the Moore-Penrose pseudoinverse).
\end{corollary}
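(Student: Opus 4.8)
The plan is to treat this as a direct corollary of the degenerate-case Theorem~\ref{th:main:deg2}, whose only real input is the identification of the coefficients $f$, $H$ and the quadratic-variation matrix $\varSigma$. First I would rewrite the SGLD loss~\eqref{eqdef:Lhat-SGLD} in the degenerate quadratic form~\eqref{eq:Ldeg}: since $\hat L(w,\eta) = L(w) + \tfrac12 w^\top\eta$ is affine in $\eta$, it has $f(w) = \tfrac12 w$, $H\equiv 0$ and $g\equiv 0$. The conditions of Definition~\ref{def:deg:noiseLoss} are then immediate---the diagonal of $H$ vanishes (trivially, since $H\equiv0$), $g(0)=0$, and $f,H,g$ are smooth---so that $\nabla_w\Delta_\eta\hat L\equiv 0$ and Theorem~\ref{th:main} gives a trivial evolution, placing us in the degenerate regime at time scale $\alpha_n^2\sigma_n^2$.

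Next I would check the remaining hypotheses of Theorem~\ref{th:main:deg2}. Smoothness of $L,\hat L$ and Assumption~\ref{ass:LossMfd} are assumed, and the noise is centred Gaussian with $\sigma_n\to\sigma_0$. The only hypothesis needing a separate argument is the speed condition~\eqref{ass:speed:deg1}, $\alpha_n\sup_{k\leq 1/\alpha_n^2\sigma_n^2}|\eta_k^n|^2\Rightarrow0$. This follows from a Gaussian maximal estimate of the same type as the one invoked for DropConnect (Lemma~\ref{l:supgaus}): the maximum of $N_n\sim 1/\alpha_n^2\sigma_n^2$ terms $|\eta_k^n|^2$ is of order $\sigma_n^2\log N_n$, and $\alpha_n\sigma_n^2\log(1/\alpha_n^2\sigma_n^2)\to0$ for both $\sigma_0>0$ and $\sigma_0=0$.

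The substance is then the algebraic reduction of the abstract limit~\eqref{eq:th:main2:constrained-SDE}. Because $H\equiv0$, the $\nabla_w H{:}\dx B$ Brownian term drops out and the $\sigma_0^2\,\partial_{w_i}H{:}\partial_{w_j}H$ contribution to $\varSigma$ disappears; in particular the limit is independent of $\sigma_0$, as one expects of noise entering linearly in $\eta$. With $f(w)=\tfrac12 w$ one has $\partial_{w_i}f = \tfrac12\mathrm e_i$, so $\varSigma(w)_{ij} = \partial_{w_i}f\cdot\partial_{w_j}f = \tfrac14\delta_{ij}$, i.e.\ $\varSigma = \tfrac14 I_m$ is a scalar multiple of the identity. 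This is the decisive simplification: it lets me evaluate the drift $\tfrac12\partial^2\Phi(\hat Y)[\varSigma]$ through the special-case identity~\eqref{char:Phi''-Sigma=id} of Lemma~\ref{t:derivatives-of-Phi}, namely $\partial^2\Phi[I_m] = (\nabla^2 L)^\dagger\partial^2(\nabla L)[P_{T\Gamma}] - P_{T\Gamma}\nabla\log\lvert\nabla^2 L\rvert_+$, which produces exactly the two drift terms of~\eqref{eq:constr-SDE-SGLD}, while the noise collapses to $\tfrac12 P_{T\Gamma}\dx b$. The remaining numerical constants ($\tfrac12$ on the Brownian term and $\tfrac18$ on $\partial^2\Phi[I_m]$) are reconciled with the normalized statement~\eqref{eq:constr-SDE-SGLD} by the (immaterial) constant in the time scale---a deterministic change $t\mapsto 4t$ multiplies the diffusion coefficient by $2$ and the drift by $4$---equivalently by reading the linear coefficient in~\eqref{eqdef:Lhat-SGLD} as $w$. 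Finally, Katzenberger convergence in the sense of Definition~\ref{def:Katzenberger-convergence} follows because the smooth coefficients of~\eqref{eq:constr-SDE-SGLD} are locally Lipschitz on the compact set $K$, yielding weak uniqueness as in the remark after Theorem~\ref{th:main:deg2}.

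I expect the genuinely delicate point to be the verification of the speed condition~\eqref{ass:speed:deg1} for Gaussian tails over the long horizon $N_n\sim1/\alpha_n^2\sigma_n^2$; everything else is bookkeeping. The one subtlety worth signposting in the write-up is the constant-factor and time-normalization matching described above, which is easy to get wrong unless $\varSigma$, the diffusion coefficient, and the time scale are all tracked consistently.
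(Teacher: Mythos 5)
Your proposal follows the paper's proof essentially verbatim: apply Theorem~\ref{th:main:deg2} with $f(w)=w/2$, $H\equiv 0$, verify the speed condition~\eqref{ass:speed:deg1} via the Gaussian maximal estimate of Lemma~\ref{l:supgaus}, and reduce the limiting SDE~\eqref{eq:th:main2:constrained-SDE} to~\eqref{eq:constr-SDE-SGLD} by means of the identity~\eqref{char:Phi''-Sigma=id}. You are in fact more explicit than the paper on one point: the paper simply asserts that the limit reduces to $\dx Y = P_{T\Gamma}\dx b + \tfrac12 \partial^2\Phi(Y)[I_m]\dx t$, silently absorbing the constants that a literal application of the theorem produces (diffusion coefficient $\tfrac12$ and drift $\tfrac18\partial^2\Phi[I_m]$, since $\varSigma=\tfrac14 I_m$), whereas you reconcile them by the immaterial deterministic time change $t\mapsto 4t$.
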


\begin{remark}
	As observed in~\cite{LiWangArora22}, the first two terms in~\eqref{eq:constr-SDE-SGLD} form a geometrically intrinsic Brownian motion on the manifold $\Gamma$, while the final term acts as a drift parallel to~$\Gamma$.
\end{remark}

\begin{proof}
The assumptions of Theorem~\ref{th:main:deg2} are satisfied, with $f(w) = w/2$ and $H(w) =0$ in~\eqref{eq:Ldeg} and using Lemma~\ref{l:supgaus} to show~\eqref{ass:speed:deg1}. We conclude that the evolution converges to the constrained SDE~\eqref{eq:th:main2:constrained-SDE}, which reduces in this case to 
\begin{equation*}
\dx  Y(t) = P_{T\Gamma} \dx b(t) + \frac12 \partial^2 \Phi(Y(t))[I_m]\dx t,
\qquad \text{and}\qquad Y(t) \in \Gamma,
\end{equation*}
where $b$ is an $m$-dimensional Brownian motion and $I_m$ is the identity matrix.
The formulation~\eqref{eq:constr-SDE-SGLD} follows from applying the characterization~\eqref{char:Phi''-Sigma=id}.
\end{proof}

\begin{remark}[SGLD and generalisation]
Raginsky, Rakhlin, and Telgarsky~\cite{RaginskyRakhlinTelgarsky17} show quantitative generalisation bounds on SGLD, using optimal transport and Logarithmic Sobolev inequalities. Their result is meaningful in a limit of very many data points, and therefore focuses on an underparameterized setting, rather than the overparameterized setting as in this paper. 
\end{remark}

\subsubsection{`Anti-correlated perturbed gradient descent'}
\label{sss:anti-correlated-PGD}
Orvieto and co-workers~\cite{OrvietoKerstingProskeBachLucchi22} gave the name `anti-correlated gradient descent' to the simple scheme used in the introduction, 
\[
\hat L(w,\eta) := L(w+\eta).	
\]
As discussed there, this is an example of non-degenerate noisy gradient descent, with convergence to the constrained gradient flow driven by 
\[
\Reg(w) := \frac12 \Delta_w L(w).	
\]
Orvieto \emph{et al.} also directly investigate the generalisation properties of this type of noise injection. 

\subsection{Examples where \texorpdfstring{$\eta$}{the injected noise} is indexed by the sample}
\label{ss:eta-by-sample}

We now consider the supervised-learning context, in which the loss $L$ is an empirical average of a local loss function $\ell\geq 0$ over a set $\{(x_i, y_i)_{1\leq i\leq N}\}$ of `training data points',
\begin{equation}
	\label{eq:loss}
	L(w) = \frac1N\sum_{i=1}^N\ell(w, x_i, y_i).
\end{equation}
In this section we consider noise variables $\eta$ indexed by the same indices as the data points $(x_i,y_i)$, i.e.\ $\eta$ is a random  element of $\R^N$.

\subsubsection{Minibatching}
\label{sec:examples:mini}

One of the most widely used noisy gradient descent algorithms is the one with noise induced by the random sampling of the data points, usually referred simply as \emph{stochastic gradient descent}~(SGD). In the standard setting the whole dataset is randomly split into disjoint `minibatches' $\{B_k\}$ of the same size $m$, and at every iteration the gradient is calculated only for samples in one minibatch $B_k$:
\[
w_{k+1} = w_k - \alpha \nabla_w \Big(\frac1m\sum_{(x_i, y_i) \in B_k}\ell(w, x_i, y_i)\Big).
\]
We slightly modify this formulation by introducing i.i.d.\ random variables $\eta_{k,i}$ at every iteration, with distribution
\[
\eta_{k,i} = \begin{cases}
        -1  &\text{w.p. } 1-\dfrac{m}{N},\\
        \dfrac{N-m}{m}  &\text{w.p. }  \dfrac{m}{N},\\
\end{cases}
\]
and the corresponding noisy loss function $\hat{L}$:
\[
\hat{L}(w, \eta) =\frac{1}{N} \sum_{i=1}^N(1+\eta_i) \ell(w, x_i, y_i).
\]
One can see that indeed $\hat{L}(w, 0) = L(w)$. Moreover, if we write the corresponding `minibatch' as 
\[
B_k = \left\{(x_i, y_i): \eta_{k,i} = \frac{N-m}{m}\right\},
\]
then the noisy gradient descent~\eqref{eq:NGD-intro} becomes
\[
w_{k+1} = w_k - \alpha \nabla_w \hat{L}(w, \eta_k)=  w_k - \alpha \nabla_w \Big(\frac1m\sum_{(x_i, y_i) \in B_k}\ell(w, x_i, y_i)\Big).
\]
This version of SGD can be interpreted as deciding at every iteration which data point to include, independently for each data point and independently for each iteration. In contrast to the standard SGD algorithm, in such a setting the minibatch size is not fixed, and during every epoch the same data point might not occur or can occur multiple times. The parameter $m$ also is not a deterministic minibatch size but the expectation of the minibatch size. 

We show that for this modification of minibatch SGD the limiting dynamics are trivial on the time scale $1/\alpha_n^2$ for any fixed parameter $m$, by applying Theorem~\ref{th:main:deg2}. Similarly, one can show that the joint limit $m_n \to N$, $\alpha_n \to 0$ also results in a trivial process by applying Theorem~\ref{th:main}. Thus, we argue that minibatch noise affect the training dynamics around the zero-loss manifold only in a weak way, and additional forms of noise injection might be required to ensure good generalization properties on this time scale.
\begin{corollary}
\label{crl:mini}
Let $\hat{L}, \eta_{k, i}$ be as defined above. Assume that $\ell$ is $C^3$ in $w$ and that $ L$ satisfies Assumption~\ref{ass:LossMfd}. Let $\hat{W}_n(0) \Rightarrow  W_0 \in U_{\Gamma}$ for some locally attractive neighbourhood $U_\Gamma$ of $\Gamma$. Let $\alpha_n\to0$ and $\sigma_n\to \sigma_0\geq0$. Then $\hat W_n$ given by~\eqref{eq:def:hatW} converges in the sense of Katzenberger (see Definition~\ref{def:Katzenberger-convergence}) to the trivial process
\[
\hat{Y}(t) = \Phi( W_0) \qquad \text{for all }t\geq0.
\]
\end{corollary}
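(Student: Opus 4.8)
The plan is to apply the degenerate-case result, Theorem~\ref{th:main:deg2}, and then to verify that its limiting equation~\eqref{eq:th:main2:constrained-SDE} collapses to the constant process. First I would match $\hat L$ to the degenerate quadratic form~\eqref{eq:Ldeg}: since $\hat L(w,\eta) = L(w) + \frac1N\sum_{i=1}^N \eta_i\,\ell(w,x_i,y_i)$ is affine in $\eta$, the natural choice is $f_i(w) := \frac1N\ell(w,x_i,y_i)$ together with $H\equiv 0$ and $g\equiv 0$, for which the requirements $H_{ii}=0$ and $g(0)=0$ hold trivially. To invoke the theorem I would check its remaining hypotheses for this noise: the coordinates $\eta_{k,i}$ are centered with the (constant) variance $(N-m)/m$, so $\sigma_n\equiv\sigma_0$; and because $\eta_{k,i}$ takes only the two bounded values $-1$ and $(N-m)/m$, the vectors $\eta_k^n$ are uniformly bounded, whence the speed condition~\eqref{ass:speed:deg1} reduces to $\alpha_n\sup_k|\eta_k^n|^2 \lesssim \alpha_n \to 0$ and holds automatically.

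The crux of the argument is a structural observation that forces the limit to be trivial: the coefficient $\nabla_w f$ vanishes on $\Gamma$. Indeed, for $w\in\Gamma$ we have $\frac1N\sum_i\ell(w,x_i,y_i) = L(w) = 0$, and since each $\ell\geq 0$ this forces $\ell(w,x_i,y_i)=0$ for every $i$. Thus every $w\in\Gamma$ is a global minimizer over $\R^m$ of the nonnegative $C^3$ function $\ell(\,\cdot\,,x_i,y_i)$, so first-order optimality yields $\nabla_w\ell(w,x_i,y_i)=0$, and hence $\nabla_w f_i(w)=0$, for all $i$ and all $w\in\Gamma$.

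Granting this, I would read off that every term on the right-hand side of~\eqref{eq:th:main2:constrained-SDE} vanishes. Since $\hat Y(s)\in\Gamma$ for almost every $s$, the martingale coefficient $P_{T_{\hat Y(s)}\Gamma}\nabla_w f(\hat Y(s))$ is zero; the second stochastic integral is absent because $H\equiv 0$; and the diffusion matrix $\varSigma(w)_{ij}=\partial_{w_i}f(w)\cdot\partial_{w_j}f(w)$ likewise vanishes on $\Gamma$, so the It\=o-correction drift $\tfrac12\,\partial^2\Phi(\hat Y)[\varSigma(\hat Y(s))]$ is zero as well. The limiting equation therefore reduces to $\hat Y(t)=\hat Y(0)$, and using $\hat Y_n(0)=\Phi(\hat W_n(0))\Rightarrow\Phi(W_0)$ gives $\hat Y(t)=\Phi(W_0)$ for all $t$. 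As the constant path is trivially the unique solution of this degenerate equation, the convergence upgrades to convergence in the sense of Katzenberger (Definition~\ref{def:Katzenberger-convergence}).

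I do not expect a genuine obstacle; the one point deserving care is the structural step $\nabla_w f|_\Gamma=0$, which essentially uses that $\ell$ is nonnegative, so that each per-sample loss---and not merely their average $L$---is minimized on the zero-loss set. A minor bookkeeping check is that $\varSigma$ is the $m\times m$ matrix pairing the $\R^d$-valued columns $\partial_{w_i}f$ and $\partial_{w_j}f$, but this becomes immaterial once $\nabla_w f$ is known to vanish on $\Gamma$.
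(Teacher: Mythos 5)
Your proposal is correct and follows essentially the same route as the paper's proof: identify $f_i(w)=N^{-1}\ell(w,x_i,y_i)$, $H\equiv 0$ in the degenerate form, apply Theorem~\ref{th:main:deg2}, and observe that $L=0$ on $\Gamma$ together with $\ell\geq 0$ forces $\nabla_w\ell(\cdot,x_i,y_i)=0$ there, so every term in~\eqref{eq:th:main2:constrained-SDE} vanishes. If anything, you are more explicit than the paper on two points it leaves implicit --- the first-order-optimality argument behind $\nabla_w f|_\Gamma=0$ and the verification of the speed condition~\eqref{ass:speed:deg1} via boundedness of the noise --- which is a welcome tightening rather than a deviation.
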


\begin{proof}

The function $\hat{L}(w, \eta) $ is of the form~\eqref{eq:Ldeg} with $f(w)_i = N^{-1}\ell(w,x_i,y_i)$, $i=1,\dots, N$,  and applying  Theorem~\ref{th:main:deg2} we find the limiting dynamics
\begin{equation}
\hat Y(t)=\Phi(\hat W(0)) + \frac1N\int_0^{t \wedge \mu} P_{T\Gamma} \nabla_wf(w) db_s+\frac12\int_0^{t \wedge \mu}  \partial^2 \Phi(\hat Y(s)) [\varSigma(s)] ds,\label{eq:minibatch}
\end{equation}
where
\[
\varSigma(s) = \frac{1}{N^2}\sum_{i=1}^N \nabla_w\ell(\hat Y(s), x_i, y_i) \otimes \nabla_w\ell(\hat Y(s), x_i, y_i) \,.
\]
As $\hat Y(s)\in \Gamma$ a.s., we have 
\[
L(\hat Y(s)) = \frac1N\sum_{i=1}^N\ell(\hat Y(s), x_i, y_i) = 0,
\]
implying that 
\begin{equation}
 \nabla_w\ell(\hat Y(s), x_i, y_i) =0. \label{eq:mb}
\end{equation}
In turn this implies that $f$ and $\varSigma$ vanish on $\Gamma$, resulting in the trivial dynamics $\hat Y(t) = \Phi(\hat W(0))$ for all $t$. 
\end{proof}

\subsubsection{Label Noise}
\label{sec:examples:label}
\emph{Label noise} is the specific case of mean squared error $L$ with noisy perturbation of the labels~$y_i$, which in our setting takes the form
\begin{equation}
\label{eq:labnoise}
\hat{L}(w, \eta) = \frac1N\sum_{i=1}^N(f_w (x_i) - y_i - \eta_i)^2.
\end{equation}
The consequences of label noise for the training iterates were studied by Blanc \emph{et al.}~\cite{BlancGuptaValiantValiant20} and the already-mentioned Li, Wang, and Arora~\cite{LiWangArora22}.

The function $\hat L$ is of the degenerate form~\eqref{eq:Ldeg}, with $f(w)_i = -2(f_w(x_i)-y_i)/N$, $H(w)=0$, and $g(\eta) = N^{-1} \sum_{i=1}^N \eta_i^2$. As a result,  Theorem~\ref{th:main:deg2} provides  the limiting dynamics at rate $1/\alpha_n^2\sigma_n^2$, both for $\sigma_n\to0$ and $\sigma_n\to \sigma_0>0$.

\begin{corollary}[Convergence for Label Noise]
\label{crl:label}
Let $\hat{L}$ be as defined in \eqref{eq:labnoise} for some  family of functions $f_w$ such that $L$ is of class $C^3$ and satisfies Assumption~\ref{ass:LossMfd}. Let $\tilde{W}_n(0) \Rightarrow W_0 \in U_{\Gamma}$ for some locally attractive neighbourhood $U_\Gamma$ of $\Gamma$. Let $\alpha_n\to0$ and $\sigma_n\to \sigma_0\geq0$. 
Let $\eta_{k,i} \sim \rho(\sigma_n)$ i.i.d., where $\rho$ satisfies~\eqref{eq:ass:noise} and~\eqref{ass:speed:deg1}.

Then $\hat W_n$ given by~\eqref{eq:def:hatW} converges in $U_\Gamma$ in the sense of Katzenberger to the constrained gradient flow $Y$ given by
\[
\frac{dY}{dt} =- P_{T\Gamma} \nabla_w\Reg (Y), \qquad Y(t)\in \Gamma,
\qquad \text{and}\qquad 
Y(0) = \Phi(W_0),
\]
with 
\begin{equation}
\label{eqdef:Reg-LabelNoise}
\Reg(w) = \frac 1{2N}  \Delta L(w).
\end{equation}
\end{corollary}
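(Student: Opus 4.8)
The plan is to recognize the label-noise loss \eqref{eq:labnoise} as an instance of the degenerate quadratic form \eqref{eq:Ldeg} and then specialize the general convergence result of Theorem~\ref{th:main:deg2}. Expanding the square gives $\hat L(w,\eta) = L(w) + f(w)\cdot\eta + g(\eta)$ with $f(w)_i = -\tfrac{2}{N}(f_w(x_i)-y_i)$, with the quadratic-in-$\eta$ field $H\equiv 0$, and with $g(\eta)=\tfrac1N\sum_i\eta_i^2$; since there are no cross terms $\eta_i\eta_j$, the zero-diagonal condition on $H$ holds trivially. The hypotheses of Theorem~\ref{th:main:deg2} are then all in place: $L$ satisfies Assumption~\ref{ass:LossMfd}, the loss is $C^3$, and $\rho$ is assumed to satisfy \eqref{eq:ass:noise} and the speed bound \eqref{ass:speed:deg1}. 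Applying the theorem produces the constrained SDE \eqref{eq:th:main2:constrained-SDE}, which here loses its second Brownian term (because $H=0$) and reads $\dx\hat Y = P_{T\Gamma}\nabla f(\hat Y)\cdot\dx b + \tfrac12\partial^2\Phi(\hat Y)[\varSigma(\hat Y)]\dx t$, with $\varSigma(w)_{ij} = \partial_{w_i}f(w)\cdot\partial_{w_j}f(w)$.

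Next I would show that the remaining diffusion term vanishes, so that the limit is in fact deterministic. The key geometric observation is that on $\Gamma$ the martingale integrand is killed by the tangential projection. Writing the residuals $r_i(w):=f_w(x_i)-y_i$, the sum-of-squares structure gives $\Gamma = \{L=0\} = \bigcap_i\{r_i=0\}$, so each $r_i$ vanishes identically on $\Gamma$ and hence $\nabla r_i(w)\perp T_w\Gamma$ for every $w\in\Gamma$. Since $\nabla f(w)_i = -\tfrac2N\nabla r_i(w)$, we get $P_{T\Gamma}\nabla f(w)_i = 0$ on $\Gamma$ for all $i$, so the entire stochastic integral drops out and the limiting evolution reduces to a deterministic ODE driven solely by $\tfrac12\partial^2\Phi[\varSigma]$.

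It then remains to identify this drift with the constrained gradient flow of $\Reg = \tfrac1{2N}\Delta L$. Differentiating $L = \tfrac1N\sum_i r_i^2$ twice and using $r_i=0$ on $\Gamma$ makes the residual-times-Hessian contribution disappear, leaving $\nabla^2 L(w) = \tfrac2N\sum_i\nabla r_i(w)\otimes\nabla r_i(w)$ on $\Gamma$; comparing with $\varSigma(w) = \tfrac4{N^2}\sum_i\nabla r_i(w)\otimes\nabla r_i(w)$ shows $\varSigma = \tfrac2N\nabla^2 L$ on $\Gamma$. Feeding this into the special second-derivative identity \eqref{char:Phi''-Sigma=nabla2L}, namely $\partial^2\Phi[\nabla^2 L] = -\tfrac12 P\nabla\Delta L$, and using the linearity of $\partial^2\Phi$ in its argument, yields $\tfrac12\partial^2\Phi[\varSigma] = -\tfrac1{2N}P_{T\Gamma}\nabla\Delta L = -P_{T\Gamma}\nabla\Reg$, which is exactly the claimed gradient flow with $Y(0)=\Phi(W_0)$. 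Finally, the uniqueness of this Lipschitz gradient flow upgrades the subsequential relative compactness of Theorem~\ref{th:main:deg2} to convergence in the sense of Definition~\ref{def:Katzenberger-convergence}.

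The main obstacle is the pair of geometric reductions on $\Gamma$: establishing rigorously that $\nabla r_i$ lies in the normal space (so the diffusion term vanishes under $P_{T\Gamma}$) and that the residual contribution to $\nabla^2 L$ drops out (so $\varSigma$ collapses onto $\tfrac2N\nabla^2 L$). Both hinge on $\Gamma$ being \emph{exactly} the common zero set of the residuals, combined with the constant-rank and spectral-gap content of Assumption~\ref{ass:LossMfd}; the remainder is linear algebra plus the ready-made characterization \eqref{char:Phi''-Sigma=nabla2L}.
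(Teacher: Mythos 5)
Your proposal is correct and follows essentially the same route as the paper's proof: identify the label-noise loss as the degenerate quadratic form \eqref{eq:Ldeg} with $H\equiv 0$, apply Theorem~\ref{th:main:deg2}, kill the martingale term by orthogonality of $\nabla_w f$ to $T_w\Gamma$, compute $\varSigma = \tfrac2N\nabla^2 L$ on $\Gamma$, and conclude via the identity \eqref{char:Phi''-Sigma=nabla2L}. The only (cosmetic) difference is the justification of the orthogonality step — you argue directly that each residual $r_i$ vanishes identically on $\Gamma$ so $\nabla r_i \perp T_w\Gamma$, whereas the paper notes $\nabla_w f(w)\in\Range(\nabla^2 L(w))$ and uses that this range is orthogonal to the tangent space; these are equivalent one-line arguments for the same fact.
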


Corollary~\ref{crl:label} is effectively the same result as~\cite[Cor.~5.2]{LiWangArora22}; we give the proof both for completeness and to allow us to re-use the arguments in Corollary~\ref{cor:combination-labelnoise-minibatching}. 

\begin{proof}
As remarked above, the function $\hat L$ in~\eqref{eq:labnoise} is of the form~\eqref{eq:Ldeg} with $f(w)_i = -2(f_w(x_i)-y_i)/N$, $H(w)=0$, and $g(\eta) = N^{-1} \sum_{i=1}^N \eta_i^2$. Theorem~\ref{th:main:deg2} then provides Katzenberger convergence to the limiting SDE~\eqref{eq:th:main2:constrained-SDE}.  

To simplify this SDE, note that 
\[
\nabla_w f(w)_i  = -\frac2N\nabla_{w}(f_w(x_i) - y_i - \eta_i) = -\frac2N \nabla_{w}f_w(x_i),
\]
and for $w\in \Gamma$
\begin{align*}
\nabla^2{L}(w) &= \frac2N\nabla_w\sum_{i=1}^N(f_w(x_i) - y_i)\nabla_w f_w(x_i)= \frac2N\sum_{i=1}^N \nabla_{w}f_w(x_i)\nabla_{w}f_w(x_i)^\top .
\end{align*}
It follows that $\nabla_w f(w)\in \Range(\nabla^2{L}(w))$, and therefore $P_{T\Gamma} \nabla_w f(w) = 0$, implying that the noise term in~\eqref{eq:th:main2:constrained-SDE} vanishes. 

We also find that 
\[
\varSigma_{k\ell} := \sum_{i=1}^N \partial _{w_k} f(w)_i \partial_{w_\ell}f(w)_i = \frac4{N^2} \sum_{i=1}^N \partial_{w_k}f_w(x_i)\partial_{w_\ell}f_w(x_i),
\]
implying that $\varSigma = 2\nabla^2L(w) /N$. 
Applying~\eqref{char:Phi''-Sigma=nabla2L} to the final term in~\eqref{eq:th:main2:constrained-SDE} yields the expression~\eqref{eqdef:Reg-LabelNoise}.
\end{proof}

A minor modification of the discussion above shows that combination of minibatching and label noise results in the same limiting dynamics. Consider the following loss that combines minibatching noise variables $\tilde\eta$ and label noise variables $\eta$,
\begin{equation}
\label{eq:label+mini}
\hat{L}(w, \tilde\eta, \eta) =\frac{1}{N} \sum_{i=1}^N(1+\tilde \eta_i)(f_w (x_i) - y_i - \eta_i)^2,
\end{equation}
and for simplicity let both distributions have the same variance $\Var(\eta_{k, i}) = \Var (\tilde \eta_{k, j})= \sigma^2$.  The loss~\eqref{eq:label+mini} has the degenerate form~\eqref{eq:Ldeg},
\begin{align*}
\hat L(w,\eta,\tilde \eta)
&= L(w) + f(w)\cdot \eta + \tilde f(w) \cdot \tilde \eta + H(w): (\eta\oti\tilde \eta) + g(\eta,\tilde \eta),
\end{align*}
with
\begin{multline*}
f(w)_i = -\frac2N (f_w(x_i)-y_i),
\qquad
\tilde f(w)_i = \frac1N (f_w(x_i)-y_i)^2,\\
\quad\text{and}\quad
H(w)_{ij} = -\frac2N (f_w(x_i)-y_i)\delta_{ij}.
\end{multline*}
From Theorem~\ref{th:main:deg2} we obtain the following characterization.

\begin{corollary}[Convergence for combined Label Noise and Minibatching]
	\label{cor:combination-labelnoise-minibatching}
Let $\hat{L}$ be as defined in \eqref{eq:label+mini} for some  family of functions $f_w$ such that $L$ is of class $C^3$ and satisfies Assumption~\ref{ass:LossMfd}. Let $\tilde{W}_n(0) \Rightarrow W_0 \in U_{\Gamma}$ for some locally attractive neighbourhood $U_\Gamma$ of $\Gamma$. Let $\alpha_n\to0$ and $\sigma_n\to \sigma_0\geq0$. 
Let $\eta_{k,i},\tilde \eta_{k,i} \sim \rho(\sigma_n)$ i.i.d., where $\rho$ satisfies~\eqref{eq:ass:noise} and~\eqref{ass:speed:deg1}.

Then $\hat W_n$ given by~\eqref{eq:def:hatW} converges in the sense of Katzenberger to the constrained gradient flow $Y$ given by
\[
\frac{dY}{dt} =- P_{T\Gamma} \nabla_w\Reg (Y), \qquad Y(t)\in \Gamma,
\qquad \text{and}\qquad 
Y(0) = \Phi(W_0),
\]
with 
\begin{equation}
\label{eqdef:Reg-LabelMiniBatchNoise}
\Reg(w) = \frac{\sqrt{1+\sigma_0^2}}{2N} \Delta L(w).
\end{equation}
\end{corollary}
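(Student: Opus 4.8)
The plan is to follow the proof of Corollary~\ref{crl:label} almost verbatim, the only genuinely new ingredient being the single cross term produced by the product of the label and minibatch noises. First I would stack the two noise families into one vector $\zeta = (\eta,\tilde\eta)\in\R^{2N}$ and read off the decomposition already recorded before the corollary: the linear coefficient is the concatenation of $f(w)_i = -\tfrac2N(f_w(x_i)-y_i)$ and $\tilde f(w)_i = \tfrac1N(f_w(x_i)-y_i)^2$, the $w$-dependent quadratic part is the bilinear form $H(w):(\eta\otimes\tilde\eta)$ with $H(w)_{ij} = -\tfrac2N(f_w(x_i)-y_i)\delta_{ij}$, and the remaining terms $\tfrac1N\sum_i\eta_i^2$ and $\tfrac1N\sum_i\tilde\eta_i\eta_i^2$ depend on the noise alone and are absorbed into $g(\zeta)$. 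Assembling $H$ into the symmetric $2N\times2N$ form $\calH$ with zero diagonal blocks, $\hat L$ has exactly the degenerate quadratic structure~\eqref{eq:Ldeg} of Definition~\ref{def:deg:noiseLoss} in the variable $\zeta$; note that $g$ never affects the dynamics because $\nabla_w g\equiv 0$, so the increments stay quadratic in $\zeta$. Since $\eta_{k,i}$ and $\tilde\eta_{k,i}$ are independent, centered and of common variance $\sigma_n^2$, the hypotheses of Theorem~\ref{th:main:deg2} are met, and I would invoke it to obtain Katzenberger convergence to the constrained SDE~\eqref{eq:th:main2:constrained-SDE}.

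The second step is to show that the martingale part of~\eqref{eq:th:main2:constrained-SDE} vanishes on $\Gamma$, by the same mechanism as for label noise. On $\Gamma$ every residual $f_w(x_i)-y_i$ is zero, so $\nabla^2 L = \tfrac2N\sum_i \nabla_w f_w(x_i)\otimes\nabla_w f_w(x_i)$ and each $\nabla_w f_w(x_i)$ lies in $\Range(\nabla^2 L) = (T_y\Gamma)^\perp$. Hence $P_{T\Gamma}\nabla_w f = 0$ and $P_{T\Gamma}\nabla_w \calH = 0$, while $\nabla_w\tilde f$ vanishes identically on $\Gamma$ because it carries a factor $f_w(x_i)-y_i$. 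Both stochastic integrands in~\eqref{eq:th:main2:constrained-SDE} are therefore annihilated by $P_{T\Gamma}$, the stochastic term drops out, and the limit reduces to a deterministic flow driven solely by $\tfrac12\partial^2\Phi(\hat Y)[\varSigma(\hat Y)]$.

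It then remains to evaluate $\varSigma$ on $\Gamma$ and identify the regularizer. The $\tilde f$-block of $\varSigma$ drops out ($\nabla_w\tilde f = 0$ on $\Gamma$), the $f$-block reproduces the label-noise computation $\partial_{w_k}f\cdot\partial_{w_l}f = \tfrac2N(\nabla^2 L)_{kl}$, and the cross-coupling contributes a further multiple of $\sigma_0^2(\nabla^2 L)_{kl}$ after summing the fourth moments $\Expectation[(\eta_i\tilde\eta_i)(\eta_j\tilde\eta_j)]$. Thus $\varSigma = c(\sigma_0)\,\nabla^2 L$ on $\Gamma$ for an explicit constant $c(\sigma_0)$, and feeding this into the drift through the special-value identity~\eqref{char:Phi''-Sigma=nabla2L}, together with $\nabla\Phi = P_{T\Gamma}$ from~\eqref{eq:lem:derivatives-of-Phi:first-derivative}, turns $\tfrac12\partial^2\Phi[\varSigma]$ into $-P_{T\Gamma}\nabla_w\bigl(\tfrac{c}{4}\Delta L\bigr)$. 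This is precisely a constrained gradient flow of $\Reg = \tfrac{c}{4}\Delta L$, and collecting the constants yields the coefficient in~\eqref{eqdef:Reg-LabelMiniBatchNoise}.

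The one place requiring care is this last constant. The $\sigma_0^2$-part of $\varSigma$ comes from the fourth moments of the products $\eta_i\tilde\eta_i$, and one must track both the combinatorial factor relating the stacked form $\calH$ to its $N\times N$ block $H$ and the fact that only the diagonal terms $i=j$ survive, since $\Expectation[\eta_i\tilde\eta_i]=\Expectation\eta_i\,\Expectation\tilde\eta_i=0$ forces every $i\neq j$ contribution to vanish. Conceptually, however, nothing goes beyond Corollary~\ref{crl:label}: the projection argument that removes the noise and the reduction $\varSigma\propto\nabla^2 L$ are identical in spirit, with the minibatch multiplier $(1+\tilde\eta_i)$ entering only through the single additional cross term.
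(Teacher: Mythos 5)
Your proposal follows essentially the same route as the paper's own proof: identify the degenerate quadratic structure~\eqref{eq:Ldeg} with $f$, $\tilde f$, $H$ as recorded before the corollary, invoke Theorem~\ref{th:main:deg2}, kill the martingale term via $\nabla_w f_w(x_i)\in\Range(\nabla^2 L)=(T_y\Gamma)^\perp$ and $\nabla_w\tilde f=0$ on $\Gamma$, reduce $\varSigma$ to a multiple of $\nabla^2 L$, and finish with the identity~\eqref{char:Phi''-Sigma=nabla2L}. Your explicit stacking of $(\eta,\tilde\eta)$ into a single vector in $\R^{2N}$ with the block form $\calH$ is a slightly more careful version of what the paper does implicitly, and your observation that only the $i=j$ cross terms survive (since $\Expectation[\eta_i\tilde\eta_i]=0$) matches the paper's computation.

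The one substantive caveat concerns exactly the constant you defer: carrying out the computation as the paper does, with $\partial_{w_k}H:\partial_{w_\ell}H=\frac{4}{N^2}\sum_i\partial_{w_k}f_w(x_i)\,\partial_{w_\ell}f_w(x_i)$ on $\Gamma$, yields $\varSigma = \frac{2(1+\sigma_0^2)}{N}\nabla^2 L$ and hence $\Reg=\frac{1+\sigma_0^2}{2N}\Delta L$ --- \emph{not} the $\frac{\sqrt{1+\sigma_0^2}}{2N}\Delta L$ appearing in~\eqref{eqdef:Reg-LabelMiniBatchNoise}; moreover, if one instead contracts the stacked $2N\times 2N$ form $\calH$ (which contains $H$ in two off-diagonal blocks), the factor becomes $1+2\sigma_0^2$. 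So your closing assertion that ``collecting the constants yields the coefficient in~\eqref{eqdef:Reg-LabelMiniBatchNoise}'' cannot be verified as written: the statement's square root is inconsistent with both your outline and the paper's own proof, and resolving which normalization of the quadratic form Theorem~\ref{th:main:deg2} intends is precisely the combinatorial bookkeeping you flagged but did not perform.
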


\begin{proof}
Similarly to the cases of Corollaries~\ref{crl:mini} and~\ref{crl:label} we have on $\Gamma$ that 
\begin{align*}
\nabla_w f(w)_i  &= -\frac2N \nabla_{w}f_w(x_i), \\
\nabla_w \tilde f(w)_i &= -\frac2N\nabla_{w}(f_w(x_i) - y_i )^2 =0,
\end{align*}
and 
\[
\nabla_w H(w)_{ij} =- \frac{2}{N}  \nabla_w f_w (x_i) \delta_{ij}, 
\]
and thus by the same orthogonality argument as in Corollary~\ref{crl:label} we conclude that the noise is orthogonal to $T\Gamma$. We apply Theorem~\ref{th:main:deg2} and find the resulting evolution~\eqref{eq:constr-SDE-SGLD}, where the first integral vanishes by orthogonality and we have the expression for $\varSigma$,
\begin{align*}
\varSigma_{k\ell} &= \frac4{N^2} \sum_{i=1}^N \partial_{w_k} f_w(x_i) \partial_{w_\ell} f_w(x_i)
+ \sigma_0^2 \frac4{N^2} \sum_{i=1}^N \partial_{w_k} f_w(x_i) \partial_{w_\ell} f_w(x_i)\\
 &= \frac4{N^2} (1+\sigma_0^2)\sum_{i=1}^N \partial_{w_k} f_w(x_i) \partial_{w_\ell} f_w(x_i).
\end{align*}
Similarly to Corollary~\ref{crl:label} it follows that $\varSigma = 2(1+\sigma_0^2)\Delta L(w)/N$, resulting in the expression~\eqref{eqdef:Reg-LabelMiniBatchNoise}.
\end{proof}

\subsection{Classical Dropout}
\label{ss:examples:ClassicalDropout}

We finally discuss the case of `classical' Dropout as applied in neural networks and other systems. We consider the mean-square error loss for a function $f_w^{\mathrm{drop}}(x,\eta)$ that depends both on the parameter $w$ and the noise variable $\eta$:
\begin{equation}
	\label{eqdef:Lhat-ClassicalDropout}
\hat{L}(w, \eta) = \frac1N\sum_{i=1}^N(f_w^{\mathrm{drop}}(x_i, \eta) - y_i)^2.
\end{equation}

\subsubsection{Overparameterized linear models}
\label{sss:OLM-DO}

The first example of this type is the class of  \emph{overparameterized linear models}, in which $ w  = (u, v)$, $u, v \in \bbR^{\din}$, and $f_ w $ has the following form:
\begin{equation}
\label{eqdef:OLM}
    f_ w (x) = \left<u^{\odot 2} - v^{\odot 2}, x\right>,
	\qquad x\in \R^{\din}.
\end{equation}
Note that the model is linear in $x$ but non-linear in parameters $ w $. We introduce Dropout filters $\eta \in \bbR^{\din}$ and a corresponding function $f^{\mathrm{drop}}_ w $ as
\begin{equation}
\label{eqdef:OLM-drop}
    f_ w ^{\mathrm{drop}}(x, \eta) = \left<u^{\odot 2} - v^{\odot 2}, x\odot(1+\eta)\right>.
\end{equation}

In comparison to DropConnect, the filter $\eta_i$ alters the magnitude of the $i^{\mathrm{th}}$ feature of the input vector, rather than the corresponding feature of $w$. The following corollary is a direct consequence of Theorem~\ref{th:main}.

\begin{corollary}[Convergence for Dropout in overparametrized linear models]
	\label{cor:OLM-DO}$ $\\
Let $f_ w ^{\mathrm{drop}}$ be as defined in~\eqref{eqdef:OLM-drop} and $\hat L$ as in~\eqref{eqdef:Lhat-ClassicalDropout}. Assume that $L(w) := \hat L(w,0)$ satisfies Assumption~\ref{ass:LossMfd}.  Let $\tilde{W}_n(0) \Rightarrow W_0 \in U_{\Gamma}$ for some locally attractive neighbourhood $U_\Gamma$ of $\Gamma$. Let $\eta_{k,i}$ be Bernoulli or Gaussian dropout noisy variables. Let $\alpha_n, \sigma_n \to 0$.

Then $ \tilde W_n$ as defined in~\eqref{eq:def:W} converges to $Y$ in the sense of Katzenberger (see Def.~\ref{def:Katzenberger-convergence}), where $Y$ is the constrained gradient flow defined as
\[
\frac{dY}{dt} =- P_{T\Gamma}\nabla_w\Reg(Y),
\qquad Y(t)\in \Gamma,
\qquad \text{and}\qquad 
Y(0) = \Phi(W_0),
\]
and 
    \begin{equation}
		\label{eqdef:Reg-OLM-DO}
    \Reg(w) := \frac12\Delta_\eta\hat{L}(w, 0) = \frac1{2N}\sum_{j=1}^{\din} (u_j^2 - v_j^2)^2\sum_{i=1}^Nx_{ij}^2, 
    \end{equation}
    and where $x_{ij}$ is the $j^{\mathrm{th}}$ feature of the $i^{\mathrm{th}}$ data point $x_i$. 
\end{corollary}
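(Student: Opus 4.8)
The plan is to apply Theorem~\ref{th:main} directly, just as in the proof of Corollary~\ref{cor:GaussianDropConnect} for Gaussian DropConnect, and to extract the explicit form of the regularizer by computing $\frac12\Delta_\eta\hat L(w,0)$ by hand. First I would verify that the hypotheses of Theorem~\ref{th:main} hold. The loss $L(w)=\hat L(w,0)$ satisfies Assumption~\ref{ass:LossMfd} by hypothesis, and the initialization and step-size/noise conditions are assumed. The only genuinely technical points are the compatibility Assumption~\ref{ass:hatL} and the speed condition~\eqref{ass:speed}. Because $\hat L(w,\eta)=\frac1N\sum_i(\langle u^{\odot2}-v^{\odot2},x_i\odot(1+\eta)\rangle-y_i)^2$ is a polynomial of degree $2$ in $\eta$ (the argument is affine in $\eta$, and the square makes it quadratic), the map $\eta\mapsto\nabla_w\nabla_\eta^2\hat L(w,\eta)$ is in fact a polynomial of controlled degree in $\eta$, so the Lipschitz-type bounds~\eqref{ass:compat-L-Lhat-difference-in-w}--\eqref{ass:compat-L-Lhat-difference-in-etas} hold on any compact $w$-set with $C_2=0$ and a suitable $p$. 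The moment bounds~\eqref{eq:ass:moment-C2=0} then hold for both the Gaussian filters (Remark~\ref{remark:noise}) and the Bernoulli filters (all higher moments $O(\sigma^2)$, exactly as noted for Bernoulli DropConnect). For the Gaussian case the speed condition~\eqref{ass:speed} follows from Lemma~\ref{l:supgaus} as in Corollary~\ref{cor:GaussianDropConnect}; for the bounded Bernoulli filters it is immediate since $|\eta^n_k|$ is bounded and $\alpha_n\to0$.

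Next I would identify the regularizer. Theorem~\ref{th:main} gives $\Reg(w)=\frac12\Delta_\eta\hat L(w,0)=\frac12\sum_{j=1}^{\din}\partial^2_{\eta_j\eta_j}\hat L(w,\eta)\big|_{\eta=0}$. Writing $a_j:=u_j^2-v_j^2$, each summand of $\hat L$ is $(\sum_j a_j x_{ij}(1+\eta_j)-y_i)^2$. Differentiating twice in a single $\eta_j$ kills the cross terms and the linear dependence, leaving $\partial^2_{\eta_j\eta_j}$ of the quadratic, namely $2a_j^2x_{ij}^2$, independently of the value of $\eta$. Summing over $i$ and $j$ and multiplying by $\frac12\cdot\frac1N$ yields exactly~\eqref{eqdef:Reg-OLM-DO}. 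I would note that this computation is insensitive to the precise noise distribution (it uses only $\E\eta=0$, $\Var\eta=\sigma^2$), so the same $\Reg$ arises for both Bernoulli and Gaussian filters, which is why the two cases can be treated together.

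The main obstacle, such as it is, lies in the Bernoulli case: the Bernoulli filters here are of the same delicate type as in Proposition~\ref{cor:BernoulliDropConnect}, where the higher moments scale as $\sigma^2$ rather than $o(\sigma^2)$, formally violating~\eqref{eq:ass:moment-C2>0}. The redeeming feature for overparameterized linear models is that $f^{\mathrm{drop}}_w$ is \emph{linear} in the filter $\eta$, so $\hat L$ is genuinely a polynomial of degree $2$ in $\eta$ with \emph{no} higher-order remainder; consequently the Taylor error term $R_n$ in~\eqref{eq:def:TaylorRemainder} vanishes identically and one lands in the $C_2=0$ branch of Assumption~\ref{ass:hatL}, where $\M_k=O(\sigma^2)$ suffices. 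Thus, unlike general DropConnect, there is no need to re-prove Lemma~\ref{lem:props-Zn} from scratch; the exact-polynomial structure means Theorem~\ref{th:main} applies off the shelf for both noise models. I would make this exact-quadratic observation explicit, since it is precisely what lets the same argument cover the Bernoulli filters that fail the generic hypotheses. The conclusion of Corollary~\ref{cor:OLM-DO} is then a direct transcription of the conclusion of Theorem~\ref{th:main} with the computed $\Reg$.
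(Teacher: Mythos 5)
Your approach coincides with the paper's: the paper gives no separate proof of Corollary~\ref{cor:OLM-DO}, declaring it a direct consequence of Theorem~\ref{th:main}, and your verification---$\hat L$ is an exact quadratic in $\eta$, hence $\nabla_w\nabla_\eta^2\hat L$ is constant in $\eta$, hence $C_2=0$ in Assumption~\ref{ass:hatL}, so the weaker moment bound~\eqref{eq:ass:moment-C2=0} admits both Gaussian and Bernoulli filters, with~\eqref{ass:speed} supplied by Lemma~\ref{l:supgaus} resp.\ boundedness---is exactly how the paper treats the structurally identical shallow-network case in Corollary~\ref{crl:shallow}. Your observation that the Taylor remainder~\eqref{eq:def:TaylorRemainder} vanishes identically, so that one lands in the $C_2=0$ branch and need not re-prove Lemma~\ref{lem:props-Zn} as was done for Bernoulli DropConnect, is precisely the right key point.

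There is, however, one concrete flaw: your computation, carried out correctly, does \emph{not} ``yield exactly''~\eqref{eqdef:Reg-OLM-DO}. Writing $a_j=u_j^2-v_j^2$, one has $\partial^2_{\eta_j\eta_j}\hat L(w,\eta)=\frac2N\sum_{i=1}^N a_j^2x_{ij}^2$ for every $\eta$, hence
\[
\frac12\Delta_\eta\hat L(w,0)=\frac1N\sum_{j=1}^{\din}a_j^2\sum_{i=1}^Nx_{ij}^2,
\]
which is \emph{twice} the right-hand side of~\eqref{eqdef:Reg-OLM-DO}. The identical computation in the shallow-network case produces the prefactor $\frac1N$, and that is what the paper states in~\eqref{eqdef:Reg-ShNN-DO}; so the $\frac1{2N}$ in~\eqref{eqdef:Reg-OLM-DO} is an internal inconsistency (typo) of the statement itself, not a defect of your method. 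Since Theorem~\ref{th:main} delivers the constrained gradient flow driven by $\frac12\Delta_\eta\hat L(\cdot,0)$, the correct limiting regularizer carries the prefactor $\frac1N$; the constant is not cosmetic, as it rescales the velocity of the limiting flow (though not its trajectories or minimizers). A careful write-up should flag this discrepancy and identify which side of~\eqref{eqdef:Reg-OLM-DO} is operative, rather than assert that the two agree.
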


\begin{remark}
The expression~\eqref{eqdef:Reg-OLM-DO} coincides with a weighted $L^2$-regularization term in linear models---treating $u^{\odot2}-v^{\odot2}$ as a single linear parameter---in which the weights are chosen equal to the average amplitude of the corresponding feature $\frac1N\sum_{i=1}^Nx_{ij}^2$. 
\end{remark}

\begin{remark}%[Satisfying the manifold Assumption~\ref{ass:LossMfd}]
In~\cite[Lemma 6.2, Lemma 6.3]{LiWangArora22} sufficient conditions are given for overparametrized linear models to satisfy the manifold Assumption~\ref{ass:LossMfd}.
\end{remark}

\begin{remark}[Two Laplacians as regularizer]
Both for Dropout and for label noise the regularizer is a Laplacian, but one is with respect to $\eta$ and the other with respect to $w$; thus these two forms of regularization may lead to different types of behaviour. We illustrate this on the example of overparameterized linear models, where $f_w(x) = \left<u^{\odot 2} - v^{\odot 2}, x\right>$  as in~\eqref{eqdef:OLM}.

With Dropout noise we obtain the regularizer~\eqref{eqdef:Reg-OLM-DO}. 
With label noise the regularizer is given by~\eqref{eqdef:Reg-LabelNoise} instead, and we now make this label-noise regularizer more explicit. On the zero loss manifold we have $\left<u^{\odot 2} - v^{\odot 2}, x_i\right> = y_i$ for all $i$ and thus
\begin{align}
	\notag
\frac1{2N} \Delta_w L &= \frac1{2N^2}\Delta_u\sum_{i=1}^N\left(\left<u^{\odot 2} - v^{\odot 2}, x_i\right> - y_i \right)^2 + \frac1{2N^2}\Delta_v\sum_{i=1}^N\left(\left<u^{\odot 2} - v^{\odot 2}, x_i\right> - y_i \right)^2 \\
&=\frac1{2N^2}\sum_{i=1}^N \sum_{j=1}^{\din}4u_j^2 x_{ij}^2 + \frac1{2N^2}\sum_{i=1}^N \sum_{j=1}^{\din}4v_j^2 x_{ij}^2 = \frac2{N^2}\sum_{i=1}^N \sum_{j=1}^{\din}(u_j^2 +v_j^2) x_{ij}^2. 
\label{eq:Laplacian-w-LN}
\end{align}
Note how the Dropout regularizer~\eqref{eqdef:Reg-OLM-DO}  regularizes the difference $(u_j^2 - v_j^2)^2$, and the label noise regularizer above penalizes  both $u$ and $v$ separately. Also note how the two regularizers differ in their scaling in $N$, with label noise leading to an additional slow-down factor~$1/N$.
\end{remark}

\subsubsection{Feedforward neural networks}
\label{sss:FFNN}
We say that $f_ w : \bbR^{\din} \to \bbR$ is a $p$-layered feedforward neural network if it is a composition of $p$ blocks in which  each block consists of a linear layer and a pointwise nonlinearity. Each block is a mapping $\phi^k: y^k \to y^{k+1}\  (\bbR^{d_k} \to \bbR^{d_{k+1}})$ of the form:
\begin{align*}
    z^{k+1} &= W^{k+1}y^k + b^{k+1}, \\
y^{k+1} &= s(z^{k+1}),
\end{align*}
where $W^{k+1} \in \bbR^{d_{k+1}\times d_k}$, $b^{k+1} \in \bbR^{d_{k+1}}$, the input dimension is $d_0 = \din$ and the output dimension $d_p = 1$. The map $f_ w (x)$ then takes the form:
\[
f_ w (x) = \phi^p(\phi^{p-1} (\cdots \phi^1(x))).
\]

We introduce dropout filters $\eta^k$  that  perturb the input features at the $k$-th layer. The resulting maps $\phi_d^k: (\eta^k, y^k) \to y^{k+1} \ (\bbR^{2d_k} \to \bbR^{d_{k+1}})$ then are
\begin{align*}
    z^{k+1} &= W^{k+1}(y^k \odot(1+\eta^k))+ b^{k+1}, \\
y^{k+1} &= s(z^{k+1}),
\end{align*}
and the corresponding $f_ w ^{\mathrm{drop}}$ is given by
\begin{equation}
\label{eqdef:deep-ReLU-NN}
f_ w ^{\mathrm{drop}}(x, \eta) = \phi_d^p(\eta^p, \phi_d^{p-1} (\eta^{p-1} \cdots \phi_d^1(\eta^1, x))),
\end{equation}
Here we distinguish the same two choices as in Section~\ref{ss:DropConnect}, namely Bernoulli and Gaussian filters. 

In the Bernoulli case $\eta_{k,i}$ are i.i.d.\@ Bernoulli random variables for some $0<p<1$:
\[
\eta_{k,i} = \begin{cases}
        -1, & \text{ w.p. } p\\
        \frac{p}{1-p}  &\text{ w.p. }  1-p.\\
        \end{cases}
\]
In the Gaussian case $\eta_{k, i}$ are i.i.d.\@ normal random variables,
\[
\eta_{k,i} \sim \calN(0, \sigma^2).
\]
With Gaussian filters, no input is ever ignored, as $\bbP(\eta_{k, i} =-1)=0$. Note that Gaussian noise also allows to change sign of some inputs. 

First consider feedforward neural networks with a single hidden layer:
\begin{equation}
\label{eqdef:shallow-ReLU-NN}
    f_ w ^{\mathrm{drop}}(x, \eta) = \sum_{j=1}^{n}a_j(1+\eta_j)s(b_j^\top  x),
\end{equation}
where dropout is only applied to the output layer.
As an example we consider the \emph{smooth rectified linear unit} as activation function, 
\begin{equation}
	\label{eqdef:smooLU}
s(x) =\begin{cases}\begin{array}{lr}
       0,  \qquad \ \ \ \  x\leq 0 \\
        xe^{-1/x}, \quad x> 0.\\
        \end{array}\end{cases}.
\end{equation}

\begin{corollary}
\label{crl:shallow}
Let $f_ w ^{\mathrm{drop}}$ as defined in~\eqref{eqdef:shallow-ReLU-NN} with rectified smooth activation function~\eqref{eqdef:smooLU}.
Assume that $L(w) := \hat L(w,0)$ satisfies Assumption~\ref{ass:LossMfd}. 
Let $\tilde{W}_n(0) \Rightarrow \tilde{W}(0) \in U_{\Gamma}$ for some locally attractive neighbourhood $U_\Gamma$ of $\Gamma$. Let $\eta_{k,i}$ be Bernoulli or Gaussian Dropout noisy variables as defined above. Let $\alpha_n, \sigma_n \to 0$. 

Then $ \tilde W_n$ as defined in~\eqref{eq:def:W} converges to $Y$ in the sense of Katzenberger, where $Y$ is the constrained gradient flow defined as
\begin{equation}
	\label{eqdef:constr-GF-ShNN-DO}
\frac{dY}{dt} =- P_{T\Gamma}\nabla_w\Reg(Y), \qquad Y(t)\in \Gamma,
\qquad \text{and}\qquad 
Y(0) = \Phi(W_0),
\end{equation}
with
\begin{equation}
	\label{eqdef:Reg-ShNN-DO}
    \Reg(w) = \frac12 \Delta_\eta\hat{L}( w , 0) = \frac1N\sum_{j=1}^{n} \sum_{i=1}^N a_j^2\, s(b_j^\top x_i)^2.
\end{equation}
\end{corollary}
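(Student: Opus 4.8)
The plan is to obtain the result as a direct application of Theorem~\ref{th:main}, after checking its hypotheses, and then to read off $\Reg$ by computing $\tfrac12\Delta_\eta\hat L(\cdot,0)$ explicitly. The key structural observation, which I would make first, is that since the dropout filters act only on the output layer, the network output $f_w^{\mathrm{drop}}(x,\eta)=\sum_{j=1}^n a_j(1+\eta_j)s(b_j^\top x)$ is \emph{affine} in $\eta$; consequently each term $(f_w^{\mathrm{drop}}(x_i,\eta)-y_i)^2$, and hence $\hat L(w,\eta)$ itself, is a quadratic polynomial in $\eta$. This is precisely the feature that distinguishes the present case from the DropConnect analysis of Proposition~\ref{cor:BernoulliDropConnect}, and it will allow both Bernoulli and Gaussian filters to be treated uniformly.

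Next I would verify the regularity and compatibility required by Definition~\ref{def:noiseLoss} and Assumption~\ref{ass:hatL}. Because $s$ in~\eqref{eqdef:smooLU} is $C^\infty$ (all one-sided derivatives at $0$ vanish), the map $(w,\eta)\mapsto\hat L(w,\eta)$ is $C^\infty$, in particular $C^3$, is nonnegative, and satisfies $\hat L(w,0)=L(w)$. Since $\hat L$ is quadratic in $\eta$, the Hessian $\nabla_\eta^2\hat L(w,\eta)$ is independent of $\eta$, so the left-hand side of~\eqref{ass:compat-L-Lhat-difference-in-etas} vanishes identically and Assumption~\ref{ass:hatL} holds with $C_2=0$; moreover~\eqref{ass:compat-L-Lhat-difference-in-w} then reduces to local Lipschitz continuity of $w\mapsto\nabla_w\nabla_\eta^2\hat L(w,0)$, which is immediate from $s\in C^\infty$ on the compact range of the pre-activations $b_j^\top x_i$, for any exponent $p\ge1$. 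In the $C_2=0$ regime the relevant moment condition is~\eqref{eq:ass:moment-C2=0}, namely $\M_k(\sigma)=O(\sigma^2)$, and this holds for both noise laws: for Gaussian filters $\M_k\sim\sigma^k=O(\sigma^2)$ for $k\ge3$ as $\sigma\to0$, while for Bernoulli filters a short computation gives $\M_k=q+q^k/(1-q)^{k-1}$, which is $O(\sigma^2)$ since $q\asymp\sigma^2$, where $q$ is the drop probability and $\sigma^2=q/(1-q)$. It is exactly this $O(\sigma^2)$ (rather than $o(\sigma^2)$) bound that lets both cases be handled by a single application of Theorem~\ref{th:main}, without the bespoke re-proof of Lemma~\ref{lem:props-Zn} that DropConnect required. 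Finally, the speed condition~\eqref{ass:speed} is trivial for Bernoulli filters, for which $|\eta_k^n|^2\le n$ is bounded so that $\alpha_n|\eta_k^n|^{p+2}\to0$, and follows for Gaussian filters from the maximal estimate of Lemma~\ref{l:supgaus}.

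With the hypotheses in place, Theorem~\ref{th:main} yields Katzenberger convergence to the constrained gradient flow~\eqref{eqdef:constr-GF-ShNN-DO} with $\Reg=\tfrac12\Delta_\eta\hat L(\cdot,0)$. To finish I would compute this Laplacian: fixing a data point $x_i$ and writing $g_j:=a_j s(b_j^\top x_i)$, one expands
\[
(f_w^{\mathrm{drop}}(x_i,\eta)-y_i)^2=(f_w(x_i)-y_i)^2+2(f_w(x_i)-y_i)\sum_j\eta_j g_j+\Bigl(\sum_j\eta_j g_j\Bigr)^2 .
\]
Only the purely quadratic last term survives under $\Delta_\eta$, and since $\partial^2_{\eta_k}\bigl(\sum_{j,l}\eta_j\eta_l g_j g_l\bigr)=2g_k^2$, summing over $k$ and over the data points gives $\Delta_\eta\hat L(w,0)=\tfrac2N\sum_{i=1}^N\sum_{j=1}^n a_j^2 s(b_j^\top x_i)^2$, hence the expression~\eqref{eqdef:Reg-ShNN-DO}.

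Rather than a deep obstacle, the work of the argument lies in two elementary but essential points: recognizing that output-layer dropout makes $\hat L$ quadratic in $\eta$, which forces $C_2=0$, and verifying the Bernoulli moment asymptotics $\M_k\asymp\sigma^2$ that place the problem squarely in the $C_2=0$ setting of Assumption~\ref{ass:hatL}. Once these are established, the remainder is a transcription of Theorem~\ref{th:main} together with the short Laplacian computation above.
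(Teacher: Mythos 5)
Your proposal is correct and follows essentially the same route as the paper's proof: verify Assumption~\ref{ass:hatL} with $C_2=0$ (exploiting that output-layer dropout makes $\hat L$ quadratic in $\eta$, so the $O(\sigma^2)$ Bernoulli moment bound suffices), check the speed condition~\eqref{ass:speed} via boundedness for Bernoulli and Lemma~\ref{l:supgaus} for Gaussian, and then apply Theorem~\ref{th:main}. The only cosmetic difference is that the paper reads off $\Reg=\tfrac12\tr H$ from the decomposition $\hat L = L + f\cdot\eta + \tfrac12 H:(\eta\otimes\eta)$, whereas you compute $\tfrac12\Delta_\eta\hat L(\cdot,0)$ by expanding the square directly — the same calculation in different notation.
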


\begin{remark}[Known properties of the zero-loss manifold]
In \cite{cooper2018loss} it is proven that for overparameterized shallow neural networks with the rectified smooth activation function~\eqref{eqdef:smooLU} the zero-loss set of $L(w) := \hat L(w,0)$ is a smooth manifold and $ L$ satisfies Assumption~\ref{ass:LossMfd}. The result also holds for deep feedforward neural networks if the size of the last layer is greater than the size of the training dataset, i.e.\ $d_k > N$.

Similar results are known for the $\ReLU$ activation function $\ReLU(y) = \max(0, y)$~\cite{petzka2020notes, dereich2022minimal}. Even though the $\ReLU$ is not differentiable in $y = 0$, the manifold is locally smooth away from the hyperplanes $Q_i = \{x: w_i^\top x= 0\}$, so that the analysis holds for every set $K$ satisfying $K\cap Q_i = \emptyset$ for every $Q_i$. In~\cite{boursier2022gradient} the authors study the structure of the neighbourhood $U$ and propose an initialization scheme guaranteeing convergence of the gradient flow for shallow $\ReLU$ networks. 
\end{remark}

\begin{proof}
First note that the smoothness of $s$  guarantees the regularity of $\hat{L}$, and since $s$ has sublinear growth, $\hat L$ has at most growth of order $6$ in $(w,\eta)$.

Next, note that we can write $\hat L$ in the form
\[
\hat L(w,\eta) = L(w) + f(w)\cdot \eta + \frac12 H(w):(\eta\oti\eta),	
\]
with 
\[
f(w)_j = -\frac2N a_j \sum_{i=1}^N s(b_j^\top  x_j)y_i
\qquad\text{and}\qquad
H_{jj'} = \frac2N \sum_{i=1}^N a_j s(b_j^\top  x_i) a_{j'} s(b_{j'}^\top  x_i).
\]
This structure is similar to~\eqref{eq:Ldeg}, which we require for the degenerate case, but note that for the degenerate case we require $\tr H=0$, while here $\tr H$ does not vanish. Consequently the evolution takes place at the time scale $1/\alpha_n\sigma_n^2$. 
This function $\hat L$ satisfies conditions~\eqref{ass:compat-L-Lhat-difference-in-w} and~\eqref{ass:compat-L-Lhat-difference-in-etas} with $C_2=0$. 

We now turn to condition~\eqref{eq:ass:moment}. Gaussian noise satisfies this condition as mentioned in Remark~\ref{remark:noise}, and for the $q$-th absolute moment of Bernoulli random variables, where $q \geq 1$, we have 
\[
\M_q = 1\cdot p + \frac{p^q}{(1-p)^q}\cdot (1-p) = \frac{p^q + p\cdot (1-p)^{q-1}}{(1-p)^{q-1}}   = O(p) = O(\sigma^2)\qquad \text{as }p\to0.
\]
Since $C_2=0$, condition~\eqref{eq:ass:moment} also is satisfied. The condition~\eqref{ass:speed} in Theorem~\ref{th:main} is satisfied for Bernoulli filters by construction and for Gaussian filters by Lemma~\ref{l:supgaus}.

Theorem~\ref{th:main} then yields Katzenberger convergence to the constrained gradient flow~\eqref{eqdef:constr-GF-ShNN-DO}, with $\Reg(w) = \frac12 \Delta_\eta \hat L(w,0) = \frac12 \tr H(w)$ as given in~\eqref{eqdef:Reg-ShNN-DO}.
\end{proof}

For deep neural networks, with multiple hidden layers, and for Gaussian filters Theorem~\ref{th:main} again implies a convergence result. 
Due to the complexity of deep neural networks we do not provide an explicit form of the regularizer in this case.

\begin{corollary}[Convergence for Dropout noise in deep neural networks]
Let $f_ w ^{\mathrm{drop}}$ be as defined in \eqref{eqdef:deep-ReLU-NN} for some $p\geq 2$ with rectified smooth activation function $s$~\eqref{eqdef:smooLU}
Assume that $L(w) := \hat L(w,0)$ satisfies Assumption~\ref{ass:LossMfd}. 
Let $\tilde{W}_n(0) \Rightarrow \tilde{W}(0) \in U_{\Gamma}$ for some locally attractive neighbourhood $U_\Gamma$ of $\Gamma$. Let $\eta_{k,i}$ be Gaussian filter variables as defined above. Let $\alpha_n, \sigma_n \to 0$. 

Then $ W_n$ as defined in~\eqref{eq:def:W} converges to $Y$ in the sense of Katzenberger, where $Y$ is the constrained gradient flow defined as
\[
\frac{dY}{dt} =- \frac12  P_{T\Gamma}\nabla_ w\Delta_\eta\hat L(Y, 0), \qquad Y(t)\in \Gamma,
\qquad \text{and}\qquad 
Y(0) = \Phi(W_0).
\]
\end{corollary}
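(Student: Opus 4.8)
The plan is to apply the non-degenerate convergence result, Theorem~\ref{th:main}, in exactly the same manner as in the proof of Corollary~\ref{crl:shallow}; the only genuinely new work lies in verifying the compatibility Assumption~\ref{ass:hatL} for a network of arbitrary depth $p\geq 2$. First I would record the regularity and growth of $\hat L$. Since the smooth rectified unit $s$ in~\eqref{eqdef:smooLU} is smooth with at most linear growth and with bounded derivatives of all relevant orders, the nested composition~\eqref{eqdef:deep-ReLU-NN} defining $f_w^{\mathrm{drop}}$ is $C^3$ jointly in $(w,\eta)$, and the mean-square loss~\eqref{eqdef:Lhat-ClassicalDropout} inherits this regularity. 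Each of the $p$ blocks contributes a fixed polynomial growth factor in the weights, so $\hat L$ and the derivative $\nabla_w\nabla_\eta^2\hat L$ grow at most polynomially in $(w,\eta)$, with an exponent controlled by the depth. This yields the bounds~\eqref{ass:compat-L-Lhat-difference-in-w} and~\eqref{ass:compat-L-Lhat-difference-in-etas} for a suitable growth exponent in Assumption~\ref{ass:hatL}.

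The essential difference from the shallow case is that for $p\geq 2$ the filters $\eta^1,\dots,\eta^p$ enter $f_w^{\mathrm{drop}}$ through the successive nonlinearities, so $\hat L$ is \emph{not} polynomial in $\eta$ and the mixed derivative $\nabla_w\nabla_\eta^2\hat L(w,\eta)$ depends genuinely on $\eta$. Hence one has $C_2>0$ in~\eqref{ass:compat-L-Lhat-difference-in-etas}, and the relevant moment requirement is the stronger condition~\eqref{eq:ass:moment-C2>0}, namely $\M_k(\sigma)=o(\sigma^2)$. This is precisely where the restriction to Gaussian filters is used: for $\eta_{k,i}\sim\calN(0,\sigma^2)$ one has $\M_k(\sigma)=O(\sigma^k)$, so that $\M_k(\sigma)=o(\sigma^2)$ for every $k\geq 3$ and~\eqref{eq:ass:moment-C2>0} holds. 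By contrast, Bernoulli filters satisfy only $\M_k=O(\sigma^2)$, which would force $C_2=0$ via~\eqref{eq:ass:moment-C2=0} and hence fail at depth $p\geq 2$; this explains why the statement excludes them. The last hypothesis, the speed condition~\eqref{ass:speed}, follows for Gaussian filters from Lemma~\ref{l:supgaus}, exactly as in Corollary~\ref{crl:shallow}.

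With Assumption~\ref{ass:hatL} and~\eqref{ass:speed} in hand, Theorem~\ref{th:main} yields Katzenberger convergence of $W_n$ to the constrained gradient flow driven by $\Reg(w)=\tfrac12\Delta_\eta\hat L(w,0)$, which is the asserted limit. Unlike the overparameterized-linear and single-hidden-layer cases, there is no closed-form simplification: differentiating the depth-$p$ composition twice in $\eta$ at $\eta=0$ produces a sum over layers of contracted products of block Jacobians and Hessians with no cancellation, so the regularizer is left in the implicit form $\tfrac12\Delta_\eta\hat L(\cdot,0)$.

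I expect the main obstacle to be the uniform growth and Lipschitz estimates on $\nabla_w\nabla_\eta^2\hat L$ demanded by~\eqref{ass:compat-L-Lhat-difference-in-w}--\eqref{ass:compat-L-Lhat-difference-in-etas}. For a deep composition these derivatives unfold through the chain rule into many terms mixing the derivatives of $s$ across all $p$ layers, and one must check that each term stays polynomially bounded in $(w,\eta)$ and locally Lipschitz in $w$. An induction on the layer index $k$, controlling the growth of the activations $y^k$, the pre-activations $z^k$, and their $(w,\eta)$-derivatives, should close these estimates at the cost of a depth-dependent polynomial exponent; this is routine in principle but is the most laborious part of the argument.
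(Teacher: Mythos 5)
Your proposal is correct and follows essentially the same route as the paper's own (much terser) proof: apply Theorem~\ref{th:main} exactly as in Corollary~\ref{crl:shallow}, with the only new work being the verification of Assumption~\ref{ass:hatL} for the depth-$p$ composition, handled via the global boundedness of the activation derivatives and polynomial growth of $\nabla_w\nabla_\eta^2\hat L$ in $\eta$. In fact you make explicit what the paper leaves implicit — that for $p\geq 2$ the loss is no longer polynomial in $\eta$, so $C_2>0$ in~\eqref{ass:compat-L-Lhat-difference-in-etas} and the stronger moment condition~\eqref{eq:ass:moment-C2>0} is needed, which is precisely why the statement is restricted to Gaussian filters and excludes Bernoulli ones.
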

\begin{proof}
The proof is analogous to the proof of Corollary \ref{crl:shallow} with the only difference caused by the composition of several layers. Using the global boundedness of the derivatives of activation function $|s'(x)|, |s''(x)| < C$ for all $x\in \bbR$ one can see that for any fixed $w \in K$ the expression
$
|\nabla_w\nabla_\eta^2\hat L(w, \eta_1) - \nabla_w\nabla_\eta^2\hat L(w, \eta_2)| 
$
scales at most polynomially in $\eta_{1,2}$ and thus the assumptions of Theorem~\ref{th:main} are satisfied. 
\end{proof}

\begin{remark}[Combining Dropout with Minibatching]
	\label{rem:combining-Dropout-with-Minibatching}
It is easy to see that the combination of Dropout noise and minibatching results in the same dynamics as Dropout gradient descent without minibatching. Consider the loss
\[
\hat{L}(w, \tilde\eta, \eta) =\frac{1}{N} \sum_{i=1}^N(1+\tilde \eta_i) \hat \ell(w, \eta,  x_i, y_i).
\]
As $\hat{L}$ is linear in the minibatching noise variables $\tilde\eta$, we have $\Delta_{\tilde\eta}\hat{L}(w, \tilde\eta, \eta) = 0$ and thus the regularizer takes the same form as the  regularizer of the corresponding dropout gradient descent $\frac12 \Delta_{\tilde\eta, \eta}\hat{L}(w, \tilde\eta, \eta) = \frac12 \Delta_{\eta}\hat{L}(w, \tilde\eta, \eta) $. The same holds for the DropConnect case.
\end{remark}

\section{Discussion and outlook}\label{s:discussion}

In this paper we define a class of noisy gradient-descent systems and prove their convergence in the limit of small step size and in some cases also small noise. This class of systems unifies a broad collection of existing training algorithms in a common structure, and the convergence theorems thus give a more global understanding of the effect of noise in various overparametrized training situations. 

In this section we add some more remarks and discuss various generalizations. 

\subsection{Constant step sizes}

It is common to change the step size $\alpha$ during training, for instance to generate phases of larger and smaller noise in minibatch SGD. For some such non-constant step-size training algorithms the results of this paper should continue to hold. The essential properties that need to be verified are Assumptions~\ref{ass:Katzenberger:noise}, \ref{ass:Katzenberg:Zn}, and~\ref{assum:4.2}, which are all formulated in terms of the integrator sequences $A_n$ and $\hat A_n$ and therefore are also meaningful for non-constant step sizes. 

\subsection{Correlated noise}

Similarly, we have chosen to make the coordinates $\eta_{k,i}$ of each iterate $\eta_k$ independent from each other, but this again only for simplicity of formulation. 

To give an example of a generalization, consider again the example of Figures~\ref{fig:ex1-intro} and~\ref{fig:ex2-intro} in the introduction. Figure~\ref{fig:example-introduction-correlated-noise} compares the effect of uncorrelated (left) and correlated noise. Here we choose as uncorrelated noise
\[
\eta_{k,i} \sim \calN(0,\sigma^2 I_2)
\]
and as correlated noise
\[
	\eta_k \sim \calN(0,C), \qquad
	C = \frac{\sigma^2}2 \begin{pmatrix} 1&1 \\ 1&1  \end{pmatrix}.
\]
The corresponding regularizers are 
\begin{align*}
\text{uncorrelated: }& \frac12 \Delta_w L(w)\\
\text{correlated: }& \frac12 \nabla^2_w L(w) : C = \frac14 \bra*{\partial_{11} L + 2\partial_{12} L + \partial_{22}L},
\end{align*}
and the  minimizers of these two functions are indicated by green circles. 
\begin{figure}[ht]
	\centering
{\newcommand{\figheight}{6cm}
\includegraphics[height=\figheight]{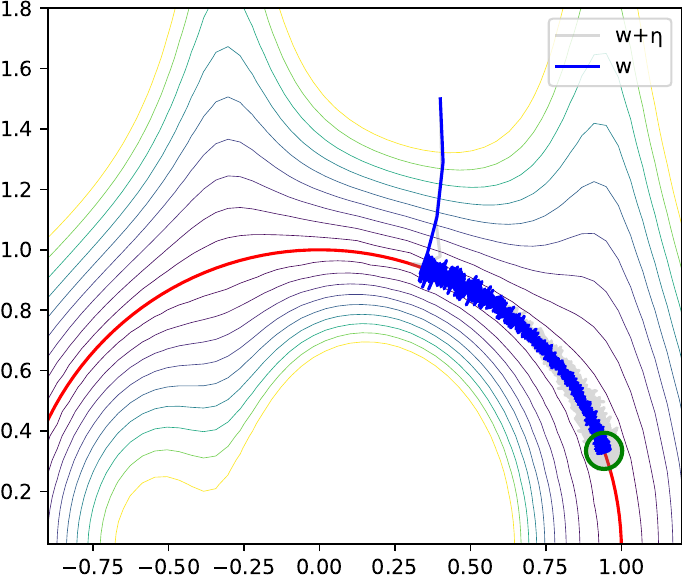}
\qquad
\includegraphics[height=\figheight]{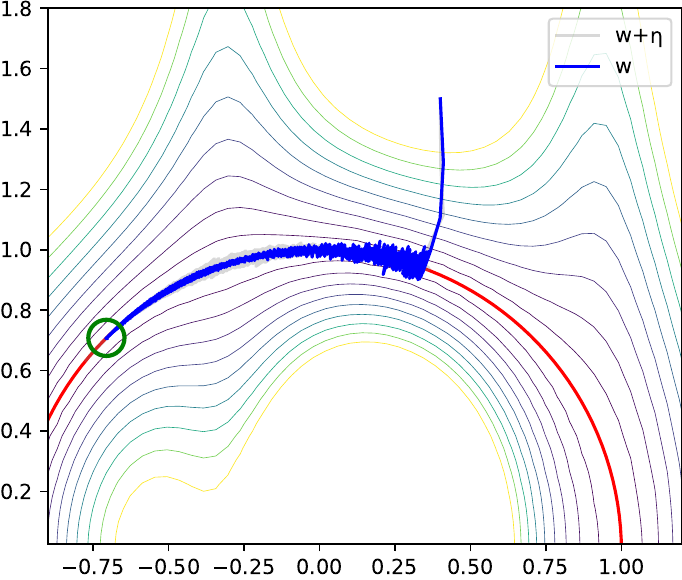}
}
\caption{When the coordinates of the vector $\eta_k$ are correlated, the resulting regularizer $\Reg$ is modified, and the evolution follows a different path along $\Gamma$. In both diagrams each $\eta_k$ is a centered normal two-dimensional random variable with covariance $C$, independent for each $k$; in the left-hand picture $C=\sigma^2 I_2$, implying independence of $\eta_{k,1}$ from $\eta_{k,2}$, while in the right-hand picture $C=\tfrac12{\sigma^2}\begin{psmallmatrix} 1&1\\1&1\end{psmallmatrix}$, implying that $\eta_{k,1}$ and $\eta_{k,2}$ are fully correlated.}
\label{fig:example-introduction-correlated-noise}
\end{figure}

\subsection{Constrained gradient flow and regularisation}

In this paper we have used the term `regularizer' and the notation `$\Reg$' for the function that drives the limiting constrained gradient flow~\eqref{eq:fthA:GF}. This terminology is inspired by its relationship with Tychonov  regularization of inverse problems. 

To explain this, note that solutions of the constrained gradient flow tend to converge to local minimizers of $\Reg$, or if one is `lucky', even to a global minimizer, i.e.\ a solution of the constrained problem
\[
\min_w \bigl\{ \Reg(w): w\in \Gamma\bigr\}.
\]
This situation is reminiscent of regularized inverse problems of the form
\[
\min_u \|Tu-f\|^2 + \lambda R(u),
\]
for which in the `weak-regularisation' limit $\lambda \to0$ the minimizers $u_\lambda$ converge to the solution of the constrained minimization problem
\[
 \min_u 	\bigl\{ R(u): Tu = f\bigr\}.
\]
This is why we call $\Reg$, the driving functional in the constrained gradient flow, the (implicit) regulariser of the noisy gradient descent. 

Theorem~\ref{fth:main1} opens the door to a form of reverse engineering. 
Given a loss $L$, the choice of~$\hat L$ is only limited by the consistency $\hat L(w,0) = L(w)$, and one therefore has a wide freedom to tailor $\hat L$ to have particular properties.
Assuming that one has an understanding of what `good' and `bad' points $w\in \Gamma$ look like, Theorem~\ref{fth:main1} suggests to look for functions $\hat L$ such that $\Delta_\eta \hat L$  gives high value to `bad' points and low value to `good' ones.

Some pointers to how `good' and `bad' points can be recognized or characterized are given by the `robustness' criterion of~\cite{petzka2020notes} (see Remark~\ref{rem:generalisation}) or the discussion in~\cite{OrvietoKerstingProskeBachLucchi22} of the connection with PAC-Bayes bounds. We leave this aspect to future work.

\subsection{Convergence results in Skorokhod spaces and Katzenberger's theorem}
The first (to our knowledge) application of Katzenberger's theorem to machine learning models was in~\cite{LiWangArora22}. In that paper the dynamics the noise is  assumed to be of minibatch-type, namely
\[
dW_n = -\nabla L(W_n)d\hat{A}_n + \sqrt{\varSigma}\sigma_{\eta_k}(W_n),
\] 
where $\eta_k$ is sampled uniformly from the set $B = \{1, 2, \dots, N\}$, and for every $i \in B$, $\sigma_{i}(W_n)$ is a deterministic function and $\bbE\sigma_{\eta_k}(W) =0$. Note that by definition the noise process in such a setting is a martingale. Our definition of noisy gradient-descent systems generalizes this, and allows us to study for instance the effect of dropout noise.

The result of~\cite{LiWangArora22} has also been generalized to SGD with momentum. In~\cite{cowsik2022flatter}, the authors study the interplay between the momentum parameter and the noise distribution. The structure of the noise is the same as in \cite{LiWangArora22}, and one direction of future work is the study of SGD with momentum with general noise.

Another type of convergence results in the sense of processes are so-called mean-field limit results. In contrast to this work, mean-field convergence describes the behaviour of the models with variable number of parameters. For example, for shallow neural networks it studies the limiting training dynamic of models
\[
f_n(W_n, x) = \frac1n\sum_{i=1}^na_i\sigma(b_i^\top x+ c_i),
\]
where $w_i = (a_i, b_i, c_i)$ and $W =(w_1, ... w_n) $. It turns out that under suitable assumptions for $\mu_n$ defined as
\[
\mu_n(s) = \frac1n\sum_{i=1}^n\delta_{w_i(s)},
\]
it holds that $\mu_n \Rightarrow \mu$ in Skorokhod topology on $[0, T]$, where $\mu$ is a solution of a measure-valued evolution equations characterized by the loss function \cite{sirignano2020mean1, rotskoff2018trainability}. Similar results have been derived for deep neural networks \cite{sirignano2022mean, nguyen2019mean}. Note that the mean-field setting does not involve rescaling of time, implying that only the main (the fast) time scale is considered. 
Another direction of future work is to study the slow time scale dynamics in the measure-valued setting. Methods such as those in~\cite{DuongLamaczPeletierSharma2017,DuongLamaczPeletierSchlichtingSharma2018} could be useful for this.

% \acks{Mark Peletier and André Schlichting gratefully acknowledge several interesting discussions with Anton Arnold, Vlado Menkowski, Gijs Peletier, and Frank Redig. Mark Peletier and Anna Shalova are supported by the Dutch Research Council (NWO), in the framework of the program ‘Unraveling Neural Networks with Structure-Preserving Computing’ (file number OCENW.GROOT.2019.044).
% André Schlichting is supported by the Deutsche Forschungsgemeinschaft (DFG, German Research Foundation) under Germany's Excellence Strategy EXC 2044 --390685587, Mathematics M\"unster: Dynamics--Geometry--Structure.}

\appendix

\section{Details of numerical simulations}
\label{app:details-num-sim}

The function $L:\R^2\to\R$ depicted in Figures~\ref{fig:ex1-intro}, \ref{fig:ex2-intro}, and~\ref{fig:nondeg-deg} is
\[
L(w) = \frac{(|w|^2-1)^2}{(|w|^2+1)^2} (1+a\sin(bw_1)), 
\qquad a=0.7, \ b=5.
\]
The step size in those figures is $\alpha=0.3$, and $\eta$ has a centered normal distribution with covariance $\sigma^2 I_2$ with $\sigma=0.03$.

The constrained gradient flow in Figures~\ref{fig:ex2-intro} and~\ref{fig:nondeg-deg} is implemented by parametrizing by $\Gamma$ by polar angle $\theta$ and writing the regularizer as a function of $\theta$.

\section{Auxiliary results}

The following lemma shows that for i.i.d.\ Gaussian noise variables $\eta$ the two conditions~\eqref{ass:speed} and~\eqref{ass:speed:deg1} follow from the assumption $\alpha_n\to0$.

\begin{lemma}[Gaussian filters satisfy the noise-decay condition]
	\label{l:supgaus}
	Let $\alpha_n$ and $\sigma_n$ be positive sequences such that $\alpha_n\to0$ and $\sigma_n\to\sigma_0\geq0$. 
	Let for each $n$,  $Y_k^n$, $k\in \N$,  be i.i.d.\@ centered Gaussian random variables with variance $\sigma_n^2$. Then for any $T \in \bbR_+$, for any $p\geq 1$ the following convergence holds in probability and in distribution:
	\[
	\sup_{k\leq \frac{T}{\alpha_n^2\sigma_n^2}} \alpha_n|Y_k^n|^p \longrightarrow 0
	\qquad\text{as }n\to\infty
	\]
	\end{lemma}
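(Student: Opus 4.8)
The plan is to reduce the distributional claim to convergence in probability, since convergence in probability to the deterministic limit $0$ automatically yields convergence in distribution, and then to control the probability of the complementary event by a union bound together with a Gaussian tail estimate. Writing $N_n := \lfloor T/(\alpha_n^2\sigma_n^2)\rfloor$ for the number of summands and $Y_k^n = \sigma_n Z_k$ with $Z_k$ i.i.d.\ standard normal, I would fix $\e>0$ and bound
\[
\Prob\Bigl[\sup_{k\leq N_n}\alpha_n|Y_k^n|^p > \e\Bigr]
\leq (N_n+1)\,\Prob\bigl[|Z_1| > t_n\bigr],
\qquad
t_n := \frac{(\e/\alpha_n)^{1/p}}{\sigma_n}.
\]
Using the standard tail bound $\Prob[|Z_1|>t]\leq 2e^{-t^2/2}$ and $N_n+1 \lesssim 1/(\alpha_n^2\sigma_n^2)$, the right-hand side is controlled by
\[
B_n := \frac{C}{\alpha_n^2\sigma_n^2}\,\exp\!\Bigl(-\frac{(\e/\alpha_n)^{2/p}}{2\sigma_n^2}\Bigr)
\]
for a constant $C=C(T)$.

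The remaining task is to show $B_n\to0$, which I would do by passing to logarithms. Setting $u_n := 1/\alpha_n\to\infty$, one has
\[
\log B_n = \log C + 2\log u_n + 2\log\tfrac1{\sigma_n} - \frac{\e^{2/p}}{2}\,\frac{u_n^{2/p}}{\sigma_n^2},
\]
so it suffices to verify that the last (negative) term dominates the two growing positive corrections $2\log u_n$ and $2\log(1/\sigma_n)$. Since $\sigma_n\to\sigma_0$ is bounded, $u_n^{2/p}/\sigma_n^2 \gtrsim u_n^{2/p}$, and because $2/p>0$ the ratio $u_n^{2/p}/\log u_n\to\infty$; hence half of the negative term absorbs $2\log u_n$ for large $n$.

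For the remaining correction $2\log(1/\sigma_n)$ I would split according to the two regimes the statement allows. If $\sigma_0>0$ this term is bounded and there is nothing to do, whereas if $\sigma_0=0$ I use $u_n^{2/p}\geq1$ eventually together with the elementary fact that $s^{-2}/\log(1/s)\to\infty$ as $s\downarrow0$, so that the other half of the negative term absorbs $2\log(1/\sigma_n)$ as well. Combining the two comparisons gives $\log B_n\to-\infty$, hence $B_n\to0$, and this proves convergence in probability, and therefore in distribution. The only genuinely delicate point is the simultaneous degenerate limit $\alpha_n\to0$ and $\sigma_n\to0$, where both the prefactor $1/(\alpha_n^2\sigma_n^2)$ and the exponential rate blow up; the comparison above shows the exponent still wins, because $u_n^{2/p}/\sigma_n^2$ grows like a product of two diverging polynomial factors while the competing corrections are merely logarithmic. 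Everything else is routine.
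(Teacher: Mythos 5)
Your proof is correct and follows essentially the same route as the paper's: a union bound over the $O\bigl(1/(\alpha_n^2\sigma_n^2)\bigr)$ indices, a Gaussian tail estimate, and the observation that the exponential decay beats the polynomially growing prefactor. The only differences are cosmetic — the paper uses the sharper bound $\erfc(x)\le e^{-x^2}/(x\sqrt{\pi})$ and substitutes $\beta_n=\alpha_n^{2/p}$ before invoking exponential-beats-polynomial, while you use the Chernoff bound $2e^{-t^2/2}$ and verify the limit on the logarithmic scale; your explicit case split between $\sigma_0>0$ and $\sigma_0=0$ is, if anything, slightly more careful than the paper's one-line conclusion.
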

	\begin{proof}
	\begin{align*}
	\bbP\biggl(\sup_{k \leq \frac{T}{\alpha_n^2\sigma_n^2} } \alpha_n|Y^n_k|^p> \e\biggr) 
	&\leq \sum_{k \leq \frac{T}{\alpha_n^2\sigma_n^2} }  \bbP\bra*{|Y^n_k|^p> \frac{\e}{\alpha_n}}
	\leq \frac{T}{\alpha_n^2\sigma_n^2}\bbP\bra*{|Y^n_1|> \frac{\e^{1/p}}{\alpha_n^{1/p}}}\\
	&=\frac{T}{2\alpha_n^2\sigma_n}\erfc\left(\frac{\e^{1/p}}{\sigma_n\alpha_n^{1/p}\sqrt 2}\right)\\
	&\leq \frac{\alpha_n^{-2+1/p}\,T}{ \e^{1/p}\sqrt{2\pi}}\exp\bra*{-\frac{\e^{2/p}}{2\sigma_n^2\alpha_n^{2/p}}}\\
	&=
	\frac{\beta_n^{-p+1/2}\,T}{ \e^{1/p}\sqrt{2\pi}}\exp\bra*{-\frac{\e^{2/p}}{2\sigma_n^2\beta_n}},
	\qquad \text{with }\beta_n = \alpha_n^{2/p},
	\end{align*}
	and for every fixed $\e> 0$ this vanishes as $\beta_n\to0$.
	\end{proof}

\ifjmlr
\else
\bibliographystyle{alphainitials}
\fi
\bibliography{bib-Mark-draft4, refs-Anna, refs-Andre}

\end{document}